%% file: main_arxiv.tex
\documentclass{article}

\input{math_commands.tex}

\usepackage{graphicx}
\usepackage[backref=false]{hyperref}
\usepackage{url}
\usepackage{subcaption}
\usepackage{booktabs} 
\usepackage{multirow, makecell}
\usepackage{algorithm,algorithmic}
\usepackage{mathtools}
\usepackage{wrapfig}
\let\algoAND\AND
\let\AND\classAND
\AtBeginEnvironment{algorithmic}{\let\AND\algoAND}
\usepackage{enumitem}
\usepackage{color}
\newcommand{\LineComment}[1]{\(\triangleright\) #1}

\newcommand{\revision}[1]{\textcolor{black}{#1}}

\newcommand{\acronym}{{\sc WinQ}}

\usepackage[accepted]{icml2026}

\usepackage{amsmath}
\usepackage{amssymb}
\usepackage{mathtools}
\usepackage{amsthm}

\usepackage[capitalize,noabbrev]{cleveref}

\usepackage[textsize=tiny]{todonotes}

\icmltitlerunning{WinQ: Accelerating Quantization-Aware Training of Language Models Around Saddle Points}

\begin{document}

\twocolumn[
  \icmltitle{WinQ: Accelerating Quantization-Aware Training of Language Models Around Saddle Points}

  \icmlsetsymbol{equal}{*}

  \begin{icmlauthorlist}
    \icmlauthor{Dongyue Li}{1}
    \icmlauthor{Zechun Liu}{2}
    \icmlauthor{Kai Yi}{2}
    \icmlauthor{Zhenshuo Zhang}{1}
    \icmlauthor{Changsheng Zhao}{2}
    \icmlauthor{Raghuraman Krishnamoorthi}{2}
    \icmlauthor{Harshit Khaitan}{2}
    \icmlauthor{Hongyang R. Zhang}{1}
    \icmlauthor{Steven Li}{2}
  \end{icmlauthorlist}

  \icmlaffiliation{1}{Northeastern University, MA}
  \icmlaffiliation{2}{Meta AI, CA}

  \icmlcorrespondingauthor{Dongyue Li}{li.dongyu@northeastern.edu}
  \icmlcorrespondingauthor{Hongyang Zhang}{ho.zhang@northeastern.edu}  
  \icmlcorrespondingauthor{Steven Li}{stevenlx@meta.com}

  \icmlkeywords{}

  \vskip 0.3in
]

\printAffiliationsAndNotice{}  %

\begin{abstract}
Quantization-aware training (QAT) is widely adopted to quantize language models by training full-precision weights using gradients from the quantized model. The main bottleneck is its slow convergence and early performance plateau, particularly below 4-bit-widths. While this problem has been observed in prior work, its precise cause remains unclear. In this paper, we analyze the convergence of QAT by estimating the spectrum of the loss-surface Hessians. We find that the weights converge to flat regions around saddle points, where a large fraction of the Hessian eigenvalues are both positive and negative. During training, an increasing fraction of Hessian eigenvalues concentrates around zero, whose magnitude decreases. At lower bit-widths, the magnitude of eigenvalues in the Hessian spectrum is significantly smaller. To mitigate these issues, we propose an algorithm called \acronym{} to accelerate QAT, which involves: (1) periodically resetting weights to the linear interpolation of full-precision and quantized weights, reducing the distance to the quantization grid and increasing eigenvalue magnitude, and (2) computing gradients of noise-injected weights to regularize the Hessian. Extensive experiments show that \acronym{} accelerates QAT by up to {4}$\times$ across various quantization methods and models. Under the same training cost, \acronym{} improves state-of-the-art sub-4-bit quantization by up to {8.8}\%. These results are consistent across 16 settings with different language models, quantization methods, and bit widths.
\end{abstract}

\input{intro}

\input{approach}

\bibliography{bibliography/ref}
\bibliographystyle{icml2026}

\newpage
\appendix
\onecolumn
\input{appendix}

\end{document}

%% file: math_commands.tex
\usepackage{amsmath,amsfonts,bm,amsthm,mathrsfs}

\newtheorem{theorem}{Theorem}[section]

\newtheorem{proposition}[theorem]{Proposition}

\theoremstyle{definition}
\newtheorem{definition}[theorem]{Definition}

\def\eqref#1{equation~\ref{#1}}

\def\1{\bm{1}}

\DeclareMathAlphabet{\mathsfit}{\encodingdefault}{\sfdefault}{m}{sl}
\SetMathAlphabet{\mathsfit}{bold}{\encodingdefault}{\sfdefault}{bx}{n}

\newcommand{\round}[1]{\ensuremath{\left \lfloor#1 \right \rceil }}

\newcommand{\norm}[1]{\|#1\|}
\newcommand{\bignorm}[1]{\left\| #1 \right\|}

\DeclareMathOperator{\id}{Id}

\newcommand{\real}{\mathbb{R}}

%% file: intro.tex
\section{Introduction}

Quantization is a key technique for efficiently deploying large language models by representing weights and activations with lower precision. Since directly converting a trained model to lower precision often increases loss, quantization-aware training (QAT) \citep{jacob2018quantization} is introduced to mitigate quantization-induced loss. This approach works by training the full-precision model weights while simultaneously applying quantization to the weights and, in some cases, to activations. This training approach has significantly advanced the language model performance in extremely low precision. Recent developments in quantization-aware training have enabled language models to operate at sub-4-bit precision with performance close to full precision \citep{gholami2022survey}, whereas post-training quantization methods remain significantly less accurate below 4-bit precision \citep{ashkboos2024quarot}.

A major bottleneck of quantization-aware training is slow convergence and early plateauing of the test loss, which leads to substantial computational cost and limits further improvement. For example, starting from a pretrained model, the cost of 4-bit quantized training remains around 10\% of full-precision pretraining \citep{liu2025paretoq}. Training with lower-bit-width quantization, such as 1-bit, is even more difficult, as convergence is remarkably slower \citep{wang2023bitnet}. It typically requires training on at least 10B tokens to achieve 1-bit precision, even for language models with 100M parameters \citep{panferov2025quest}. Further increasing the training cost yields limited gains according to the scaling law \citep{kumarscaling}. This work aims to understand the reasons behind slow convergence and to propose methods to accelerate quantized training for language models. Our objective is to reduce the cost required to reach a given training loss while achieving a better test loss under the same budget.

\begin{figure*}[t!]
  \centering
  \begin{subfigure}[b]{0.48\textwidth}
    \begin{subfigure}[b]{0.495\textwidth}
    \includegraphics[width=\textwidth]{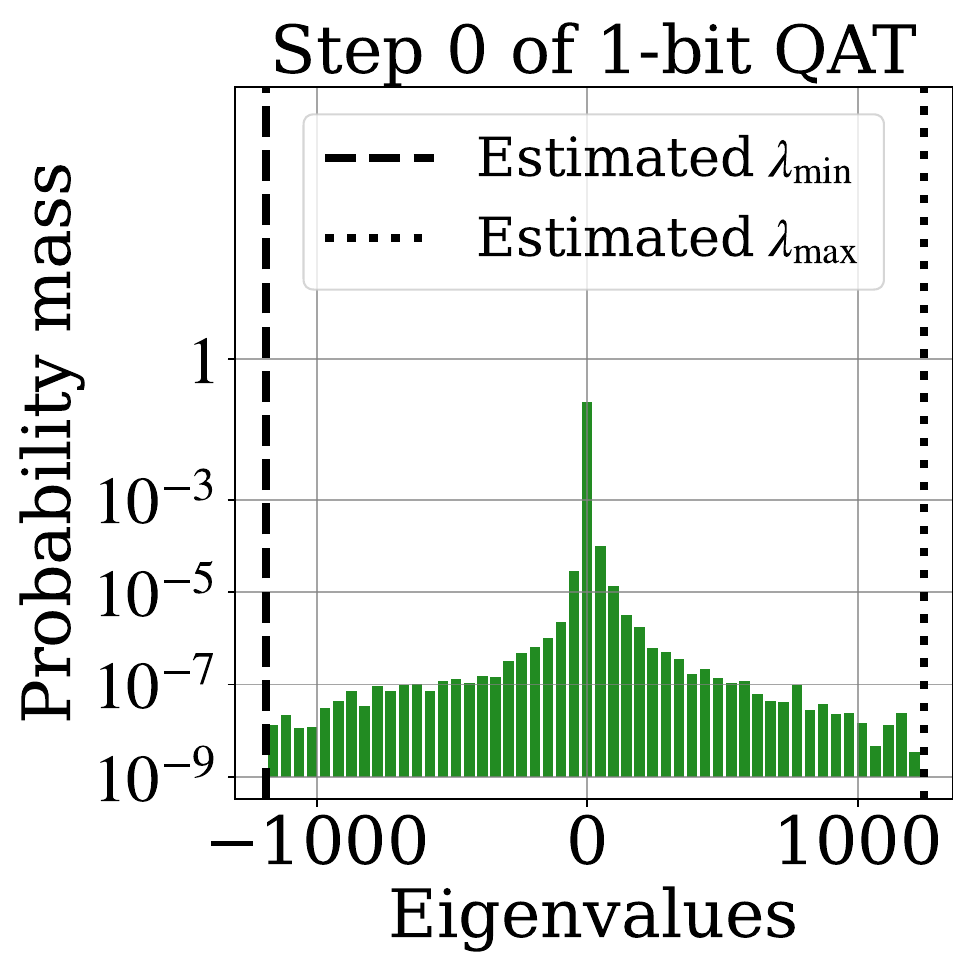}
      \end{subfigure}
      \hfill
      \begin{subfigure}[b]{0.495\textwidth}
        \includegraphics[width=\textwidth]{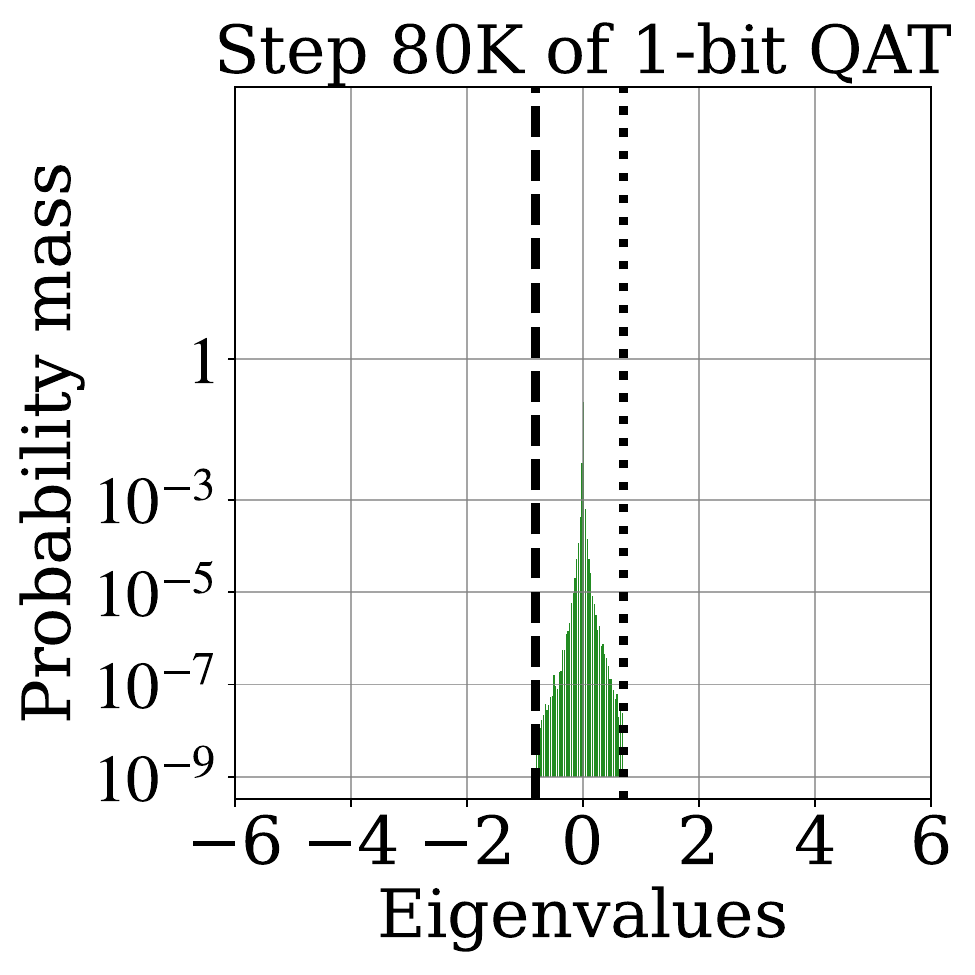}
      \end{subfigure}
    \subcaption{The estimated empirical eigenvalue distribution of the loss Hessian at step 0 and 80K in 1-bit QAT }\label{fig_hessian_fig1}
  \end{subfigure}
  \hfill
  \begin{subfigure}[b]{0.24\textwidth}
    \includegraphics[width=\textwidth]{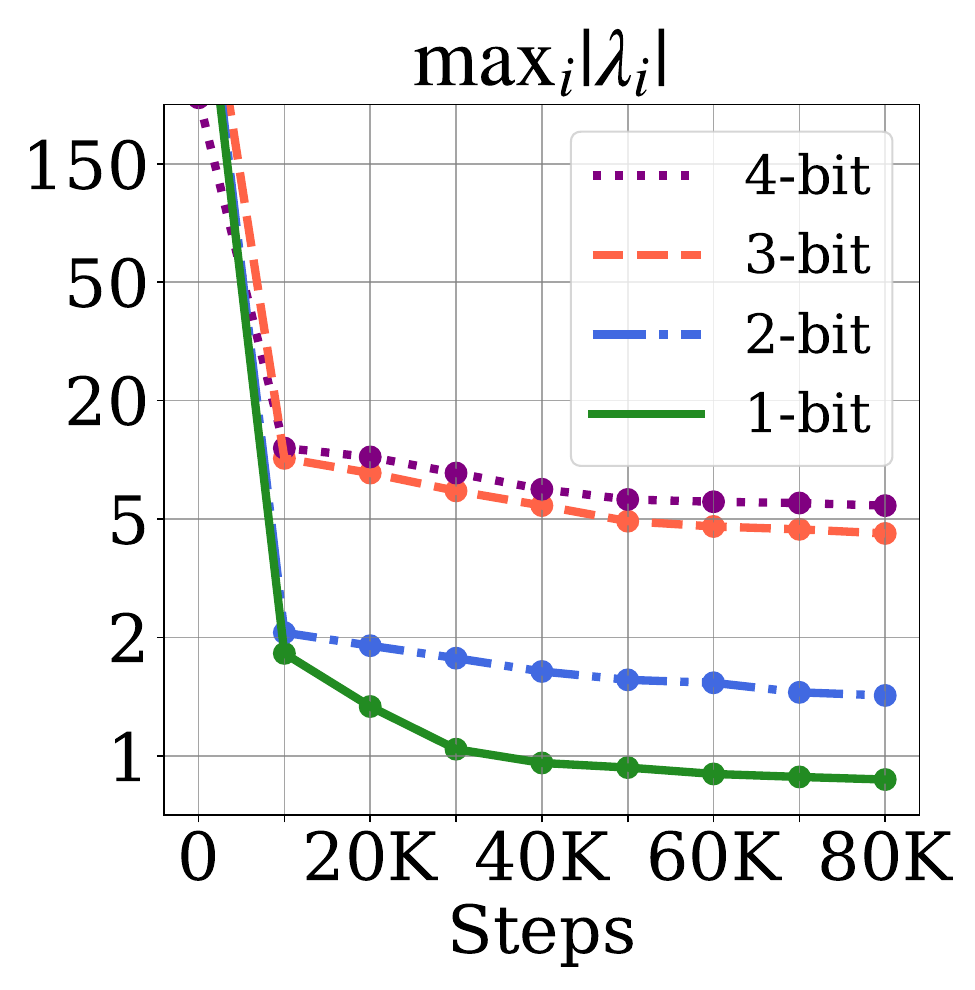}
    \subcaption{Maximum magnitude of Hessian eigenvalues }\label{fig_eigenvalue_fig1}
  \end{subfigure}
  \hfill
  \begin{subfigure}[b]{0.24\textwidth}
    \includegraphics[width=\textwidth]{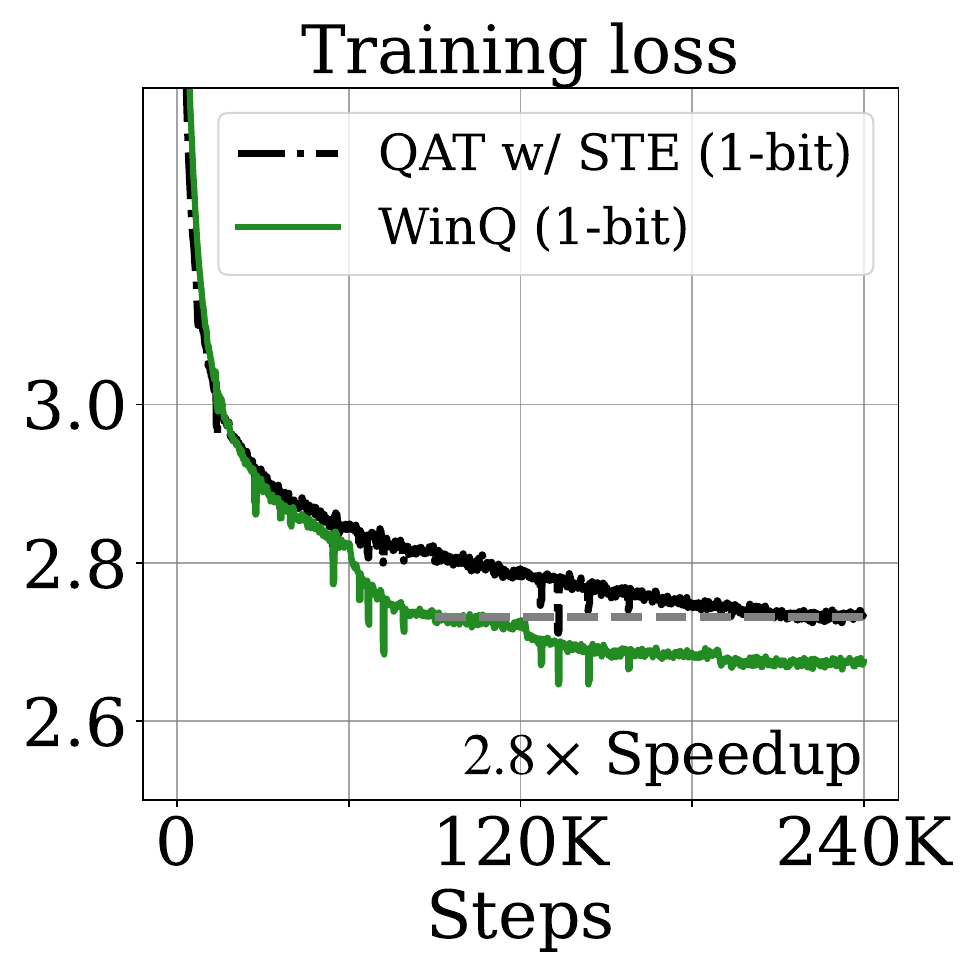}
    \subcaption{Comparing our method with the state-of-the-art}\label{fig_our_approach_fig1}
  \end{subfigure}
  \caption{We investigate the slow convergence of quantization-aware training by analyzing the loss Hessian with respect to model weights.
  Figure \ref{fig_hessian_fig1}: By estimating the Hessian spectrum, we find that slow convergence is mainly because the model weights converge to a flat surface around saddle points, where the rate is governed by the magnitude of Hessian eigenvalues. In low precision, over 40\% of the Hessian eigenvalues are near zero, and the maximum absolute eigenvalue decreases markedly during training.
  Figure \ref{fig_eigenvalue_fig1}: We further observe that at lower bit-widths, Hessian eigenvalues have much smaller magnitudes, consistent with the slower convergence in these settings.
  Figure \ref{fig_our_approach_fig1}: We propose \acronym{}, which integrates a weight interpolation technique with noise injection. Our approach applies to various quantized training methods with minimal overhead and significantly speeds up SoTA methods at extremely low precision.
  }
  \label{fig_intro}
\end{figure*}

Prior works have focused on improving the design of quantization methods and gradient estimation in quantized training, while the problem of slow convergence has not been studied in depth. Uniform quantization is the most widely used quantization method. Due to its non-differentiability, a technique named straight-through estimation (STE) \citep{bengio2013estimating} is often used to estimate gradients by copying the gradients of quantized weights to the full-precision weights. Building on this prior practice, a recent work, ParetoQ \citep{liu2025paretoq}, reduces quantization error in weights with a new stretched elastic quantization method and learnable step sizes \citep{esser2020learned}. QuEST \citep{panferov2025quest} improves the quantization error with a normalizing Hadamard Transform and refines gradient estimation by masking the gradients of weights with high quantization errors. Yet, improving the slow convergence of training has been left open in prior works. 

In this paper, we examine the slow convergence of quantization-aware training by analyzing the loss Hessian of the quantized model. Our motivating observation is that the gradient norm of quantized training often drops near zero.
In this region, the convergence rate is determined by the local curvature of the loss surface.
Thus, we hypothesize that at low precision, the loss Hessian matrix exhibits many small---or even zero---eigenvalues, thereby slowing convergence.
In contrast to prior work on using second-order information to guide the design of quantization methods \cite{dong2019hawq,dong2020hawq}, our work further examines the full spectrum of the Hessian of the loss surface.

To this end, we first estimate the Hessian spectrum of quantized language models using Hessian–vector products \citep{bai1996some,ghorbani2019investigation}.
Our main finding is that across 1- to 4-bit training, model weights converge to a flat region near saddle points. As illustrated in Figure \ref{fig_hessian_fig1}, the Hessian exhibits a balanced mix of positive and negative eigenvalues, with over 40\% concentrated near zero. Provided that the gradient norm is approaching zero, this indicates that the weights are stuck around saddle points, where the convergence speed is governed by the maximum magnitude of Hessian eigenvalues. After the initial training phase, the magnitude of the eigenvalues decreases markedly, leading to slow convergence. Further, as shown in Figure \ref{fig_eigenvalue_fig1}, the magnitude of eigenvalues becomes even smaller at lower bit-widths, consistent with the slower convergence observed in lower-bit precision. Our findings build on a line of recent work that establishes a connection between the Hessian spectrum and learning in neural networks \cite{ju2022robust,ju2023generalization,zhang2023noise,zhang2026scalable}, while offering a new perspective on the slow convergence of quantized training and the saddle-point problem in high-dimensional non-convex optimization. 

In this work, we design an approach for QAT with negligible additional cost. Our approach is based on a weight re-initialization technique where model weights are periodically reset to a linear interpolation between $W$ and $Q(W)$, denoted as $(1-\alpha)W + \alpha Q(W)$ for an $\alpha$ between $0$ and $1$. 
This design provably reduces the distance between the full-precision and quantized weights by acting as a proximal update step on an $\ell_2$-regularized training objective. We show that this step increases the magnitude of Hessian eigenvalues, thereby accelerating convergence.
Further, we incorporate a noise injection method that computes the gradient after perturbing $W$ with a random Gaussian noise vector $U$ at each iteration, inspired by prior work \citep{jin2017escape} to accelerate training around saddle points. Taken together, our approach, named \acronym{}, can be applied on top of various quantization methods, including those using the Hadamard Transform. 

We extensively evaluate \acronym{} across multiple bit-widths and language models. First, at precisions below 4-bit, our method achieves \textbf{1.5}–\textbf{4}$\times$ speedups over state-of-the-art quantized training methods, as shown in Figure \ref{fig_our_approach_fig1}. Second, at the same computational cost, our approach improves sub-4-bit quantization performance by up to \textbf{8.8}\%. Across 16 settings, we observe consistent gains for LLaMA and Qwen models with 0.6B to 3B parameters. Third, \acronym{} can also be applied to quantization methods using the Hadamard Transform, further boosting performance by up to \textbf{2.8}\%. Finally, ablation studies confirm that both components in our algorithm are essential to the performance. \revision{The code for reproducing this work is available at \href{https://github.com/facebookresearch/WinQ}{https://github.com/facebookresearch/WinQ}.} %

In summary, this paper makes three contributions to quantized training for low-precision LLMs:
\begin{itemize}[leftmargin=1.5em,itemsep=0pt, topsep=1pt,parsep=4.0pt]
    \item First, we analyze the slow convergence of quantized training through the loss Hessian and identify that the primary reason arises from model weights converging to a flat surface near saddle points. 
    \item Second, we propose a generic approach to accelerate quantized training with negligible computational overhead by combining a novel weight re-initialization technique and noise injection. 
    \item Third, we demonstrate that our approach significantly accelerates convergence and improves test performance for language models across a range of sub-4-bit quantization methods.
\end{itemize}

%% file: approach.tex
\section{Preliminaries}

We describe a formulation for quantization-aware training. 
Let $f_W$ denote a language model with parameters $W \in \mathbb{R}^d$. Let $\hat{L}_W$ denote its empirical risk on a training set. 
Let $Q(\cdot)$ be a quantization function that maps the model parameters to weights that can be represented by lower precisions. We denote its output weights as $Q(W)$. Quantization-aware training minimizes the loss $\hat L_{Q(W)}$ with respect to $W$ through computing $Q(W)$ at each iteration. Thus, $W$ is also called latent weights. 

As $Q(\cdot)$ is typically non-differentiable, quantized training algorithms often use the \emph{straight-through estimator} \citep{bengio2013estimating}, which copies the gradient of $Q(W)$ as the gradient of $W$. After training, the final latent weights $\hat{W}$ are quantized to $Q(\hat{W})$ for inference. We provide precise definitions of other terminologies, such as saddle points, in Appendix \ref{sec_approach_details}. 
Next, we describe the widely used uniform quantization, which can be instantiated in many ways.

\begin{definition}[Uniform quantization]
Given a bit-width $n$, uniform quantization linearly scales the weights and rounds them to the nearest integers:
\begin{align}
    Q(W) = a \round{ \text{clip} \left ( \frac{W - b}{a}, v_{\text{neg}}, v_{\text{pos}} \right ) } + b,
\end{align}
where $a$ is the scale, $b$ is the bias, and $\round{}$ is a rounding function that converts the value to its nearest integer.
$\text{clip}(W, v_{\text{neg}}, v_{\text{pos}})$ clamps the values in $W$ to a range between $v_{\text{neg}}$ and $v_{\text{pos}}$, where the bounds are determined by the quantization bits.  

For more concrete examples, in symmetric min-max quantization, $a$ is set by the maximum absolute weight: $a = \frac{\max_i|W_{i}|}{2^{n-1}-1}$, with $v_{\text{neg}} = - 2^{n-1}$, $v_{\text{pos}} = 2^{n-1} - 1$, and $b=0$. 
In asymmetric min-max quantization, the scale is based on the full range of weights: $a = \frac{\max_i W_{i} - \min_i W_i}{2^{n}-1}$, with $v_{\text{neg}} = 0$, $v_{\text{pos}} = 2^{n} - 1$, and $b=\min_i W_i$.
In addition, the scale $a$ can also be treated as a learnable parameter and optimized jointly with the model \citep{esser2020learned}.
\end{definition}

\begin{figure}
  \centering
  \begin{subfigure}[b]{0.215\textwidth}
    \centering
    \includegraphics[width=\textwidth]{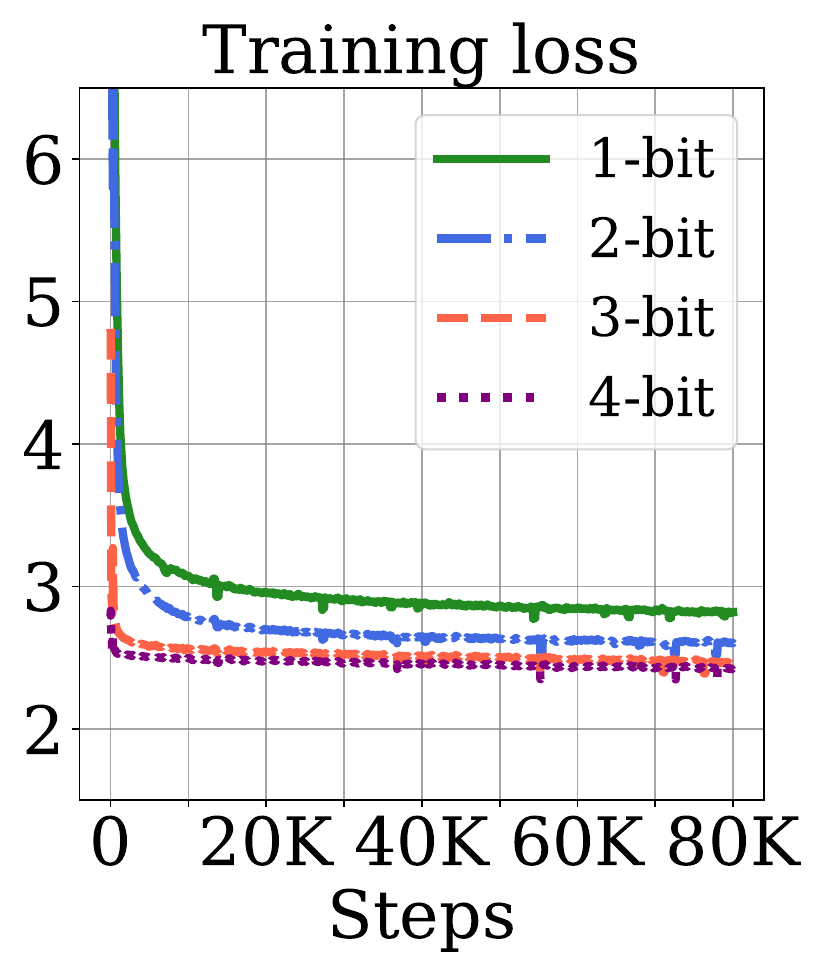}
  \end{subfigure}\hfill
  \begin{subfigure}[b]{0.235\textwidth}
    \centering
    \includegraphics[width=\textwidth]{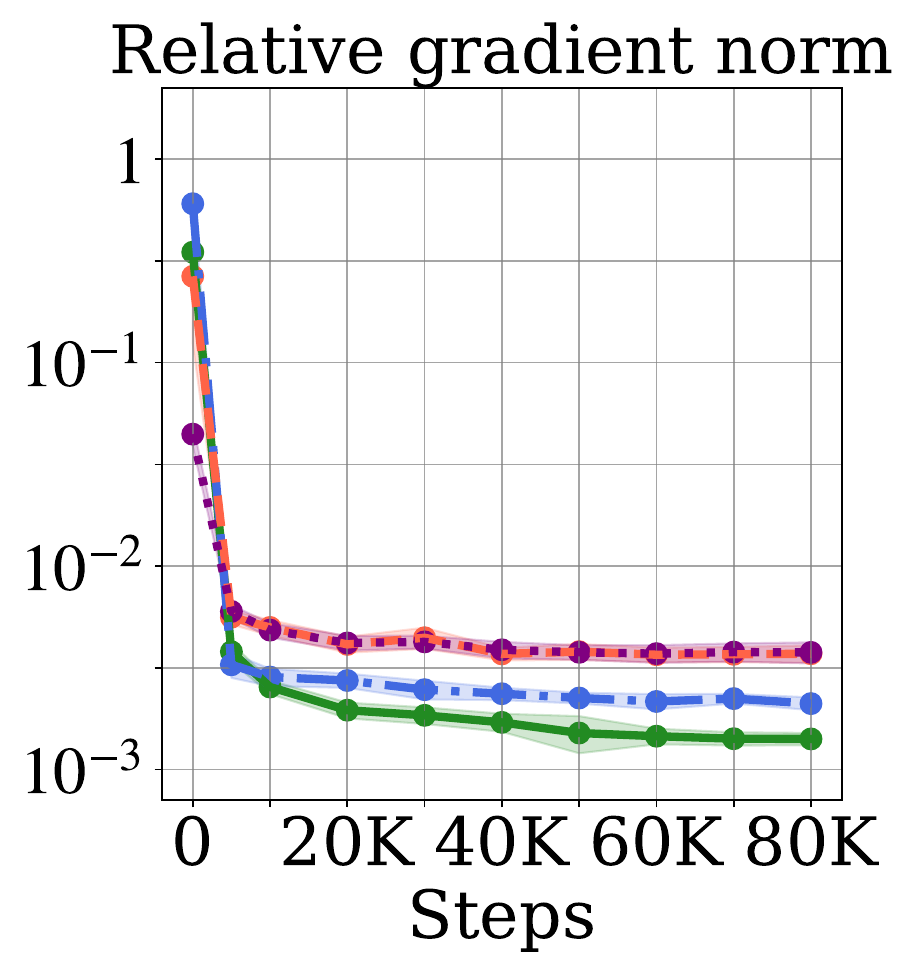}
  \end{subfigure}
  \caption{The convergence of the training loss and the relative gradient norm in quantization-aware training from 1-bit to 4-bit precision. 
  }
  \label{fig_convergence}
\end{figure}

\textbf{Convergence rate under varied bit-width.} 
We present empirical observations of the convergence speed using a state-of-the-art QAT method based on STE \citep{liu2025paretoq}. We perform quantization-aware training for the pretrained LLaMA-3-1B on the FineWebEdu dataset, using AdamW as the optimizer. 

In Figure \ref{fig_convergence}, we first illustrate the training convergence in 1-bit, 2-bit, 3-bit, and 4-bit, respectively. 
We find that the convergence slows down after the initial 10K training steps. QAT in 2-bit and 1-bit converges significantly slower than 4-bit quantization. Further, we evaluate the gradient norm relative to the weight norm, i.e., $\norm{\nabla_W \hat L_{Q(W)}}_2/\norm{W}_2$, along training, which converges to the scale between $10^{-3}$ and $10^{-2}$ across 1-bit to 4-bit precision.

Near a first-order stationary point, the convergence rate of gradient-based methods is governed by the magnitude of Hessian eigenvalues.
One hypothesis is that low-bit quantization tends to yield flat curvature in the weight space. Additionally, we observe that the distance between $W$ and $Q(W)$ increases with lower precision, suggesting greater difficulty in training at lower precision. The relative norm of quantization error, i.e., $\norm{Q(W) - W}_2 / \norm{W}_2$, increases from 16\% in 4-bit to 70\% in 1-bit.  This motivates us to study the connection between the slow convergence of quantized training and flat curvature in the loss surface.

\begin{figure*}[t!]
  \centering
  \begin{subfigure}[b]{0.3\textwidth}
    \includegraphics[width=\textwidth]{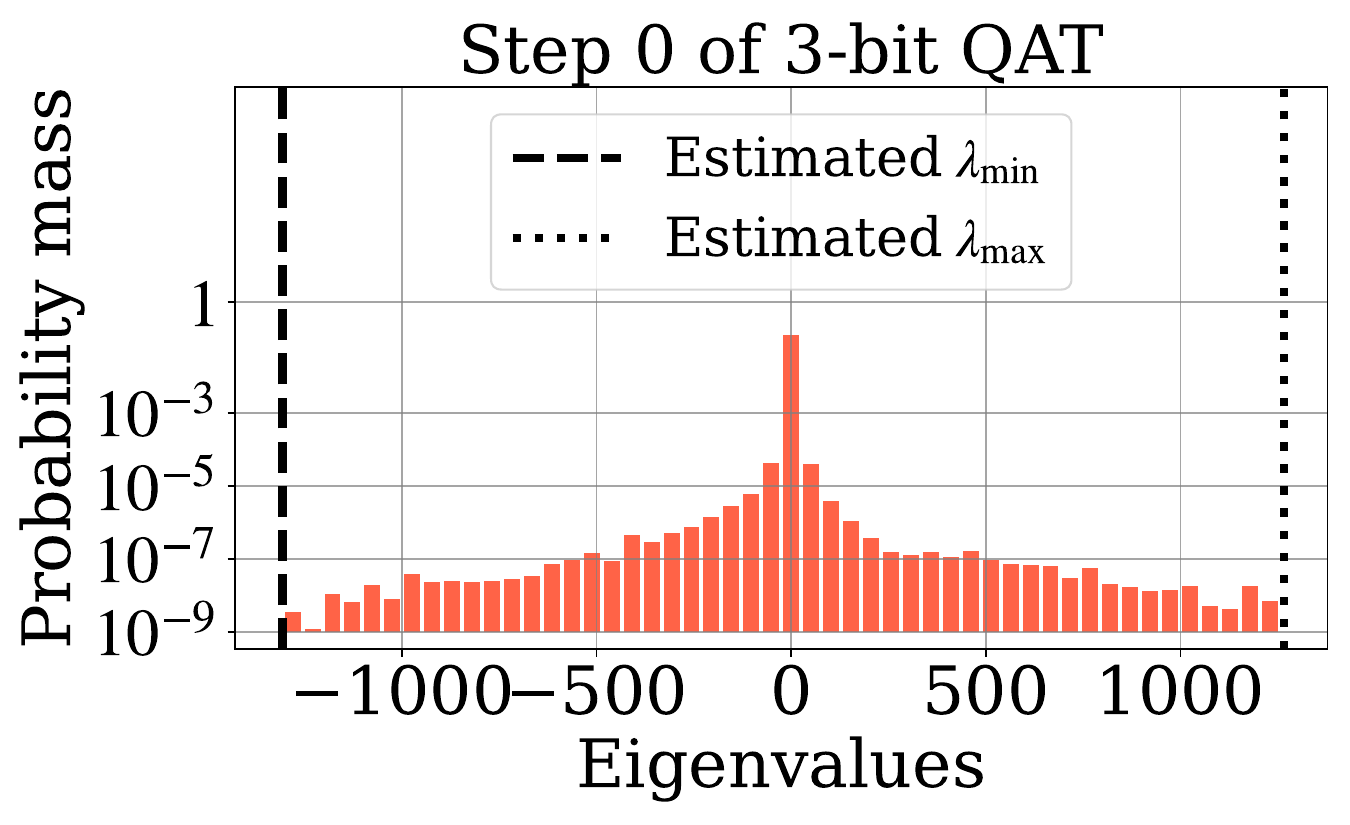}
  \end{subfigure}
  \hfill
  \begin{subfigure}[b]{0.3\textwidth}
    \includegraphics[width=\textwidth]{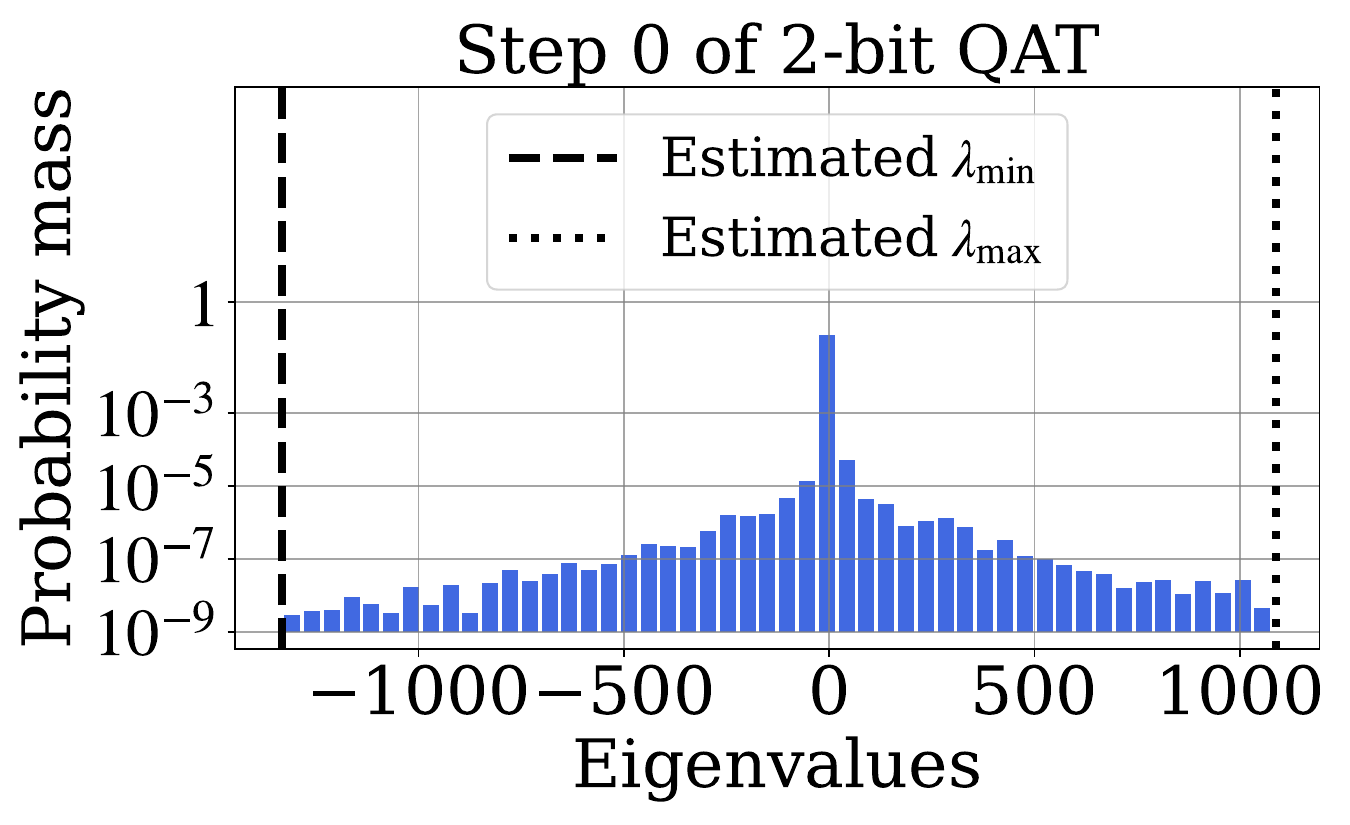}
  \end{subfigure}
  \hfill
  \begin{subfigure}[b]{0.3\textwidth}
    \includegraphics[width=\textwidth]{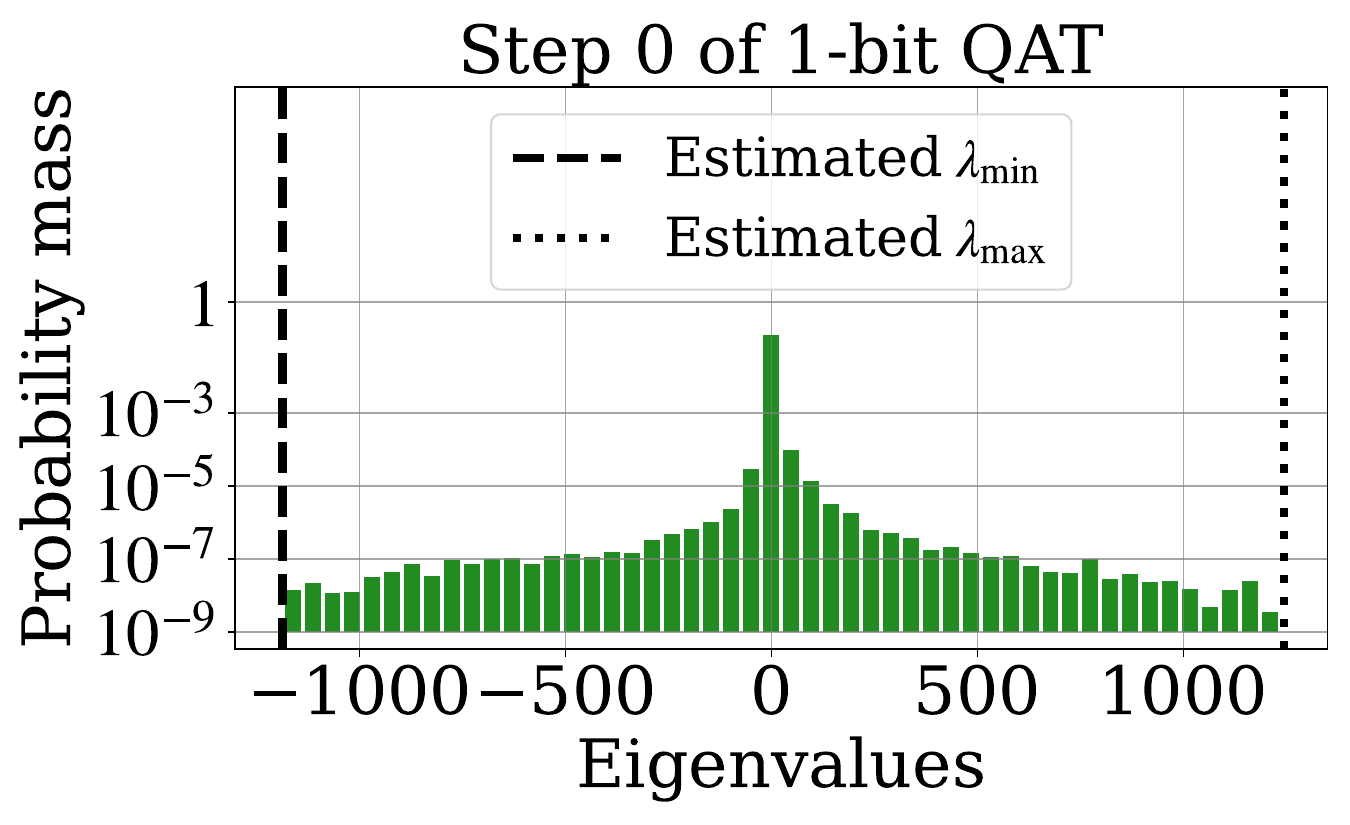}
  \end{subfigure}
  \begin{subfigure}[b]{0.3\textwidth}
    \includegraphics[width=\textwidth]{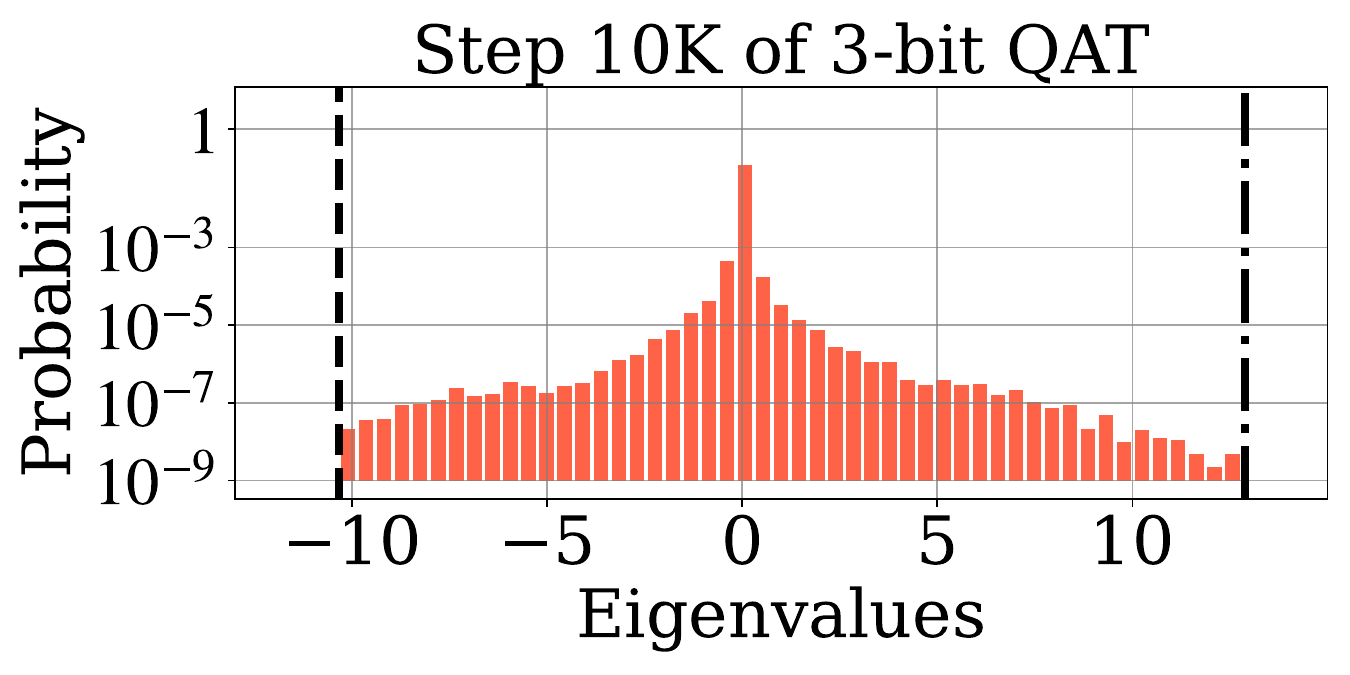}
  \end{subfigure}
  \hfill
  \begin{subfigure}[b]{0.3\textwidth}
    \includegraphics[width=\textwidth]{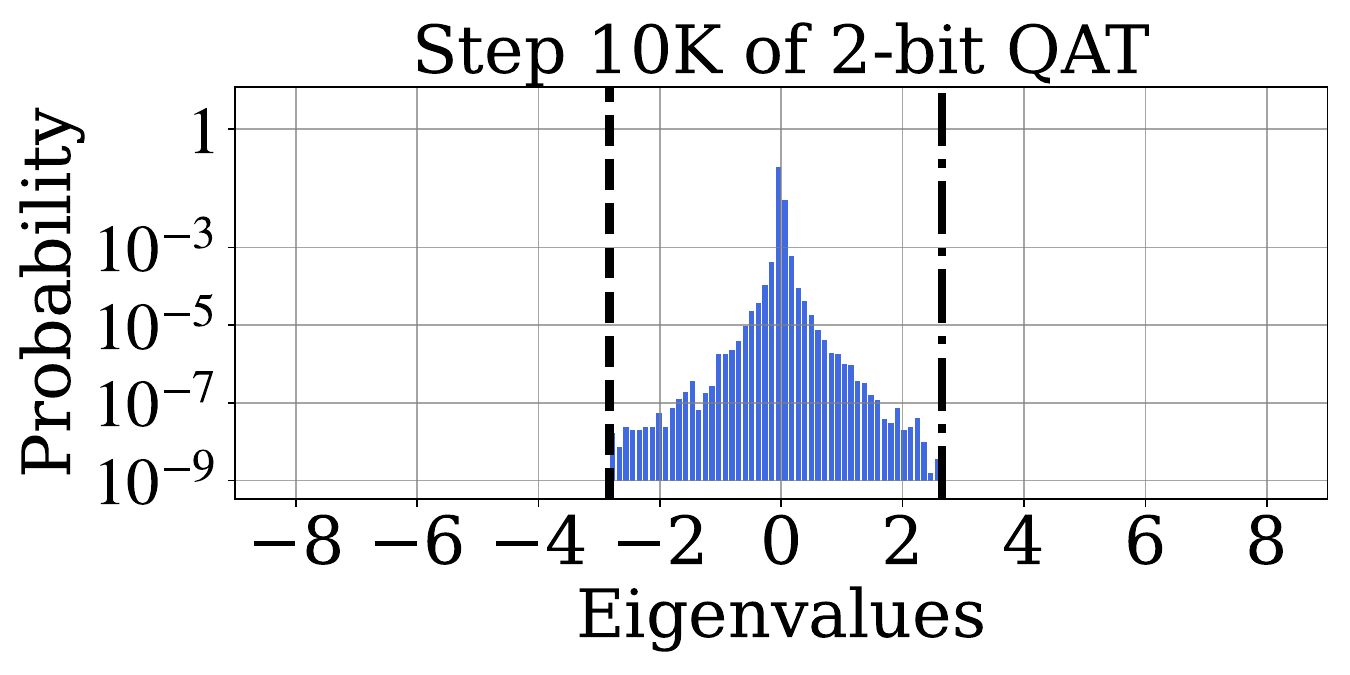}
  \end{subfigure}
  \hfill
  \begin{subfigure}[b]{0.3\textwidth}
    \includegraphics[width=\textwidth]{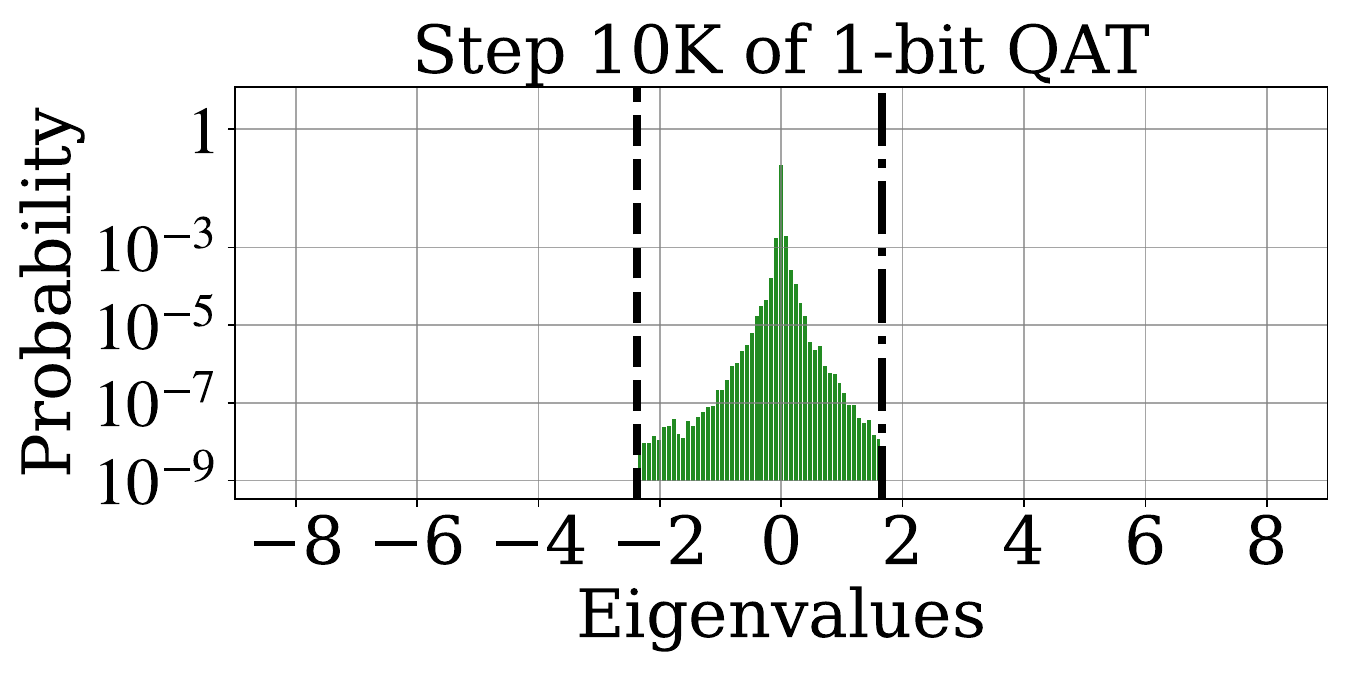}
  \end{subfigure}
  \begin{subfigure}[b]{0.3\textwidth}
    \includegraphics[width=\textwidth]{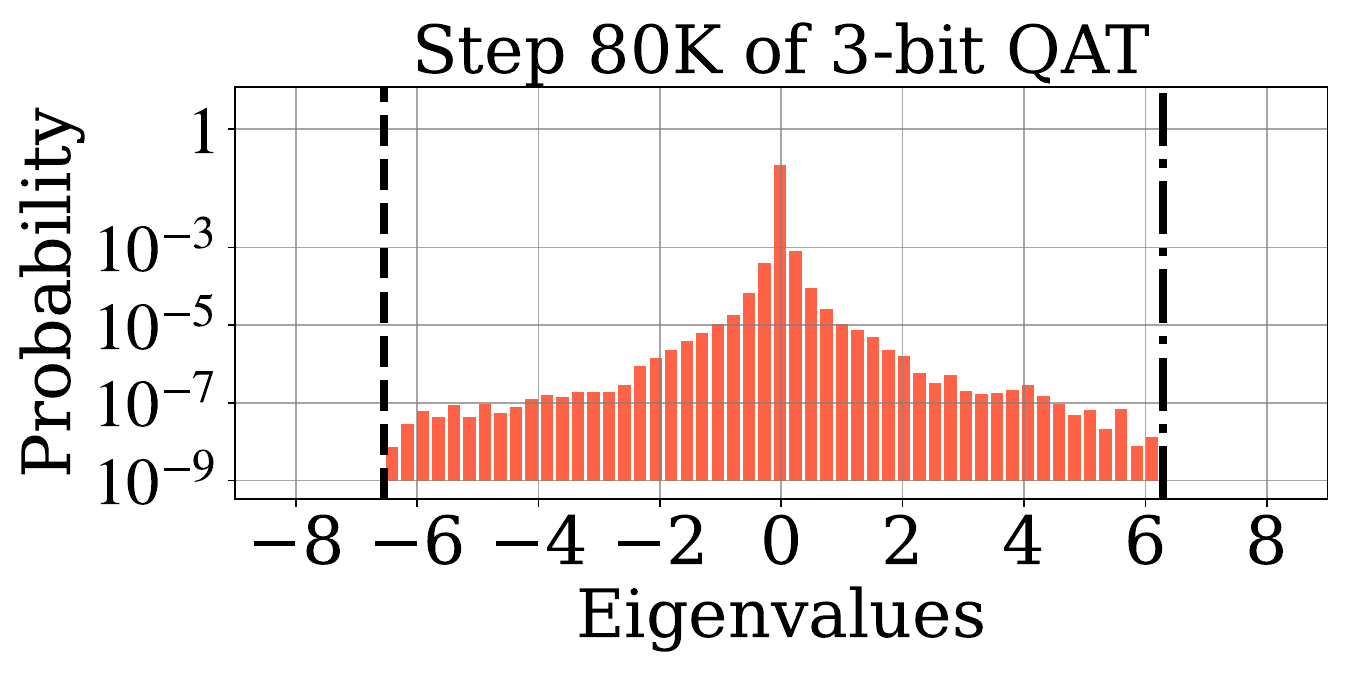}
  \end{subfigure}
  \hfill
  \begin{subfigure}[b]{0.3\textwidth}
    \includegraphics[width=\textwidth]{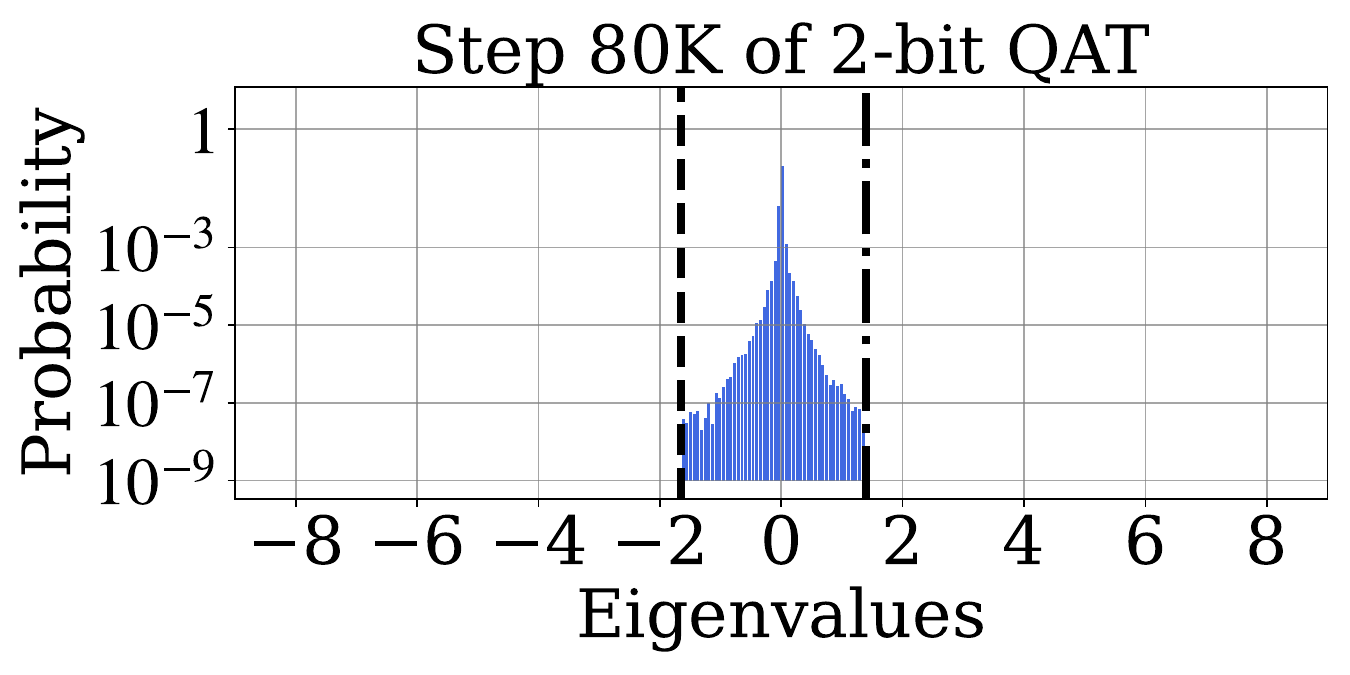}
  \end{subfigure}
  \hfill
  \begin{subfigure}[b]{0.3\textwidth}
    \includegraphics[width=\textwidth]{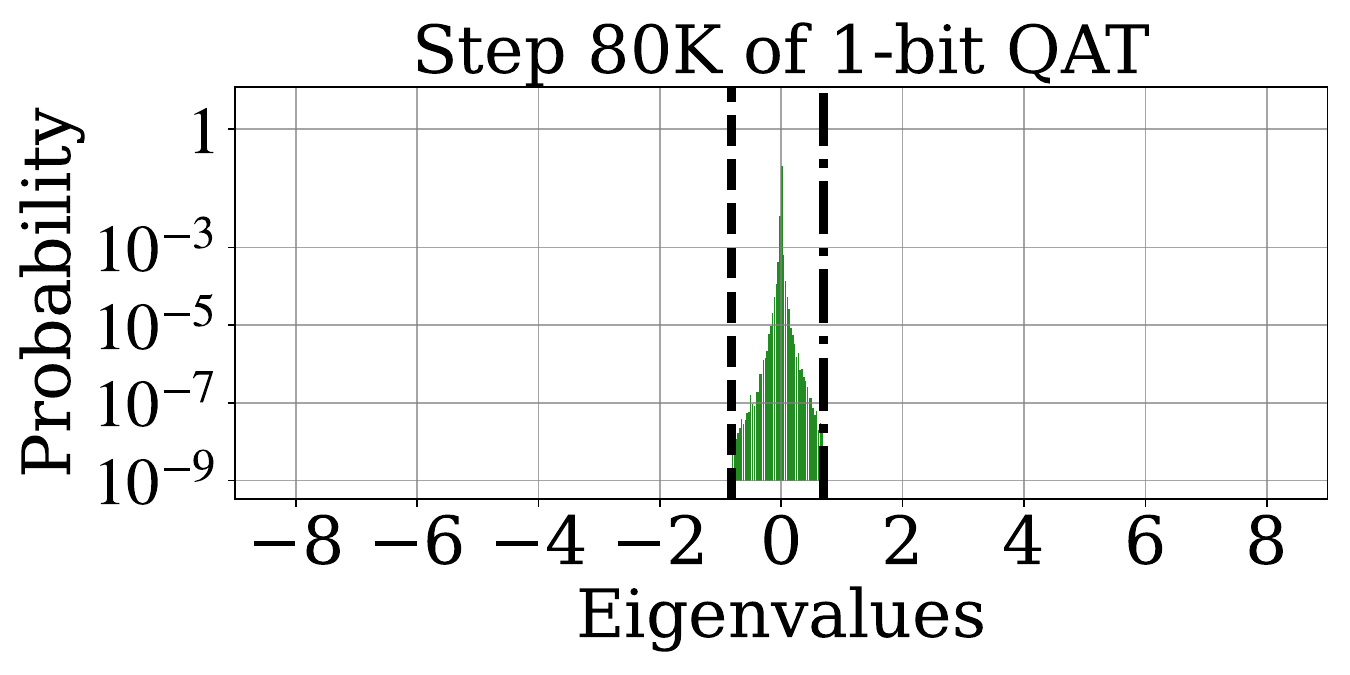}
  \end{subfigure}
  \caption{We illustrate the estimated eigenvalues and their probability mass of the loss Hessian matrix in 3-bit, 2-bit, and 1-bit QAT, respectively.
  (1) Most of the eigenvalues concentrate close to zero, and there are both negative and positive eigenvalues with comparable probability mass. This suggests that the model weights converge to a flat region around saddle points during training. 
  (2) For a lower precision, the magnitude of eigenvalues becomes smaller, and more eigenvalues are around zero.  
  This suggests that the model weights are in a region with flatter curvature, thus exhibiting slower convergence. 
  The \textbf{$x$-axis} corresponds to the eigenvalues. \textbf{$y$-axis} corresponds to the probability mass function of the eigenvalues (in log scale). We illustrate the eigenvalues for the weights in transformer layers. \revision{We describe the results for the FP16-precision model and for embedding layers in Appendix \ref{sec_additional_exp}.}
  } \label{fig_hessian_eigenvalue_distribution}
\end{figure*}

\section{Our Approach}\label{sec_approach}

In this section, we analyze the Hessian spectrum of the loss surface during quantized training. We find that the Hessian exhibits a concentration of zero eigenvalues, with small magnitudes for both negative and positive eigenvalues. Thus, the slow convergence can be attributed to the presence of saddle points, which are more pronounced at lower precision. To tackle this, we find that a weight reinitialization technique significantly speeds up training by linearly interpolating between full-precision and quantized weights. We propose an approach that further incorporates noise injection into the model weights during training, thereby improving and speeding up various quantization methods. 

\subsection{Measuring Hessian Spectrum}\label{sec:measure_hessian_spectrum}

We first present our analysis of the loss Hessian with regard to model weights. Specifically, we estimate the Hessian spectrum, i.e., the eigenvalue and its probability mass over all eigenvalues.
As fully computing the Hessian matrix is infeasible, we use a numerical method called Stochastic Lanczos Quadrature \citep{bai1996some,chen2021analysis}, which can be implemented using Hessian-vector products. We present results using LLaMA-3-1B trained with 1-bit to 4-bit precisions and estimate the loss Hessian on the training data. See implementation details in Appendix \ref{sec_approach_details}.

\textbf{Saddle points in quantization-aware training.}
We identify that the model weights get stuck in a flat region near saddle points. Figure \ref{fig_hessian_eigenvalue_distribution} illustrates the estimated eigenvalue distributions in 1-bit, 2-bit, and 3-bit weight precision, respectively.  Consistently across three precisions, we observe both negative and positive eigenvalues in the Hessian matrix, with comparable probability mass.
As training progresses, more eigenvalues concentrate near zero, and the magnitude of eigenvalues decreases significantly. For example, in 3-bit QAT, the probability mass of zero eigenvalues increases from \textbf{7}\% at step 0 to \textbf{41}\% at 80K steps. {Numerically, we regard the estimated eigenvalues in a range between $-10^{-3}$ and $10^{-3}$ as zero.}
See Appendix \ref{sec_approach_details} for a precise definition of saddle points.

\textbf{Flatter curvature in lower bit-width.} We find that the model weights are stuck in a region with flatter curvature for lower precision, resulting in slower convergence of quantization-aware training.  In Figure \ref{fig_hessian_eigenvalue_distribution}, comparing among the three bit-widths, we found that the magnitude of the eigenvalues becomes much smaller, from \textbf{6} in 3-bit to \textbf{2} in 2-bit and less than \textbf{1} in 1-bit at 80K steps. Additionally, a larger proportion of eigenvalues is near zero. The probability mass for zero eigenvalues increases from \textbf{41}\% in 3-bit to \textbf{55}\% in 2-bit and \textbf{63}\% in 1-bit. 

\textbf{Effect of activation quantization.} 
We further evaluate the Hessian spectrum under 8-bit and 4-bit activation quantization using Llama-1B with 1-bit weight quantization. We find that training with lower-precision activations (8-bit and 4-bit) produces (\textbf{23}\% and \textbf{32}\%) more smaller-magnitude Hessian eigenvalues of model weights compared to 16-bit activations, respectively. This indicates that low-precision activation quantization also results in a flatter loss landscape.

Furthermore, these findings are not limited to standard STE-based methods. We evaluate the Hessian spectrum under other quantized training methods, including the rotation trick \cite{fifty2024restructuring} and QuEST \cite{panferov2025quest}, and observe consistent results across 1-bit to 4-bit training.
Additionally, we find that the loss Hessian in the embedding layer has most of its eigenvalues non-negative. 
Since most model parameters reside in the transformer layers, the saddle-point problem primarily hinders convergence. Additional evaluations of the analysis are provided in Appendix \ref{sec_additional_exp}.

\subsection{Algorithm Design}

Our findings suggest that the key to improving convergence in quantized training lies in addressing the saddle-point problem. Next, we present a fast algorithm that significantly accelerates low-precision training with minimal computational overhead. 

\textbf{(1) Weight re-initialization.} Our first technique is to reset the latent weights $W$ to the linear interpolation between $W$ and $Q(W)$ during training, given a scalar $\alpha \in [0, 1]$:
\begin{align}\label{eq_reinitialize}
W \leftarrow (1-\alpha) W + \alpha Q(W).    
\end{align}

Our motivation is that this interpolation directly reduces the distance between $W$ and $Q(W)$, thereby lowering quantization error and improving training stability \citep{panferov2025quest}, without significantly altering the training loss. If the quantization grid is fixed during interpolation, such as when using learnable step sizes \cite{esser2020learned}, $Q(W)$ and its corresponding loss remain unchanged after interpolation. Even when the quantization grids depend on latent weights, empirical loss variations remain negligible (within 3\%) across various bit widths and interpolation coefficients.

While $Q(W)$ remains largely unchanged, the interpolation alters the loss curvature with respect to the interpolated weights, thereby affecting the loss trajectory in subsequent steps. 
Empirically, we observe that interpolated weights exhibit significantly larger Hessian eigenvalue magnitudes, which in turn accelerate training across language models and bit widths. We estimate the Hessian eigenvalues at interpolated weights by varying $\alpha$ from $0$ to $1$, using the LLaMA-1B trained at 2-bit precision. Setting $\alpha=0.4$ increases the maximum absolute eigenvalue by \textbf{84}\% and decreases the probability mass of zero eigenvalues by \textbf{21}\% compared to the original $W$. Observations across bits from 1-bit to 4-bit are consistent and described in Appendix \ref{sec_additional_exp}. 

Thus, we propose to re-initialize the latent weights periodically during training. Specifically, given a scalar $\alpha \in [0, 1]$ and an interval of $K$ steps, we re-initialize the latent weights $W$ to the linear interpolation weights every $K$ steps by Equation \ref{eq_reinitialize}. We note that the reinitialization step does not alter AdamW's state.

\textbf{(2) Noise injection.} Second, we incorporate a noise injection method in the training of latent weights. At each iteration, we inject a random Gaussian noise $U \sim \mathcal{N}(0, \sigma^2 \id)$ into the latent weights $W$ and compute gradients on $Q(W+U)$. This is inspired by prior work showing that performing SGD with noise-perturbed weights improves the convergence of gradient descent in non-convex optimization near saddle points \citep{jin2017escape}.  In experiments, we find that noise injection tends to yield smaller negative Hessian eigenvalues (i.e., larger magnitudes) and larger gradient norms, thereby speeding up quantization-aware training.
Taken together, we describe our approach, named \acronym{}, in Algorithm \ref{alg_ours}.

\begin{algorithm}[t!]
    \caption{\acronym{}: {\textbf{W}eight (re)-\textbf{i}nitialization with \textbf{n}oise \textbf{i}njection for \textbf{Q}uantization-aware training }}\label{alg_ours}
    \textbf{Input}: Initialization $W_0\in\real^d$, a quantization function $Q(\cdot)$, a language model $f_W$\\
    \textbf{Require}: A re-initialization interval $K$, an interpolation scalar $\alpha$, standard deviation $\sigma$, number of iterations $T$ and learning rates $\eta$ \\
    \textbf{Output:} The trained latent weights $W_T$  \\
    \begin{algorithmic}[1] %
        \FOR{$i = 0, 1, \dots, T-1$}
            \STATE $U_i \leftarrow$ Sample a random noise from $\mathcal{N}(0, \sigma^2 \id_d)$ 
            \STATE $W_{i + 1} \leftarrow W_i - {\eta} \nabla_{Q(W_i+U_i)} \hat{L}_{Q(W_i+U_i)}$
            \IF{$i+1 (\bmod~K)$ is zero}
            \STATE $W_{i+1} \leftarrow (1-\alpha) W_{i+1} + \alpha Q(W_{i+1})$
            \ENDIF
        \ENDFOR
    \end{algorithmic}
\end{algorithm}

\begin{figure*}[t!]
  \centering
  \begin{subfigure}[b]{0.192\textwidth}
    \includegraphics[width=\textwidth]{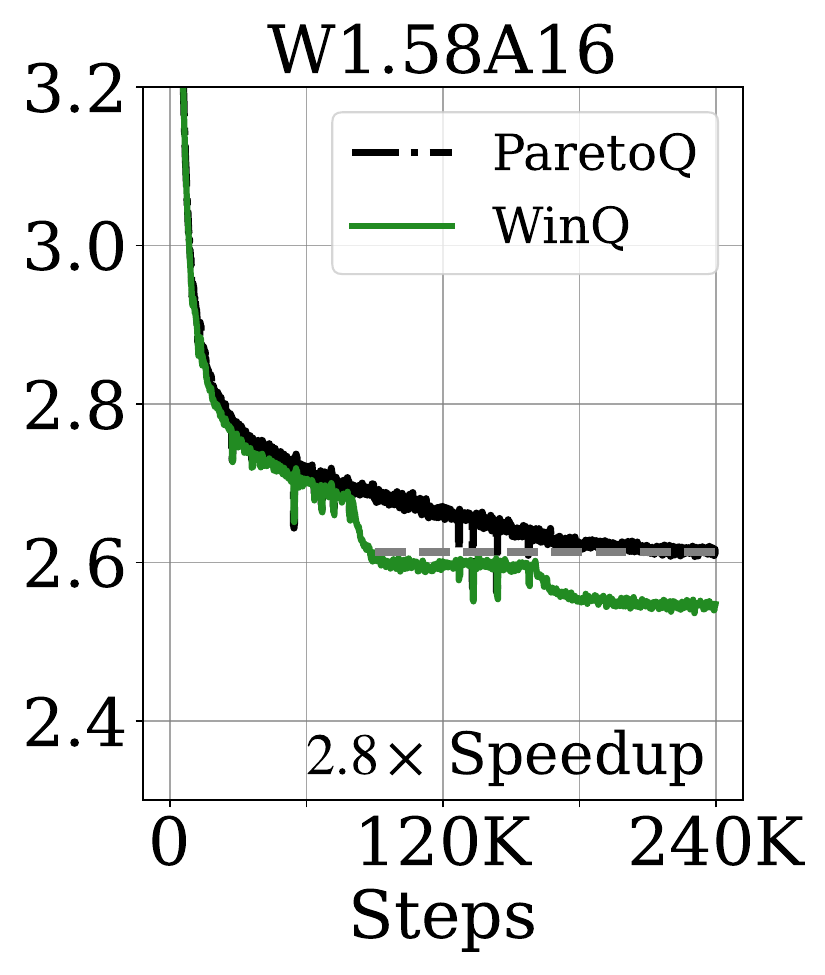}
  \end{subfigure}
  \hfill
  \begin{subfigure}[b]{0.192\textwidth}
    \includegraphics[width=\textwidth]{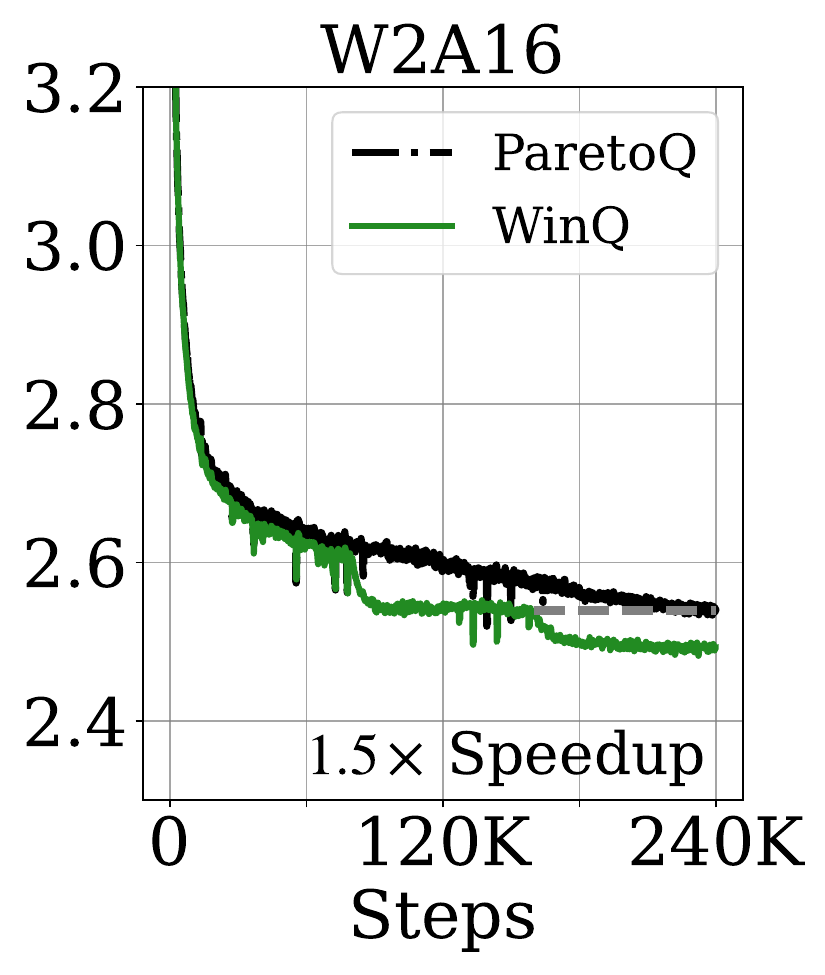}
  \end{subfigure}
  \hfill
  \begin{subfigure}[b]{0.192\textwidth}
    \includegraphics[width=\textwidth]{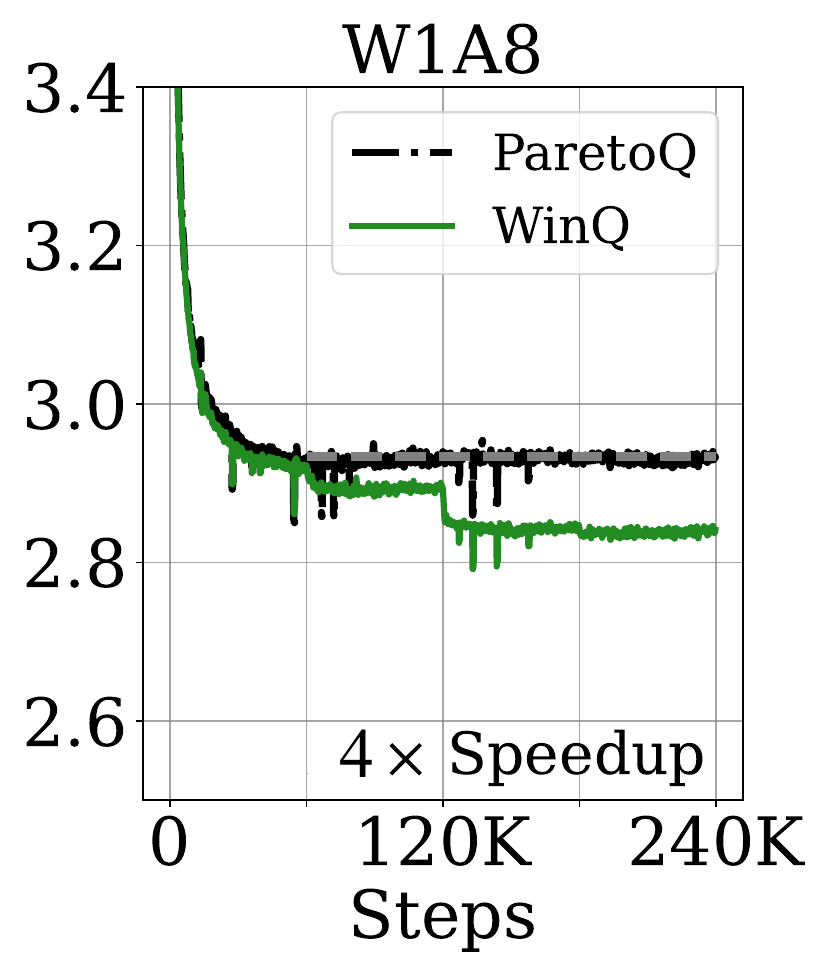}
  \end{subfigure}
  \begin{subfigure}[b]{0.192\textwidth}
    \includegraphics[width=\textwidth]{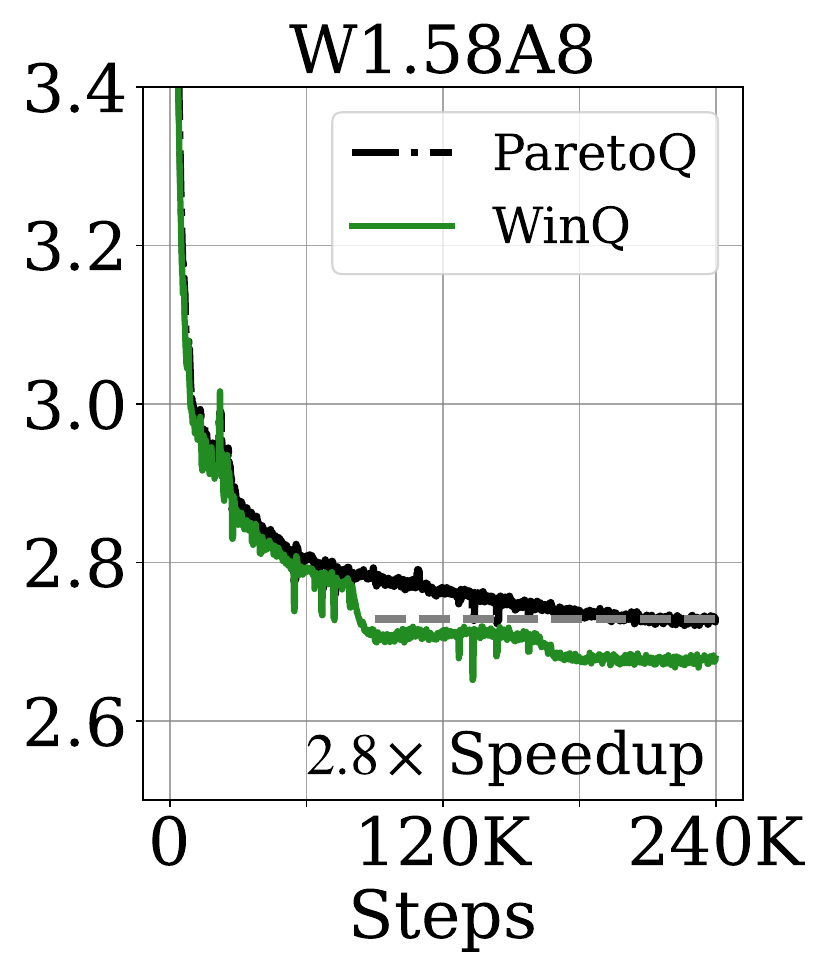}
  \end{subfigure}
  \begin{subfigure}[b]{0.192\textwidth}
    \includegraphics[width=\textwidth]{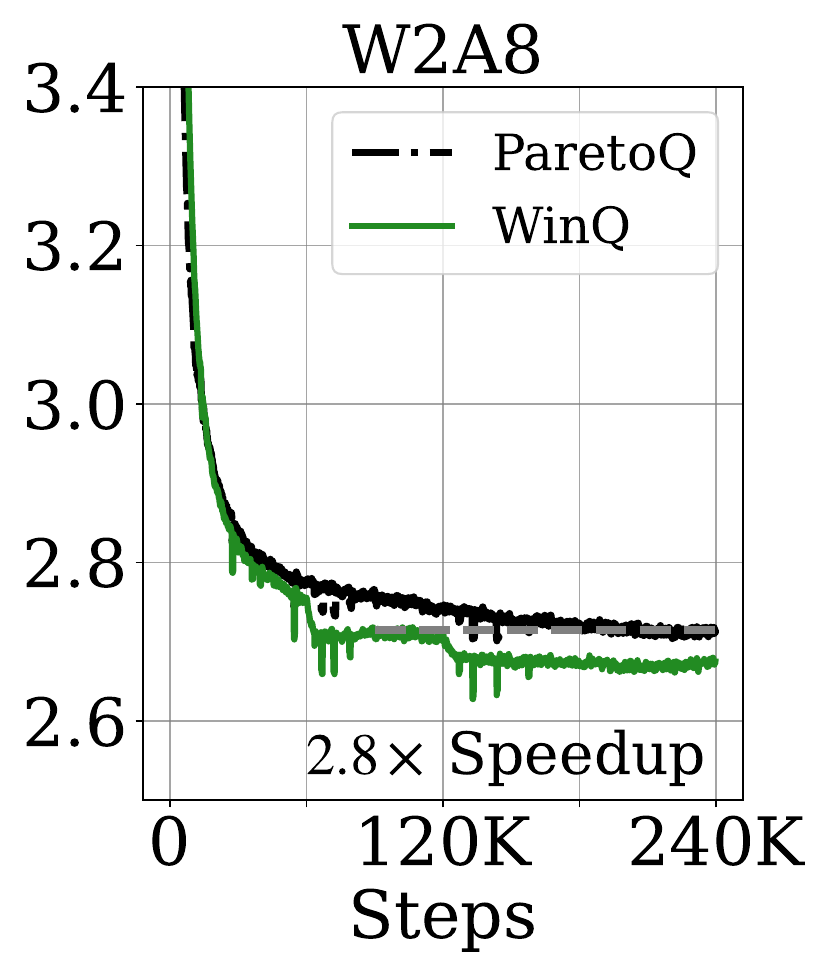}
  \end{subfigure}
  \caption{We evaluate training loss convergence by comparing our approach with the SoTA STE-based quantization-aware training method on the LLaMA-3-1B model. Our method accelerates convergence by \textbf{1.5}-\textbf{4}$\times$ across weight precisions of 1–2 bits and activation precisions of 8–16 bits. In higher-precision settings, longer re-initialization intervals generally lead to improved performance. Throughout, we use the notation W2A16 to indicate 2-bit weights and 16-bit activations, with analogous notation for other configurations. \revision{The result of W1A16 is reported in Figure \ref{fig_intro}.}
  }
  \label{fig_compare_convergence_speed}
\end{figure*}

\textbf{Extension to incorporate Hadamard transform.} Our approach can be applied to quantization methods that use the Hadamard transform \citep{panferov2025quest}. These methods multiply the weight vector at each layer by a Hadamard matrix before applying a quantization function, thereby reducing quantization error. Given a matrix $H \in \real^{d\times d}$ as a block-diagonal matrix whose diagonal blocks are per-layer Hadamard matrices, we can extend the re-initialization as: 
\begin{align}
    W \leftarrow H^{\top} \left ( (1-\alpha)HW + \alpha Q(HW) \right), 
\end{align}
based on the fact that $H$ is an orthogonal matrix. This operation can be efficiently computed with fast Hadamard multiplication kernels \citep{fast_hadamard_kernel}. We describe the extension with the Hadamard Transform in Appendix \ref{sec_approach_details}. 

We now describe an interpretation of weight interpolation as a proximal gradient update step on an $\ell_2$-regularized objective, which exhibits larger Hessian eigenvalues. 

\begin{proposition}\label{prop_connect}
Let $W$ denote the continuous latent weights, $Q(W)$ the corresponding quantized weights, and $\eta > 0$ the learning rate of the optimizer.
Let $q = Q(W)$ denote the quantized weights of weight $W$. Assume that the quantized weights are locally stable, meaning that there exists a neighborhood $\mathcal{N}_q \subseteq \mathbb{R}^d$ of $W$, such that $Q(Z)=q$ for all $Z \in \mathcal{N}_q$.
The weight interpolation step $W \leftarrow (1-\alpha)W + \alpha Q(W)$ with $\alpha \in [0, 1)$ is equivalent to gradient update on the $\ell_2$-regularized objective:
\begin{align}
    \Phi(W) = L_Q(W) + \frac{\gamma}{2} \|W - q\|^2,
\end{align}
where $\alpha$ connects to the regularization strength $\gamma$ as $\alpha = \frac{\eta \gamma}{1 + \eta \gamma}$. Correspondingly, the Hessian of $\Phi(W)$ becomes \[ \nabla^2 \Phi(W) = \nabla^2 L_Q(W) + \gamma \id. \]
\end{proposition}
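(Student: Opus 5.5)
The plan is to read ``gradient update'' as an \emph{implicit} (proximal, backward-Euler) gradient step on the regularizer $\frac{\gamma}{2}\|W-q\|^2$, in a splitting scheme: the loss part $L_Q(W)$ is handled by the ordinary STE gradient steps between re-initializations, and the interpolation step handles the regularizer. An explicit step cannot work. It would give $W - \eta\gamma(W-q) = (1-\eta\gamma)W + \eta\gamma q$, i.e.\ $\alpha = \eta\gamma$, which does not match the stated relation. The implicit step is what produces $\alpha = \frac{\eta\gamma}{1+\eta\gamma}$, so I would commit to it.

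First, I use the local stability assumption. On the neighborhood $\mathcal{N}_q$ of $W$ we have $Q(Z) = q$ for every $Z$, so $q$ is a constant there and not a function of $Z$. Hence $R(Z) = \frac{\gamma}{2}\|Z-q\|^2$ is a smooth quadratic on $\mathcal{N}_q$, with
\[
\nabla R(Z) = \gamma(Z-q), \qquad \nabla^2 R(Z) = \gamma\,\id .
\]
This step is what justifies differentiating through $q$ even though $Q$ is non-differentiable in general.

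Second, I compute the proximal/implicit update $W^+ = \arg\min_Z \frac{1}{2\eta}\|Z-W\|^2 + \frac{\gamma}{2}\|Z-q\|^2$. This is equivalent to solving $W^+ = W - \eta\nabla R(W^+)$. Setting the gradient to zero gives $(1+\eta\gamma)W^+ = W + \eta\gamma q$, so
\[
W^+ = \frac{1}{1+\eta\gamma}\,W + \frac{\eta\gamma}{1+\eta\gamma}\,q = (1-\alpha)W + \alpha Q(W), \qquad \alpha = \frac{\eta\gamma}{1+\eta\gamma}.
\]
I then check that the map $\gamma \mapsto \frac{\eta\gamma}{1+\eta\gamma}$ is a bijection from $[0,\infty)$ onto $[0,1)$, with inverse $\gamma = \frac{\alpha}{\eta(1-\alpha)}$. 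This explains why $\alpha = 1$ is excluded: it corresponds to $\gamma = \infty$.

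The update $W^+$ moves toward $q$. It stays in $\mathcal{N}_q$ only if that neighborhood contains the segment from $W$ to $W^+$, and I would note this caveat, although it is not needed for the algebraic identity. Combining the STE step on $L_Q$ with this proximal step gives a proximal-gradient iteration on $\Phi$.

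The Hessian identity follows by linearity: $\nabla^2\Phi(W) = \nabla^2 L_Q(W) + \nabla^2 R(W) = \nabla^2 L_Q(W) + \gamma\,\id$, where $\nabla^2 L_Q$ is understood as the surrogate (STE) curvature. Every eigenvalue is therefore shifted by $\gamma$. This pushes the near-zero eigenvalues away from zero, which is the qualitative claim.

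The main obstacle is interpretive rather than computational. One must state precisely in what sense the step is ``equivalent to a gradient update'', namely as the proximal step for the regularizer within a proximal-gradient splitting. One must also handle $L_Q$ honestly: under literal local stability it is locally constant, so its Hessian is taken in the STE-surrogate sense.
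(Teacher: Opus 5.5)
Your proposal is correct and follows essentially the same route as the paper's proof. The paper likewise splits $\Phi$ into the STE-handled loss $L_Q$ and the regularizer $R(W)=\frac{\gamma}{2}\|W-q\|^2$, solves the first-order condition of $\mathrm{prox}_{\eta R}$ to obtain $\frac{1}{1+\eta\gamma}V+\frac{\eta\gamma}{1+\eta\gamma}q$, and gets the Hessian by adding $\gamma\,\id$. Your extra remarks go beyond what the paper states: that an explicit step would give $\alpha=\eta\gamma$ instead, that $\gamma \mapsto \frac{\eta\gamma}{1+\eta\gamma}$ is a bijection onto $[0,1)$, and that $\nabla^2 L_Q$ must be read in the STE-surrogate sense.
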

This statement requires the quantization grids to be locally constant, a condition that applies broadly to quantization methods. The proof is provided in Appendix \ref{sec_theoretical_interpretation}. 

\textbf{Computational overhead.}
Our method can be applied on top of existing quantization-aware training algorithms with negligible overhead. 
The re-initialization step involves simple operations like addition and multiplication. Moreover, this step is performed only every $K$ iterations, where $K$ is typically large (e.g., one-fourth of the total number of training steps), making its cost negligible compared to training. 
For noise injection, our approach samples a single noise vector at each step, using the same number of forward and backward passes as the base training method. 
We evaluate the wall-clock time for both techniques and show that each incurs less than \textbf{1}\% of the computational cost of the base training method, as detailed in Appendix \ref{sec_additional_exp}.

\section{Experiments}\label{sec_experiments}

In this section, we evaluate \acronym{} across diverse weight and activation precisions for multiple language models. First, we show that our approach accelerates the convergence of SoTA quantized training methods by up to \textbf{4}$\times$, including the challenging setting of 1-bit weights and 8-bit activations.   
Second, at the same training cost, \acronym{} delivers up to \textbf{8.8}\% improvement over leading baselines, evaluated across ten bit-width configurations and four language models. 
Further, we show that our approach applies to another quantization method with the Hadamard Transform, achieving up to \textbf{2.3}\% gains over SoTA in the 1-bit weight and 4-bit activation setting.
Finally, ablation studies confirm the contribution of both components of our approach.

\begin{table*}[t!]
  \centering
  \caption{This table reports the comparison of our approach to recent quantization methods. We evaluate quantization performance across various settings for LLaMA models, with weights quantized to 1 to 4 bits and activations to 8 to 16 bits. We report the perplexity (PPL) on WikiText2 and the average zero-shot test accuracy (Acc.) across eight QA datasets. To show relative performance, we also evaluate the full-precision (FP) pretrained model. W2A16 means 2-bit weights and 16-bit activations, and analogous notations for others. 1.58-bit means ternary quantization.
  }%
  \label{tab_performance}
  \resizebox{\textwidth}{!}{
  \begin{tabular}{lcc|cc|cc|cc|cc}
    \toprule 
    & \multicolumn{2}{c|}{LLaMA-1B, W1A16} & \multicolumn{2}{c|}{LLaMA-1B, W1.58A16} & \multicolumn{2}{c|}{LLaMA-1B, W2A16} & \multicolumn{2}{c|}{LLaMA-1B, W3A16} & \multicolumn{2}{c}{LLaMA-1B, W4A16} \\ \midrule
    Metrics & PPL ($\downarrow$) & Acc. ($\uparrow$) & PPL ($\downarrow$) & Acc. ($\uparrow$) & PPL ($\downarrow$) & Acc. ($\uparrow$) & PPL ($\downarrow$) & Acc. ($\uparrow$) & PPL ($\downarrow$) & Acc. ($\uparrow$)  \\ \midrule
    FP Model & {9.6} & {58.5}  & {9.6} & {58.5} & {9.6} & {58.5} & {9.6} & {58.5} & {9.6} & {58.5}  \\ \midrule
    RTN (PTQ) & 4.2e8  & 33.7 &  1.8e6  & 36.2 & 1.5e6  & 38.5 & 30.9 & 38.8 & 13.9 & 52.4 \\
    GPTQ (PTQ)& 3.3e8 & 32.7 & 4.6e4  & 32.8 & 3.3e2  & 36.8  & 68.6 & 41.1 & 13.4 & 52.8 \\
    AWQ (PTQ) & - & - & - & - & 2.0e5 & 36.4 & 1.5e2 & 42.0 & 12.2 & 56.4 \\
    SpinQuant (PTQ) & 2.4e8 & 33.7 & 2.2e3 & 32.6 & 46.7 & 38.3 & 12.6 & 51.9 & 10.3 & 56.5  \\
    ParetoQ & 16.9 & 51.9 & 14.0  & 54.7 & 12.5  & \textbf{56.7} & \textbf{10.9} & 57.2 & 10.3 & \textbf{58.7}  \\ \midrule
    \acronym{} & \textbf{15.3} & \textbf{52.6} & \textbf{12.9} & \textbf{55.6} & \textbf{11.9}    & 56.6 & \textbf{10.9} & \textbf{57.8} & \textbf{10.2} & 58.6  \\ \midrule \midrule
    & \multicolumn{2}{c|}{LLaMA-1B, W1A8} & \multicolumn{2}{c|}{LLaMA-1B, W1.58A8} & \multicolumn{2}{c|}{LLaMA-1B, W2A8} & \multicolumn{2}{||c|}{LLaMA-3B, W1A8} & \multicolumn{2}{c}{LLaMA-3B, W1.58A8} \\ \midrule
    FP Model & {9.6} & {58.5} & {9.6} & {58.5} & {9.6} & {58.5} & \multicolumn{1}{||c}{7.7} & 65.2  & 7.7 & 65.2 \\ \midrule
    RTN (PTQ) &  4.7e8 & 33.7 & 1.8e6 & 36.2 & 1.5e6   & 36.1 & \multicolumn{1}{||c}{7.3e7}  & 33.7  & 7.9e5 & 33.2 \\
    GPTQ (PTQ) & 3.8e8 & 31.7 & 7.5e4  & 32.7  & 3.8e4  & 32.7 & \multicolumn{1}{||c}{5.9e7}  & 32.8 & 2.7e5 & 33.1 \\
    SpinQuant (PTQ) & 3.4e8 & 32.8 & 5.8e3 & 32.7 &  3.8e2   & 34.9 & \multicolumn{1}{||c}{4.5e7} & 32.8 & 3.1e3  & 33.3 \\ 
    ParetoQ & 23.3  & 48.2  & 18.2 & 51.9 & 16.9 & 52.2 & \multicolumn{1}{||c}{15.7} & 54.0 & 13.1 & 55.9 \\ \midrule
    \acronym{} & \textbf{21.9} & \textbf{49.0} & \textbf{16.9} & \textbf{52.5} & \textbf{16.3} & \textbf{53.0} & \multicolumn{1}{||c}{\textbf{14.8}} & \textbf{55.2} & \textbf{12.2}  & \textbf{58.6} \\ 
    \bottomrule
  \end{tabular}
  }
\end{table*}

\subsection{Experimental Setup}

Our experiments evaluate quantization-aware training with sub-4-bit weight quantization, including 4, 3, 2, 1.58, and 1 bit, where 1.58-bit corresponds to ternary quantization. We further combine these weight quantization settings with activation precisions of 16, 8, and 4 bits.
 
\textbf{Datasets and models.} We perform QAT on various language models, including LLaMA-3-1B, LLaMA-3-3B, Qwen-3-1.7B, and Qwen-3-0.6B. We perform quantized training on a language modeling dataset, FineWebEdu. We train each model up to 20B tokens, typically in 240K steps.
After training, we evaluate quantized language models on several downstream tasks, following prior works. These include a language modeling dataset, WikiText2, and eight QA datasets, including ARC-easy, ARC-challenge, BoolQ, PIQA, SIQA, HellaSwag, OBQA, and WinoGrande. These are further described in Appendix \ref{sec_additional_exp}. 

\textbf{Baselines.} We compare our algorithm with existing quantization methods for language models. 
We mainly compare our algorithm against two state-of-the-art QAT methods, including ParetoQ \citep{liu2025paretoq} and QuEST \citep{panferov2025quest}. Besides, we compare with post-training quantization (PTQ) methods to illustrate the relative improvement of QAT methods. These include Round-to-Nearest (RTN), GPTQ \citep{frantar2023gptq}, AWQ \citep{lin2024awq}, and SpinQuant \citep{liu2025spinquant}. We also include the full-precision result for reference. 

\textbf{Implementations.} We implement our approach on top of existing QAT methods, adopting the same quantization functions. Specifically, we use Elastic Binarization \citep{liu2022bit} for 1-bit weights, Stretched Elastic Quant \citep{liu2025paretoq} for 1.58 and 2-bit weights, and LSQ \citep{esser2020learned} for 3 and 4-bit weights. Activations are quantized using symmetric quantization. For comparisons with QuEST, we follow its setup, incorporating the Hadamard transform, MSE-optimal fitting, and the trust gradient estimator. As in prior methods, we quantize the non-embedding weights. 

Our approach includes three key hyperparameters: the re-initialization interval $K$, the interpolation scalar $\alpha$, and the noise standard deviation $\sigma$. We vary $K$ across 40K, 60K, and 80K for a total of 240K steps; $\alpha$ between 0.1 and 0.6; and $\sigma$ between 0.0002 and 0.002. The training uses the AdamW optimizer with a learning rate in the range of $1\times10^{-5}$ to $4\times10^{-5}$. We discuss the hyperparameter tuning in Section \ref{sec_ablation} and report the hyperparameters used for each result in Appendix \ref{sec_additional_exp}.

\begin{figure*}[t]
  \centering
  \begin{minipage}[t]{0.485\textwidth}
    \centering
    \captionsetup{type=table}
    \caption{We report the quantization performance of applying our approach to Qwen models, as compared to the SoTA method, ParetoQ. We evaluated the models in sub-2-bit weights and 8-bit activations. }\label{tab_qwen}
    \resizebox{0.85\textwidth}{!}
    {\begin{tabular}{lcc|cc}
    \toprule 
     & \multicolumn{2}{c|}{Qwen-1.7B, W1A8} &  \multicolumn{2}{c}{Qwen-0.6B, W1A8} \\ \midrule
     & PPL ($\downarrow$) & Acc. ($\uparrow$) & PPL ($\downarrow$) & Acc. ($\uparrow$)\\ \midrule
    FP Model & 16.2 & 58.1 & 16.2 & 58.1  \\ \midrule
    ParetoQ &   46.5 & 42.2  & 64.0 & 41.2\\ 
    \acronym{} & \textbf{45.6} & {42.2} & \textbf{61.9} & \textbf{41.4}   \\ \midrule      & \multicolumn{2}{c|}{Qwen-1.7B, W2A8} &  \multicolumn{2}{c}{Qwen-0.6B, W2A8} \\ \midrule 
    ParetoQ & 22.2 &  47.8  & 32.0 & 43.3 \\ 
    \acronym{} & \textbf{21.8}  & \textbf{48.2} & 32.0 & \textbf{43.9} \\
    \bottomrule
      \end{tabular}}
  \end{minipage}\hfill
  \begin{minipage}[t]{0.485\textwidth}
    \centering
    \captionsetup{type=table}
    \caption{We report the quantization performance of our approach with Hadamard transform on LLaMA-1B, applied on top of another SoTA method, QuEST. We evaluate the model in sub-2-bit weights and 4-bit activations. }\label{tab_quest}
    \resizebox{0.89\textwidth}{!}
    {\begin{tabular}{lcc}
    \toprule 
     & \multicolumn{2}{c}{LLaMA-3-1B, W1A4} \\ \midrule
    & PPL ($\downarrow$) & Acc. ($\uparrow$)     \\ \midrule
    FP Model & {9.6} & 58.5 \\ \midrule
    QuEST  &   42.9 & 42.4  \\  
    \acronym{} w/ Hadamard transform& \textbf{42.3}  & \textbf{43.0}  \\ \midrule
    & \multicolumn{2}{c}{LLaMA-3-1B, W2A4} \\ \midrule
    QuEST  & 17.4 &  48.6   \\ 
    \acronym{} w/ Hadamard transform & \textbf{16.9} & \textbf{49.3}   \\
    \bottomrule
      \end{tabular}
      }
  \end{minipage}
\end{figure*}

\subsection{Experimental Results}

\textbf{Comparing training costs.} First, we compare the training cost of our approach with the state-of-the-art method, ParetoQ, for weights in 1–2 bits and activations in 8–16 bits. The speed-up rate is measured by the reduction in the number of training steps required to reach the same error.
As shown in Figure \ref{fig_compare_convergence_speed}, \acronym{} consistently outperforms ParetoQ across six bit-width settings, with larger acceleration at lower precisions. For weight-only quantization, \acronym{} achieves \textbf{1.5}–\textbf{2.8}$\times$ speed-up from 2-bit to 1-bit. With both weights and activations quantized, it delivers up to a \textbf{4}$\times$ speed-up in the challenging setting of 1-bit weights and 8-bit activations. Furthermore, \acronym{} converges to a lower training loss, thereby improving asymptotic performance relative to the current state of the art.

\textbf{Comparing quantization performance.} We present the quantization performance of low-precision weights (1–2 bits) and activations (8–16 bits) on LLaMA models in Table \ref{tab_performance}. Our approach advances the state of the art across eight weight- and activation-quantization settings. For weight-only quantization, \acronym{} surpasses the strongest baseline by up to \textbf{8.8}\% in PPL and \textbf{1.6}\% in average zero-shot accuracy, while also markedly outperforming all existing PTQ methods.
When both weights and activations are quantized, it achieves relative gains of \textbf{7.1}\% in PPL and \textbf{1.6}\% in accuracy over the best baseline. On larger models such as LLaMA-3B, \acronym{} delivers a \textbf{6.8}\% relative improvement. Consistent with the convergence results, these gains are most pronounced at lower precisions.

We observe that our approach yields larger performance gains at precisions below 3 bits. At higher bit-widths, such as 4-bit, the performance of existing quantized models is close to that of the full-precision model, leaving limited room for improvement. 
We also measure the performance gap between the quantized and full-precision model. Our approach reduces the gap by \textbf{17}\% on average compared to SoTA baselines.
We include results for various bit widths for completeness, and our approach does not degrade performance at higher precision.

\textbf{Extensions.} We extend our approach to the Qwen models and observe consistent improvements over the state-of-the-art baseline. As shown in Table \ref{tab_qwen}, \acronym{} achieves up to a \textbf{3.2}\% relative reduction in PPL and a \textbf{1.3}\% gain in zero-shot test accuracy compared to ParetoQ.

We further demonstrate the broad applicability of our approach to different quantization methods. When combined with another state-of-the-art method \citep{panferov2025quest} using Hadamard transform \citep{fast_hadamard_kernel}, \acronym{} improves performance by \textbf{2.8}\% in 1-bit weight and 4-bit activation quantization. The results are shown in Table \ref{tab_quest}.

\subsection{Ablation Studies}\label{sec_ablation}

We discuss the main hyperparameters of \acronym{}, including the interpolation scalar $\alpha$, a re-initialization interval $K$, and the standard deviation $\sigma$ in noise injection. 

\textbf{Effect of weight interpolation on training speed.} We first examine the effect of the interpolation scalar $\alpha$ in weight re-initialization. Recall that values of $\alpha$ around 0.4 tend to yield the largest increase in the maximum absolute eigenvalues of the Hessian. We vary $\alpha$ when training the LLaMA-1B model in 1-bit precision. We observe that $\alpha$ around $0.4$ leads to the fastest convergence after re-initialization, corresponding to an increase in the magnitude of Hessian eigenvalues, as shown in Figure \ref{fig_ablation_alpha_reinitialize}.
We also note that the optimal $\alpha$ can vary across bit-width settings and models. 

\textbf{Leaving out each step from the algorithm.} Further, we perform an ablation study that varies each hyperparameter of our approach for training the LLaMA-1B model in 1-bit precision.
Both components of our approach contribute to the final performance. Results are shown in Table \ref{tab_ablation}. 
Compared to training without noise ($\sigma=0$), our approach improves performance by about 4\%. Compared to training without weight re-initialization ($\alpha=0$), our approach improves performance by about 6.7\%. Moreover, we find that smaller values of $\alpha$ and larger re-initialization intervals tend to yield the best results.

\textbf{Strategies for selecting hyperparameters.} Lastly, we discuss the strategies for choosing the hyperparameters. Recall that interpolation is defined as $(1-\alpha)W + \alpha Q(W)$, which reduces $\norm{W - Q(W)}$ by a factor of $\alpha$. After two re-initializations, this distance is approximately $(1-\alpha)^2$ of its original value. Smaller values of $\alpha$ keep the weights closer to the latent weights. 
Empirically, we find that re-initializing too close to either the latent or the quantized weights yields suboptimal performance. To balance this trade-off, we choose relatively small $\alpha$ values from 0.1 to 0.4 and perform at most three re-initializations, typically every 60K or 80K steps out of 240K steps.

For the standard deviation $\sigma$ in noise injection, we find that the optimal $\sigma$ can vary across models, with $\sigma = 0.001$ as the default for LLaMA models, and $\sigma = 0.0002$ works the best for Qwen models. We report the hyperparameters used in every experiment in Table \ref{tab_hyperparams} of Appendix \ref{sec_additional_exp}.

\begin{figure*}[h!]
  \centering
  \begin{minipage}[t!]{0.40\textwidth}
    \centering
    \vspace{-0.13in}
    \includegraphics[width=0.68\textwidth]{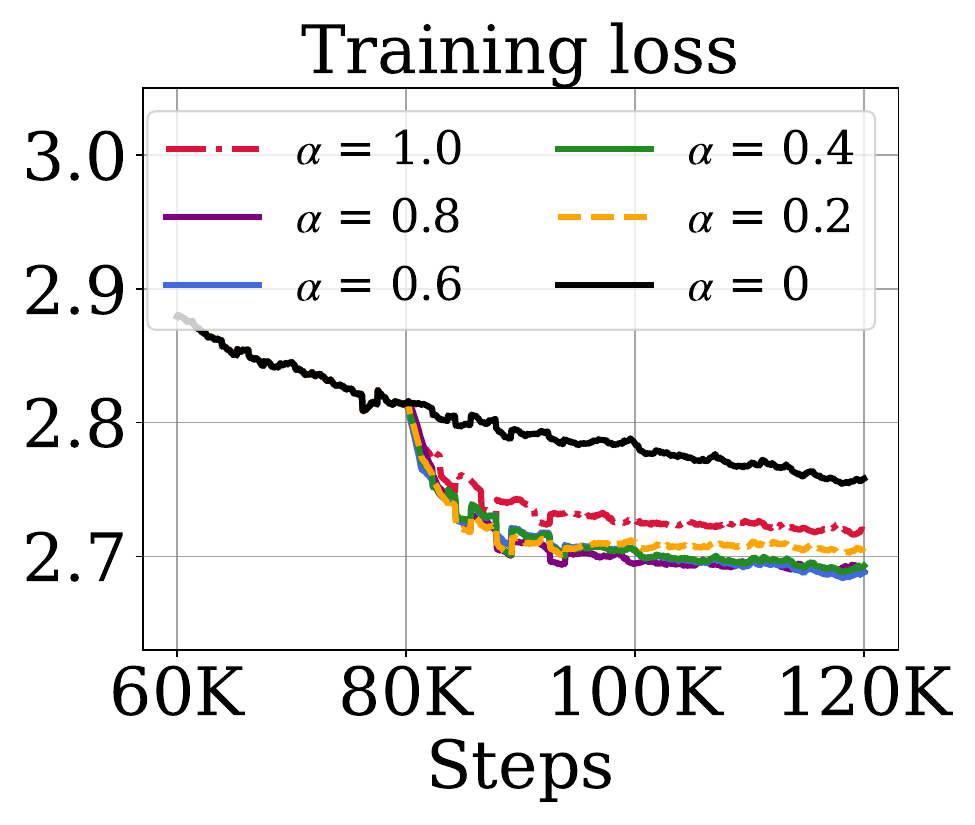}
    \caption{Convergence of training loss after re-initialization at 80K steps with various $\alpha$, evaluated in training LLaMA-3-1B with 1-bit weight quantization.}\label{fig_ablation_alpha_reinitialize}
  \end{minipage}\hfill
  \begin{minipage}[t!]{0.58\textwidth}
    \centering
    \captionsetup{type=table}
    {\small\begin{tabular}{lccc|lc}
    \toprule 
    PPL ($\downarrow$) & \multicolumn{3}{c}{Weight re-initialization} &  \multicolumn{2}{c}{Noise injection}    \\ \midrule
        & $K= $40K & $K= $60K & $K= $80K & \multicolumn{2}{c}{$\alpha=0.4$, $K=$ 60K}  \\ \midrule
     $\alpha = 0.0$ & \revision{16.5} & \revision{16.5} & \revision{16.5} & $\sigma=0$ & 16.0\\
     $\alpha = 0.2$ & 15.6 & 15.5 & 15.5 &  $\sigma=0.0005$ & 15.7\\  
     $\alpha = 0.4$ & 16.1 & \textbf{15.3} & 15.6 & $\sigma=0.001$ & \textbf{15.3} \\
     $\alpha = 0.6$ & 16.2 & 15.5 & 15.8 & $\sigma=0.002$ &  16.3 \\
     $\alpha = 0.8$ & 16.3 & 16.0 & 16.1 & $\sigma=0.004$ &  18.5 \\
    \bottomrule
      \end{tabular}}
      \vspace{0.28in}
      \caption{Results from varying hyperparameters including the interpolation scalar $\alpha$, re-initialization interval $K$, and standard deviation $\sigma$. We train LLaMA-3-1B with 1-bit weight quantization and report
      the perplexity (PPL) on WikiText-2. $\sigma=0$: no noise injection. $\alpha=0$: no weight interpolation.}\label{tab_ablation}
  \end{minipage}\hfill
\end{figure*}

\section{Related Work}

Second-order information has provided key information for designing quantization methods \cite{yao2018hessian}. For example, the quantization error can be approximated using a second-order Taylor expansion \cite{kwun2025lotion}.
OBQ \citep{frantar2022optimal} and GPTQ \citep{frantar2023gptq} design efficient algorithms to iteratively perform layerwise quantization using an estimated Hessian matrix.  QuIP \citep{chee2023quip} proposes incoherence processing that minimizes a proxy objective derived from the second-order approximation through adaptive rounding. In these works, the loss Hessian is typically estimated from the second-order moments of input features. By computing Hessian top eigenvalues with Hessian-vector products \citep{yao2020pyhessian}, HAWQ \citep{dong2019hawq} proposes a mixed-precision quantization method that uses the top eigenvalues as the layerwise sensitivity measure. Further, using the Hessian trace as a sensitivity measure has improved mixed-precision performance \citep{dong2020hawq}. 
In contrast, our work examines the eigenvalue distribution of the Hessian during quantized training, revealing behavior distinct from that observed in other deep learning settings \citep{yao2018hessian}. Our analysis suggests that the key challenge in quantization-aware training is addressing the saddle-point problem. In particular, Hessian spectral statistics are closely connected to the generalization and influence estimation of deep neural networks \citep{ju2022robust,zhang2023noise,zhang2025linear,zhang2026efficient}.
See a recent review article for further discussions from a PAC-Bayes data-dependent perspective \cite{wilson2025deep}.

Quantized training has been widely used for low-bit quantization \citep{yin2024modulora}. %
It has also been applied to train a gated quantized variational autoencoder to design a learnable tokenizer in LLMs \citep{datta2025gq}. 
One focus of prior work is improving gradient estimation in low-precision settings. Quant-Noise \citep{fan2021training} proposes to only quantize a random subset of weights at each training iteration.
\citet{fifty2024restructuring} propose a rotation trick that rotates and linearly rescales the gradients to improve gradient estimation for vector quantization. 
For further references, see the survey of \citet{gholami2022survey}.

Another focus in quantization-aware training is to design alternative formulations with regularization to improve stability and establish convergence guarantees. ProxQuant \citep{bai2018proxquant} reformulates a regularized learning problem via proximal gradient methods that apply a prox operator in between stochastic gradient steps on the latent weight. 
LOTION \citep{kwun2025lotion} proposes a smoothing framework that reformulates quantized loss as the expectation of the loss under randomized-rounding noise, which can be interpreted as a second-order curvature-aware regularizer.  
CAGE \citep{tabesh2025cage} proposes a multi-objective formulation to balance the training loss with the distance between latent and quantized weights, regularizing the weights toward the quantization grid. In contrast, we analyze quantized training by examining second-order gradients and connecting convergence to saddle-point problems. Furthermore, our weight interpolation technique can be interpreted as a proximal update that regularizes the distance between latent and quantized weights.

An active line of research has focused on designing optimizers to accelerate model training. 
One approach leverages layerwise adaptive learning rates with large batch sizes \citep{you2020large}. Another direction is to improve the preconditioner design in Adam, as in Muon or its variants. %
Recent empirical analyses show that using the full Gauss-Newton matrix as a preconditioner substantially outperforms optimizers that estimate the second-order structure \cite{abreu2025potential}. %

\section{Conclusion}

This work studies the slow convergence in quantization-aware training for language models. Our analysis of the loss Hessian revealed that model weights tend to get stuck in flat regions near saddle points, especially at lower precision. This insight led us to develop a simple method to accelerate quantization-aware training by combining weight reinitialization with noise injection. Our approach demonstrates significant speedup and improved quantization performance compared to various methods. These findings provide an improved understanding of quantization-aware training from an optimization perspective and open up many new directions for designing scalable training methods for LLMs in low-precision.

\section*{Acknowledgement}
 
The majority of this work was completed during Dongyue Li's internship at Meta.
The work of Dongyue Li, Zhenshuo Zhang, and Hongyang Zhang is also partially supported by NSF award IIS-2412008 and a startup grant from Northeastern University. Any views or opinions expressed herein are solely those of the authors listed, and may differ from those expressed by the National Science Foundation.

%% file: appendix.tex
\section{Omitted Details of Our Approach}\label{sec_approach_details}

\subsection{Notations and Terminologies}

To precisely describe the concepts used in this paper, we introduce the notations and definitions following the terminology from \cite{jin2017escape}.   
For a function $f:\mathbb{R}^d \to \mathbb{R}$, we use 
$\nabla f(\cdot)$ and $\nabla^2 f(\cdot)$ to denote its gradient and Hessian.  For vectors we use $\|\cdot\|$ to denote the $\ell_2$-norm.  
We use $\lambda_{\max}(\cdot), \lambda_{\min}(\cdot), \lambda_i(\cdot)$ denote the maximum, minimum, and $i$-th eigenvalues.

We begin by formalizing the smoothness property of functions, which ensures that the gradient does not change too rapidly.
\begin{definition}[$C$-smooth]
A differentiable function $f:\mathbb{R}^d \to \mathbb{R}$ is said to be 
$C$-smooth (or $C$-gradient Lipschitz) if
\[
\bignorm{\nabla f(x_1) - \nabla f(x_2)} \leq C  \bignorm{x_1 - x_2},
~ \forall\, x_1 \in \mathbb{R}^d, x_2 \in \mathbb{R}^d.
\]
\end{definition}
Then, we define stationary points, which are critical in analyzing the convergence of gradient descent methods.

\begin{definition}[First-order stationary point]
For a differentiable function $f(\cdot)$, we say that $x$ is a 
\emph{first-order stationary point} if $\|\nabla f(x)\| = 0$. 
We also say $x$ is an $\varepsilon$-\emph{first-order stationary point} 
if
\[
\bignorm{\nabla f(x)} \leq \varepsilon.
\]
\end{definition}
A first-order stationary point can be either a local
minimum, a saddle point, or a local maximum. We define local minima as follows.

\begin{definition}[Local minimum]
For a differentiable function $f(\cdot)$, a point $x$ is a 
\emph{local minimum} if it is a first-order stationary point and there exists $\delta > 0$ such that 
\[
f(x) \leq f(y), ~ \forall\, y \in B(x, \delta),
\]
where $B(x, \delta)$ denotes the ball of radius $\delta$ centered at $x$.
\end{definition}
Finally, not all stationary points are desirable in optimization. In particular, a point may be stationary but not a local minimum. We define saddle points as follows.

\begin{definition}[Saddle point]
For a differentiable function $f(\cdot)$, a point $x$ is a 
\emph{saddle point} if it is a first-order stationary point 
but not a local minimum. 
For a twice-differentiable function $f(\cdot)$, 
a saddle point $x$ is \emph{strict} (or non-degenerate) if 
$\lambda_{\min}(\nabla^2 f(x)) < 0$.
\end{definition}

In our evaluations, we observe that the model weights in quantization-aware training converge to an approximate first-order stationary point, while the corresponding loss Hessian exhibits more than one negative eigenvalue. This indicates that the model weights are around saddle points.

Additionally, near a stationary point, the local convergence rate of gradient-based methods is strictly governed by the eigenvalue spectrum of the loss Hessian \citep{nesterov2018lectures}. Therefore, we measure the Hessian spectrum and the Hessian eigenvalues with the largest absolute values to monitor the convergence of quantized training.

\subsection{Estimating the Hessian spectrum} \label{sec_slq}

In this section, we briefly describe the Stochastic Lanczos Quadrature (SLQ) algorithm \citep{bai1996some}, which we use to estimate the empirical eigenvalue distribution of the loss Hessian. 
The method samples the spectrum using random probe vectors and computes the spectrum via the Lanczos algorithm, reducing it to a small tridiagonal matrix whose eigenvalues approximate those of the Hessian.

Let $H$ denote a large symmetric matrix (e.g., the Hessian of a loss function with respect to model weights). The spectral density is defined as:
$$
p(\lambda) = \frac{1}{n} \sum_{i=1}^n \delta(\lambda - \lambda_i),
$$
where $\lambda_i$ are the eigenvalues of $H$ and $\delta$ denotes the Dirac delta function. Since computing the Hessian matrix is infeasible in large language models, the method uses Hessian–vector products to estimate the Hessian spectrum.

The algorithm works by framing the spectral density as a matrix trace, which can be approximated via Hutchinson’s method using random probe vectors. Specifically, for a given random vector, SLQ applies $k$ iterations of the Lanczos algorithm to compress the local curvature information into a small tridiagonal matrix. Diagonalizing this much smaller matrix yields a set of approximated eigenvalues $\{\theta_i\}$ and corresponding weights $\{w_i\}$.

By repeating this procedure across $m$ random Rademacher vectors $v_j$, we obtain $m$ sets of eigenvalues $\theta_{i}^{(j)}$ and weights $w_{i}^{(j)}$. These are aggregated to form the final approximation of the overall Hessian spectrum:
\begin{align}\label{eq_spectral_density}
\frac{1}{m} \sum_{j=1}^m \sum_{i=1}^k w_{i}^{(j)} \delta(\lambda - \theta_{i}^{(j)}).
\end{align}
Finally, one can convolve this discrete approximation with a Gaussian kernel to obtain a smooth, continuous estimate of the spectral density. We summarize the procedure in Algorithm \ref{alg:slq}.

In our evaluation, we estimate the empirical eigenvalue distribution of the loss Hessian using the model's loss on $2 \times 10^{5}$ tokens from the training dataset. We apply the SLQ method with $m = 200$ random vectors and $k=100$ steps. 
Then, we estimate the empirical distribution obtained from the $\{ \theta_{i}^{(j)}, w_{i}^{(j)} \}$, where the weights are normalized to ensure the total probability mass equals one. We report the resulting discrete distribution without additional kernel smoothing.
The Hessian-vector products are implemented using existing PyTorch functions.

\begin{algorithm}[t!]
\caption{Stochastic Lanczos Quadrature (SLQ) for estimating the Hessian spectrum}
\label{alg:slq}
\begin{algorithmic}[1]
\STATE \textbf{Input:} A symmetric Hessian operator $H \in \mathbb{R}^{d \times d}$, accessed only through Hessian--vector products
\STATE \textbf{Require:} Number of probe vectors $m$, number of Lanczos steps $k$
\FOR{$j = 1, 2, \dots, m$}
    \STATE Sample a Rademacher vector $v^{(j)} \in \{-1,+1\}^d$ and set $q_1 \gets v^{(j)}/\|v^{(j)}\|_2$
    \STATE Initialize $q_0 \gets 0$ and $\beta_0 \gets 0$
    \FOR{$i = 1, 2, \dots, k$}
        \STATE $z \gets H q_i - \beta_{i-1} q_{i-1}$ 
        \STATE $\alpha_i \gets q_i^\top z$
        \STATE $\beta_i \gets \norm{z - \alpha_i q_i}_2$
        \STATE $q_{i+1} \gets (z - \alpha_i q_i) / \beta_i$
    \ENDFOR
    \STATE Form the tridiagonal matrix $T_j \in \mathbb{R}^{k \times k}$ with diagonal $(\alpha_1, \dots, \alpha_k)$ and off-diagonal $(\beta_1, \dots, \beta_{k-1})$
    \STATE Compute the eigendecomposition $T_j = U_j  \mathrm{diag}(\theta_1^{(j)}, \dots, \theta_k^{(j)})  U_j^\top$
    \STATE Set $w_i^{(j)} \gets \big(U_j[1, i]\big)^2$ for $i = 1, \dots, k$ 
\ENDFOR
\STATE \textbf{Return} A set $\{(\theta_i^{(j)}, w_i^{(j)})\}_{i=1,\dots,k; j=1,\dots,m}$ that approximate the spectral density as in Equation \ref{eq_spectral_density}
\end{algorithmic}
\end{algorithm}

\subsection{Theoretical Connection}\label{sec_theoretical_interpretation}

We now provide a theoretical interpretation of the weight interpolation step through the lens of proximal optimization \cite{bai2018proxquant}. We show that the weight interpolation step is equivalent to a proximal update step in a regularized training objective.

\begin{proof}[Proof of Proposition \ref{prop_connect}]
We consider optimizing the objective $\Phi(W)$ using proximal gradient descent. We decompose the objective into the loss component $L_Q(W)$, which is optimized via the Straight-Through Estimator, and the regularization component $R(W) = \frac{\gamma}{2} \norm{W - q}_2^2$, where $q = Q(W)$ is the target quantization grid point. We assume $q$ is locally constant within the current quantization cell.

A proximal gradient descent update at iteration $t$ consists of two components: 
\begin{itemize}
    \item Gradient update step: we conduct a gradient step on the loss $L_Q(W)$ using the learning rate $\eta$: 
    \[V = W_t - \eta \nabla L_Q(W_t).\]
    \item Proximal update step: we apply the proximal operator of the scaled regularizer $\eta R(W)$ to the intermediate weight $V$:
    \[W_{t+1} = \text{prox}_{\eta R}(V) = \arg\min_W \left( R(W) + \frac{1}{2\eta} \|W - V\|_2^2 \right).\]
\end{itemize} 

Substituting the $\ell_2$ penalty $R(W)$ into the proximal operator yields:
\begin{align*}
W_{t+1} = \arg\min_W \left( \frac{\gamma}{2} \norm{W - q}_2^2 + \frac{1}{2\eta} \norm{W - V}_2^2 \right).    
\end{align*}
As the objective is a convex quadratic function, this has a unique global minimum where 
\begin{align*}
\nabla_W \left( \frac{\gamma}{2} \|W - q\|_2^2 + \frac{1}{2\eta} \|W - V\|_2^2 \right) = \gamma(W - q) + \frac{1}{\eta}(W - V) = 0    
\end{align*}
Therefore, we have
\begin{align*}
W = \left( \frac{1}{1 + \eta \gamma} \right) V + \left( \frac{\eta \gamma}{1 + \eta \gamma} \right) q. 
\end{align*}

When setting the interpolation scalar $\alpha$ as $\frac{\eta \gamma}{1 + \eta \gamma}$. Substituting $\alpha$ and $1 - \alpha$ back into our equation yields the exact weight interpolation step:
\begin{align*}
W_{t+1} = (1 - \alpha) V + \alpha q.     
\end{align*}
This shows that the weight interpolation is mathematically equivalent to a proximal update applied to the $\ell_2$ penalty.
Additionally, the second derivative of the quadratic penalty term is $\gamma I$, which yields
\begin{align*}
\nabla^2 \Phi(W) = \nabla^2 L_Q(W) + \gamma \id.     
\end{align*}
This, in turn, increases the magnitude of the Hessian eigenvalues.
\end{proof}

\subsection{Extensions}

Next, we extend our approach to incorporate the Hadamard Transform for quantization-aware training. In this method, a Hadamard matrix is applied to weights and activations before quantization at each layer. We denote by $H$ a block-diagonal matrix whose diagonal blocks are the Hadamard matrices of each layer. 
To integrate it, we modify the re-initialization step by multiplying the interpolated weights by the inverse Hadamard matrix, which can be efficiently computed using the Hadamard matrix's transpose. The procedure is described in Algorithm \ref{alg_ours_extension}.

\begin{algorithm}[t!]
    	\caption{\acronym{} with the Hadamard Transform}\label{alg_ours_extension}
        \textbf{Input}: Initialization $W_0\in\real^d$, a quantization function $Q(\cdot)$, a language model $f_W$, a matrix $H$ of $d$ dimension for the Hadamard Transform\\
        \textbf{Require}: A re-initialization interval $K$, an interpolation scalar $\alpha$, standard deviation $\sigma$, the number of iterations $T$ and learning rates $\eta$ \\
        \textbf{Output: } The trained latent weights $W_T$  
        \begin{algorithmic}[1] %
            \FOR{$i = 0, 1, \dots, T-1$}
                \STATE $U_i \leftarrow$ Sample a random noise from $\mathcal{N}(0, \sigma^2 \id_d)$ 
                \STATE $W_{i + 1} \leftarrow W_i - {\eta} H^{\top} \nabla_{Q(H(W_i+U_i))} \hat{L}_{Q(H(W_i+U_i))}$ \LineComment{The HT is also applied to activations}
                \IF{$i+1 (\bmod~K)$ is zero}
                \STATE $W_{i+1} \leftarrow H^\top \left( (1-\alpha) HW_{i+1} + \alpha Q(H W_{i+1}) \right) $
                \ENDIF
            \ENDFOR
        \end{algorithmic}
    \end{algorithm}

\section{Omitted Experiments}\label{sec_additional_exp}

\subsection{Implementation}

Specifically, we use Elastic Binarization \citep{liu2022bit} for 1-bit weights, Stretched Elastic Quant \citep{liu2025paretoq} for 1.58 and 2-bit weights, and LSQ \citep{esser2020learned} for 3 and 4-bit weights. Activations are quantized using symmetric quantization. For comparisons with QuEST, we follow its setup, incorporating the Hadamard Transform, MSE-optimal fitting, and the trust gradient estimator. As in prior methods, we quantize the non-embedding weights.

To facilitate the reproduction of the experimental results presented in this paper, we provide a detailed description of the algorithmic procedure in Section \ref{sec_approach}. The datasets, models, and hyperparameter-tuning strategies used in our experiments are discussed in Section \ref{sec_experiments}. All datasets and models used in this work are publicly accessible online. Their respective sources are documented in Table \ref{tab_sources} of Appendix \ref{sec_additional_exp}, and the exact hyperparameters corresponding to each reported result are provided in Table \ref{tab_hyperparams} of Appendix \ref{sec_additional_exp}. 

\subsection{Evaluation of Hessian Spectrum}

Figure \ref{fig_spectrum_4bit} shows the empirical eigenvalue distributions of the loss Hessian during 4-bit quantization-aware training. The results align with those observed for 3-bit quantization. We find that most eigenvalues cluster near zero, and after 80K training steps, the maximum absolute eigenvalue falls below 10.

\begin{figure}[t!]
  \centering
  \begin{subfigure}[b]{0.32\textwidth}
    \includegraphics[width=\textwidth]{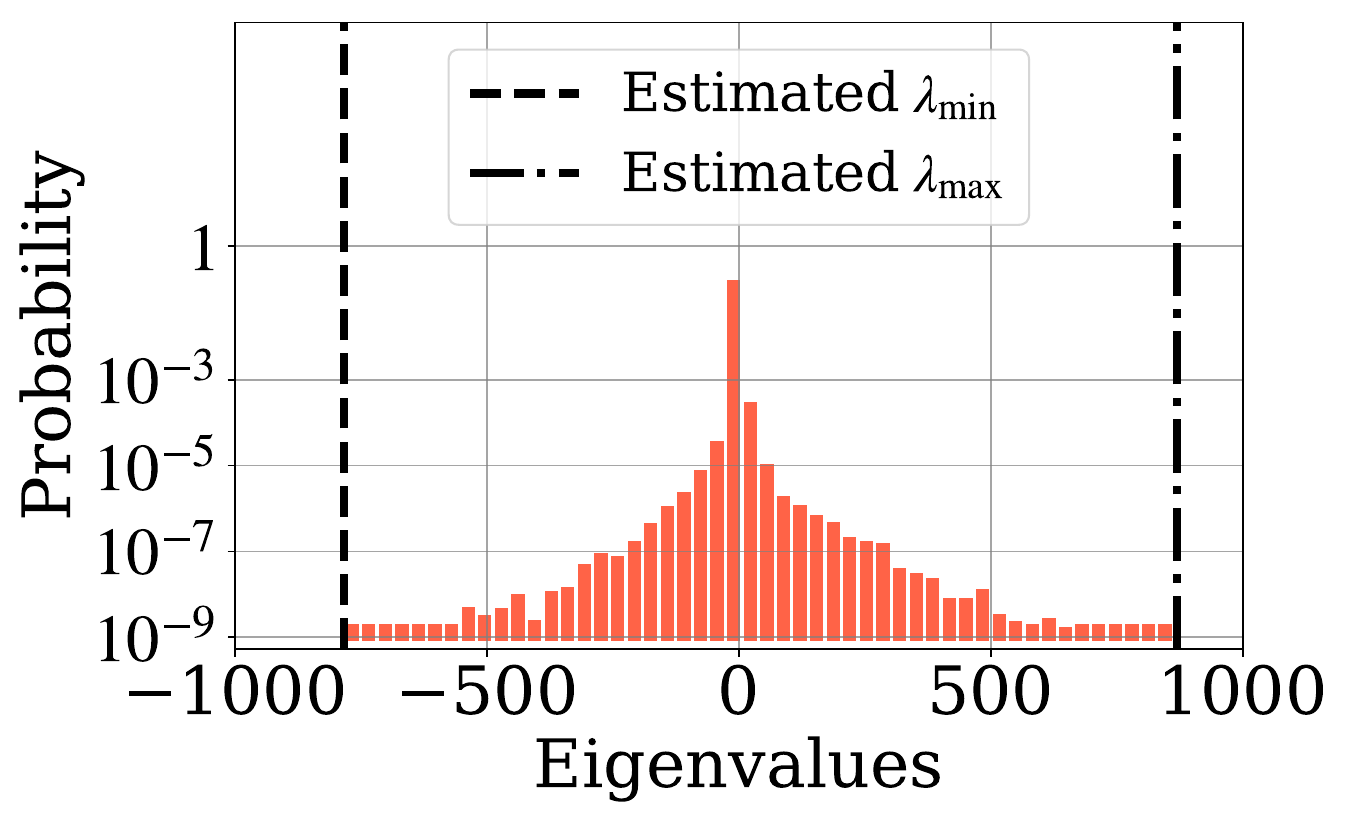}
    \subcaption{4-bit, Step 0}
  \end{subfigure}
  \begin{subfigure}[b]{0.32\textwidth}
    \includegraphics[width=\textwidth]{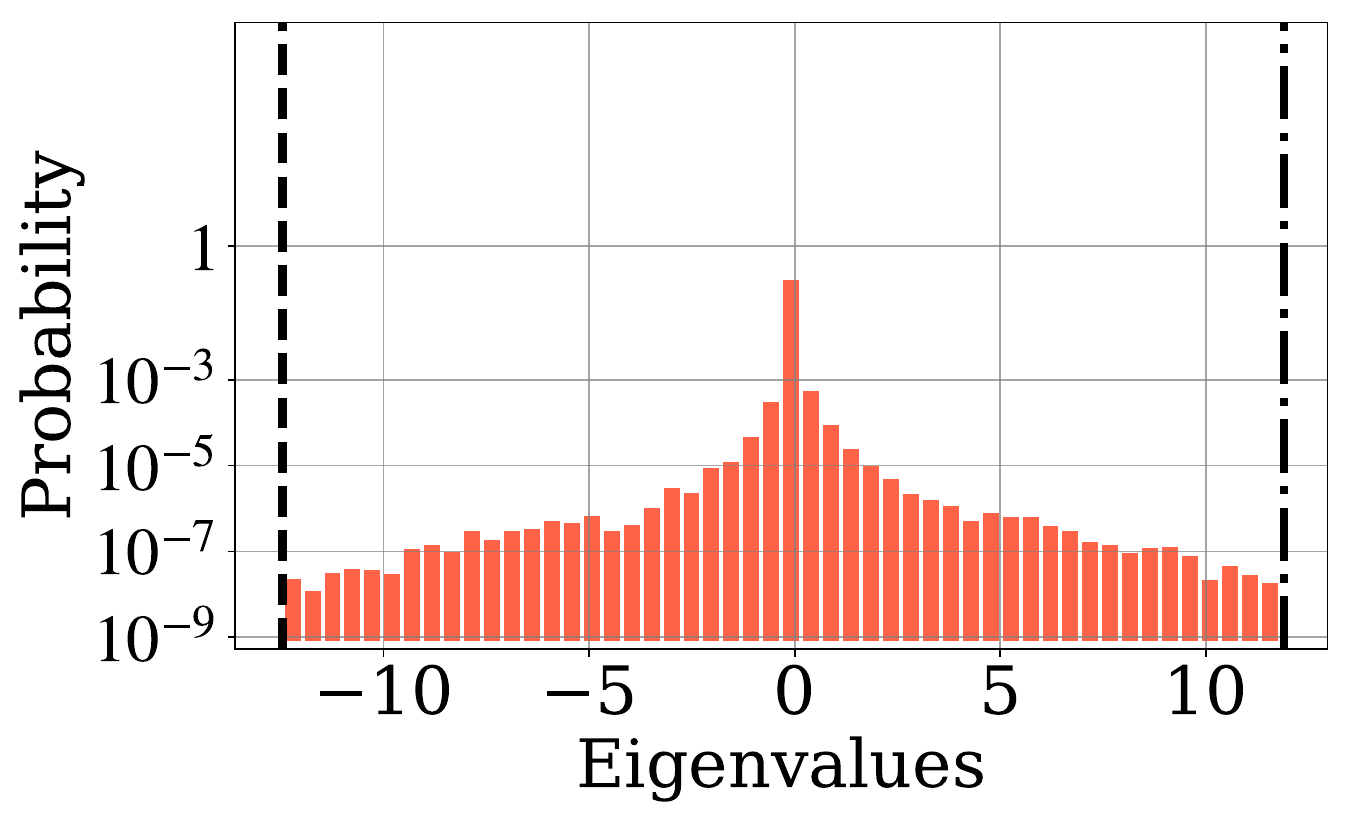}
    \subcaption{4-bit, Step 10K}
  \end{subfigure}
  \begin{subfigure}[b]{0.32\textwidth}
    \includegraphics[width=\textwidth]{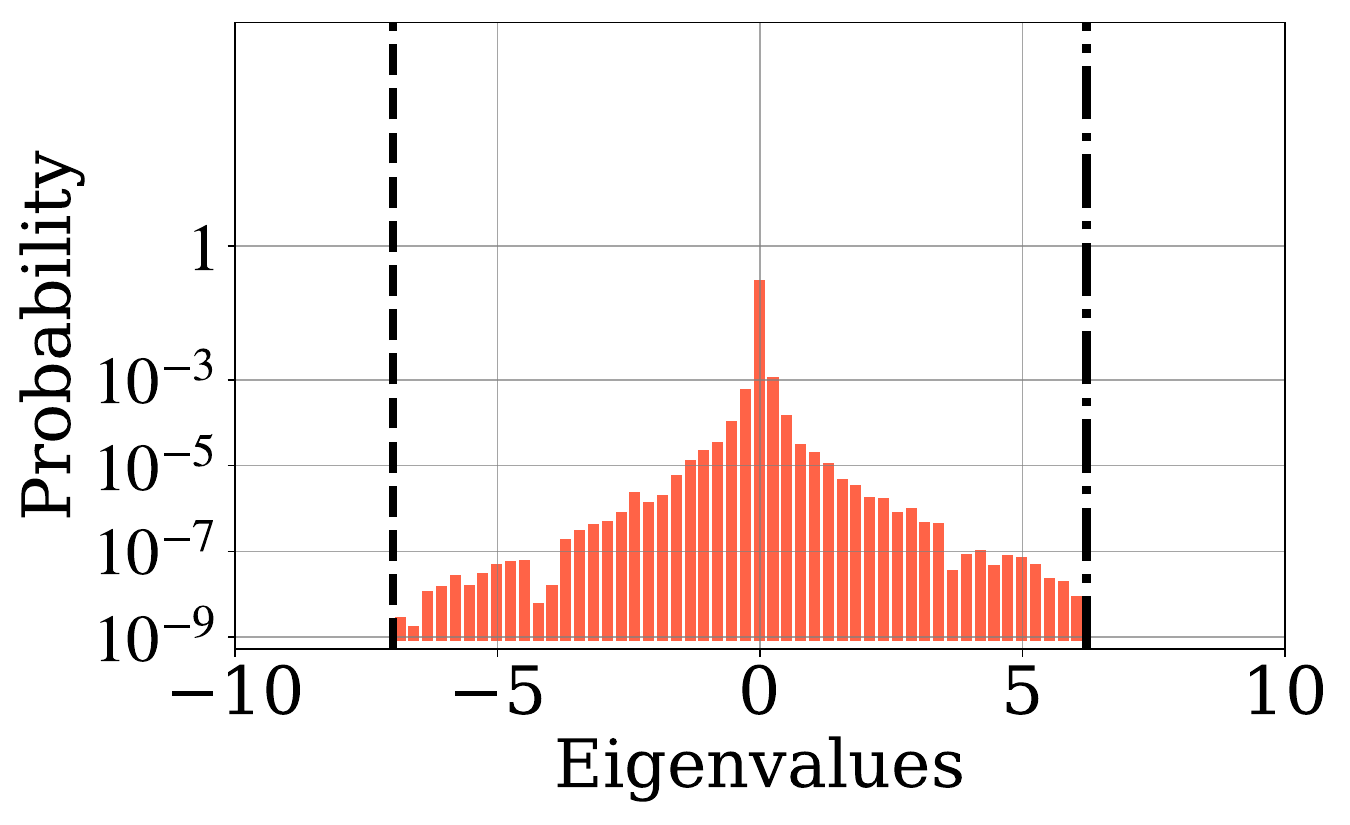}
    \subcaption{4-bit, Step 80K}
  \end{subfigure}
  \caption{We plot the estimated Hessian eigenvalue distribution of the model weights at different steps of 4-bit QAT training. The $x$-axis represents the eigenvalues, and the $y$-axis shows their probability mass on a log scale.
  }
  \label{fig_spectrum_4bit}
\end{figure}

Second, we illustrate the eigenvalue distribution of the weights in the transformer layers and the embedding layer, using the full-precision pretrained model. 
As shown in Figure \ref{fig_spectrum_fp_model}, we observe that in the transformer layers, the Hessian shows a balanced mix of negative and positive eigenvalues, with larger magnitudes at lower precision, whereas the embedding layer consistently exhibits non-negative eigenvalues.

\begin{figure}[t!]
  \centering
  \begin{subfigure}[b]{0.32\textwidth}
    \includegraphics[width=\textwidth]{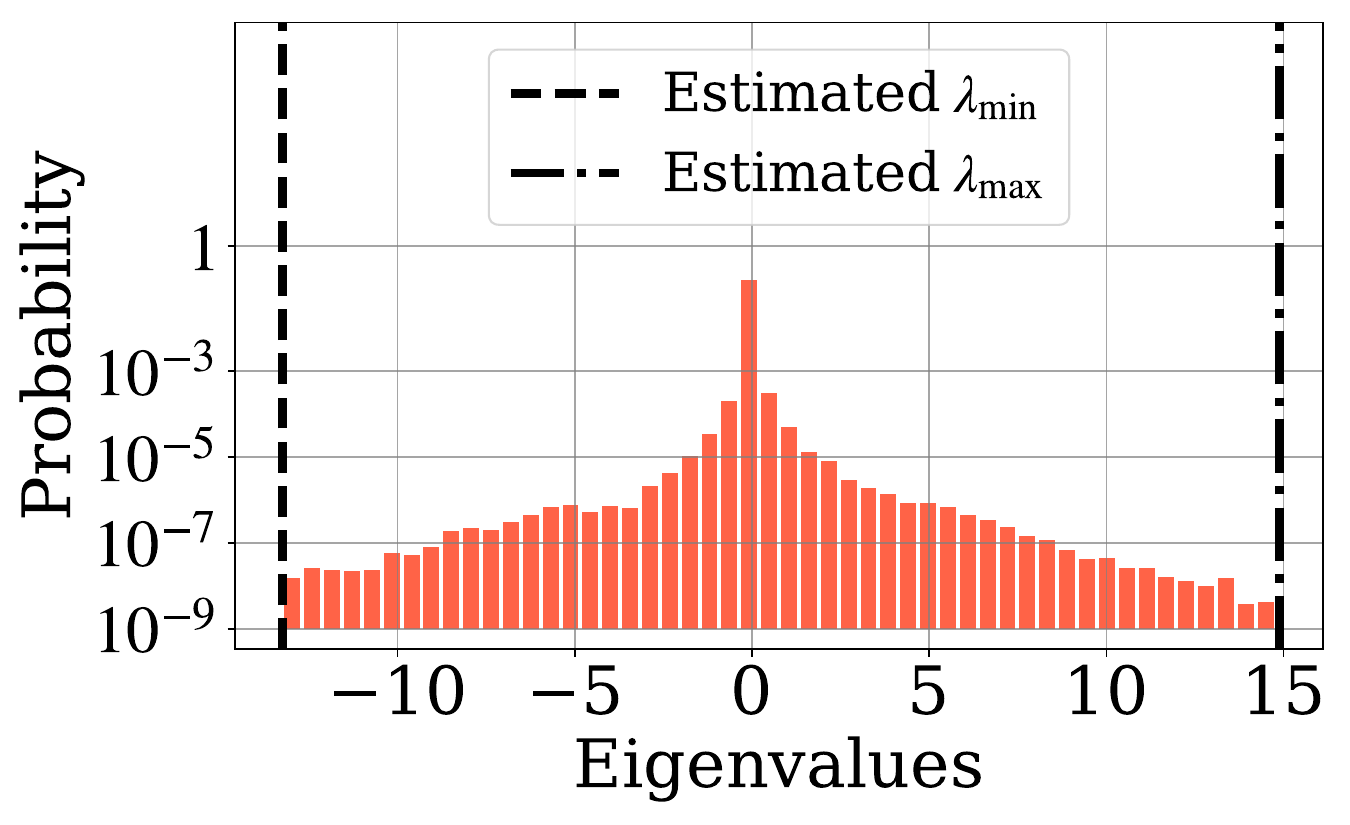}
    \subcaption{16-bit, Transformer layers}
  \end{subfigure} \hspace{0.1\textwidth}
  \begin{subfigure}[b]{0.32\textwidth}
    \includegraphics[width=\textwidth]{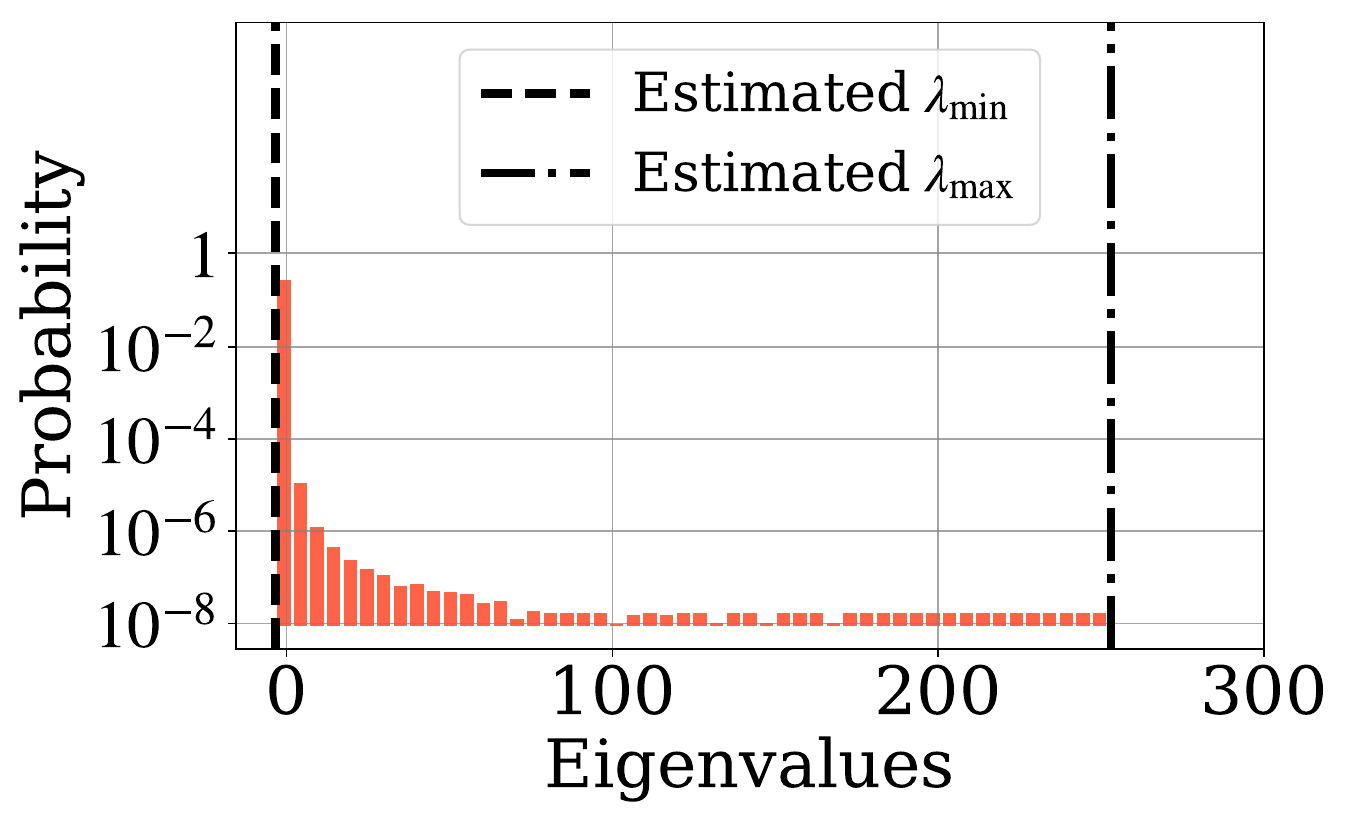}
    \subcaption{16-bit, Embedding layers}
  \end{subfigure}
  \caption{We plot the estimated Hessian eigenvalue distribution of the full-precision pretrained model. The eigenvalues have larger magnitudes compared to those in lower-precision settings. 
  The $x$-axis represents the eigenvalues, and the $y$-axis shows their probability mass on a log scale.
  }
  \label{fig_spectrum_fp_model}
\end{figure}

Third, we present the empirical loss Hessian eigenvalue distributions for the embedding layer weights in Figure \ref{fig_spectrum_embedding_layers}. Following common practice, we do not quantize the embedding layer. Interestingly, after training, its eigenvalues remain non-negative, unlike those of the transformer layers. Moreover, we observe that, at lower bit precisions, both the maximum eigenvalue and the trace (sum of eigenvalues) decrease.

\begin{figure}[t!]
  \centering
  \begin{subfigure}[b]{0.32\textwidth}
    \includegraphics[width=\textwidth]{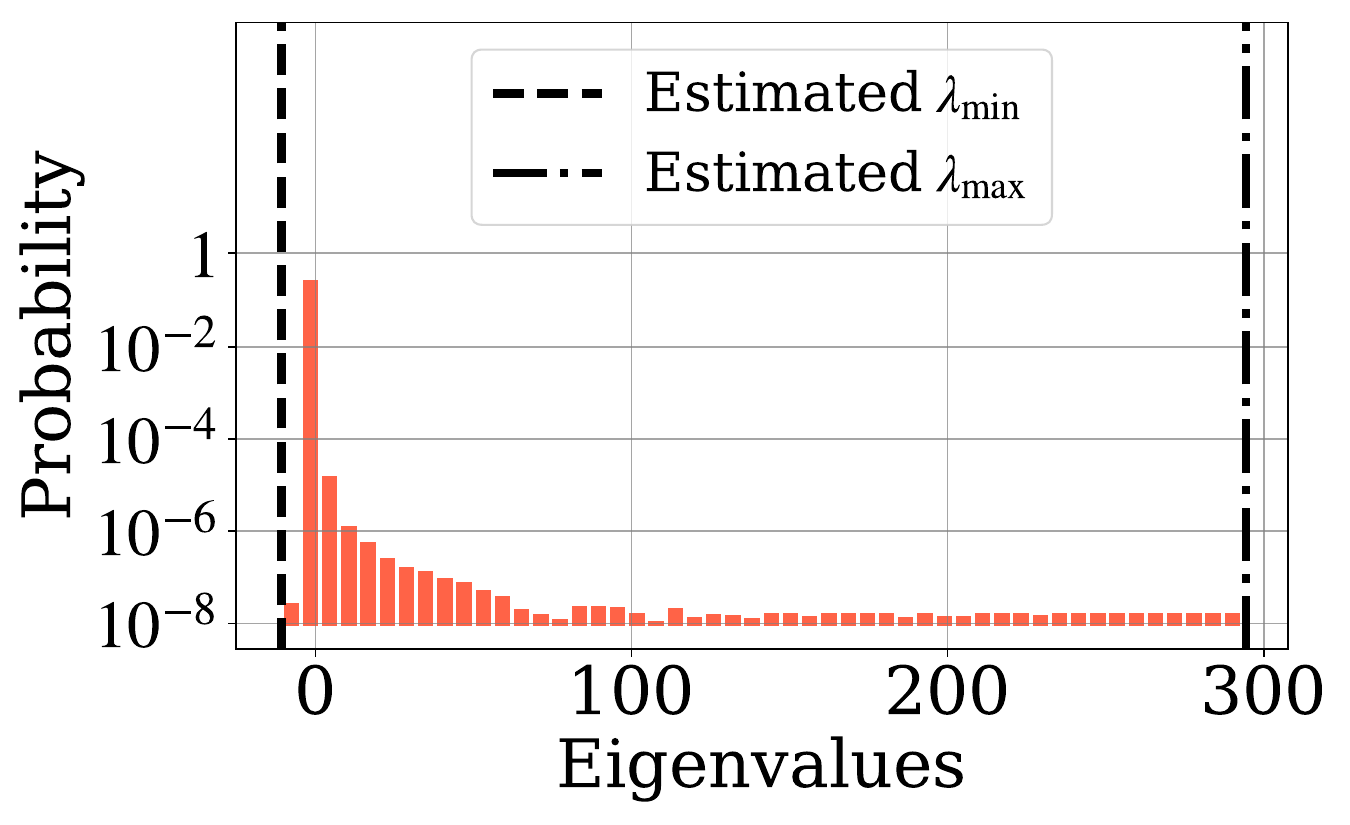}
    \subcaption{4-bit, Step 0}
  \end{subfigure}
  \begin{subfigure}[b]{0.32\textwidth}
    \includegraphics[width=\textwidth]{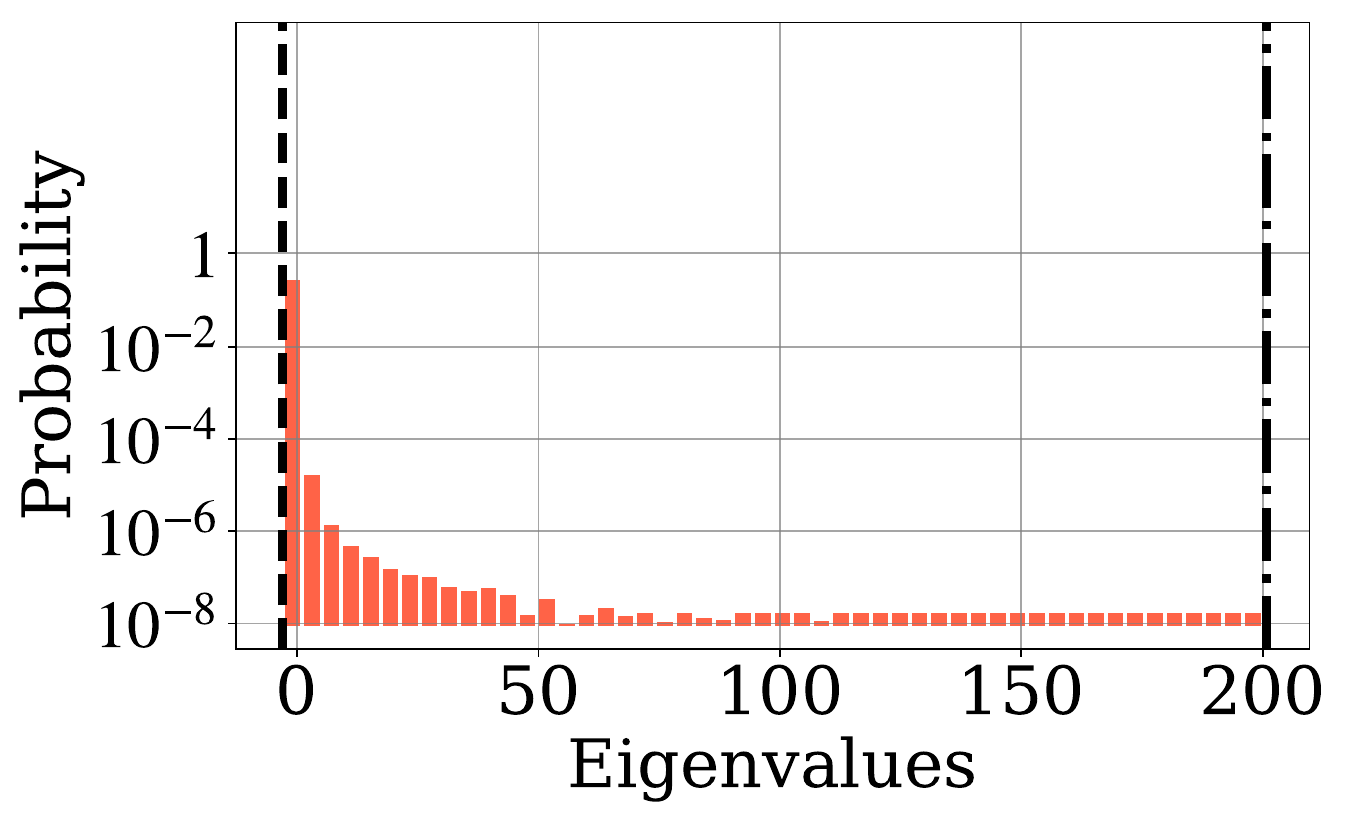}
    \subcaption{4-bit, Step 10K}
  \end{subfigure}
  \begin{subfigure}[b]{0.32\textwidth}
    \includegraphics[width=\textwidth]{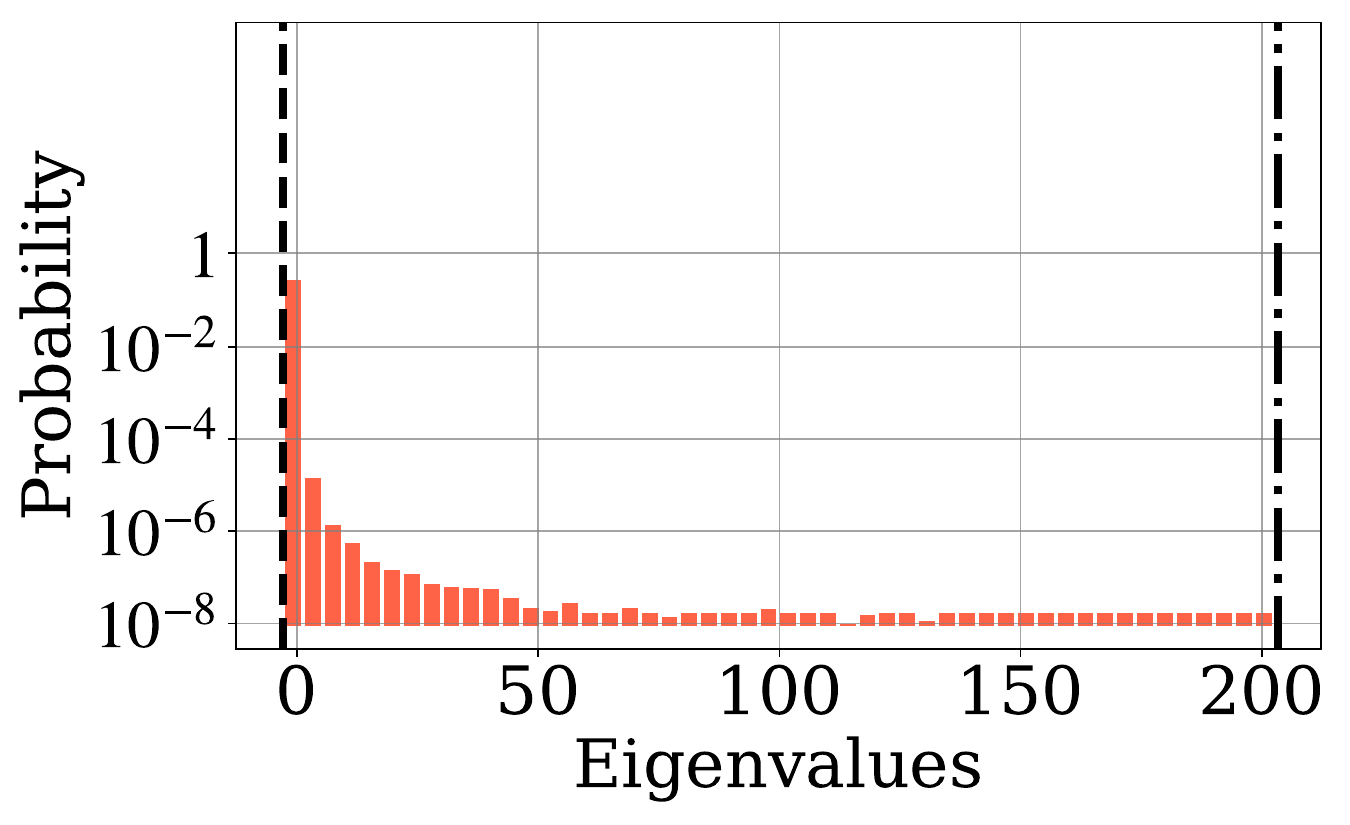}
    \subcaption{4-bit, Step 80K}
  \end{subfigure}
  \vspace{0.05in}
  \begin{subfigure}[b]{0.32\textwidth}
    \includegraphics[width=\textwidth]{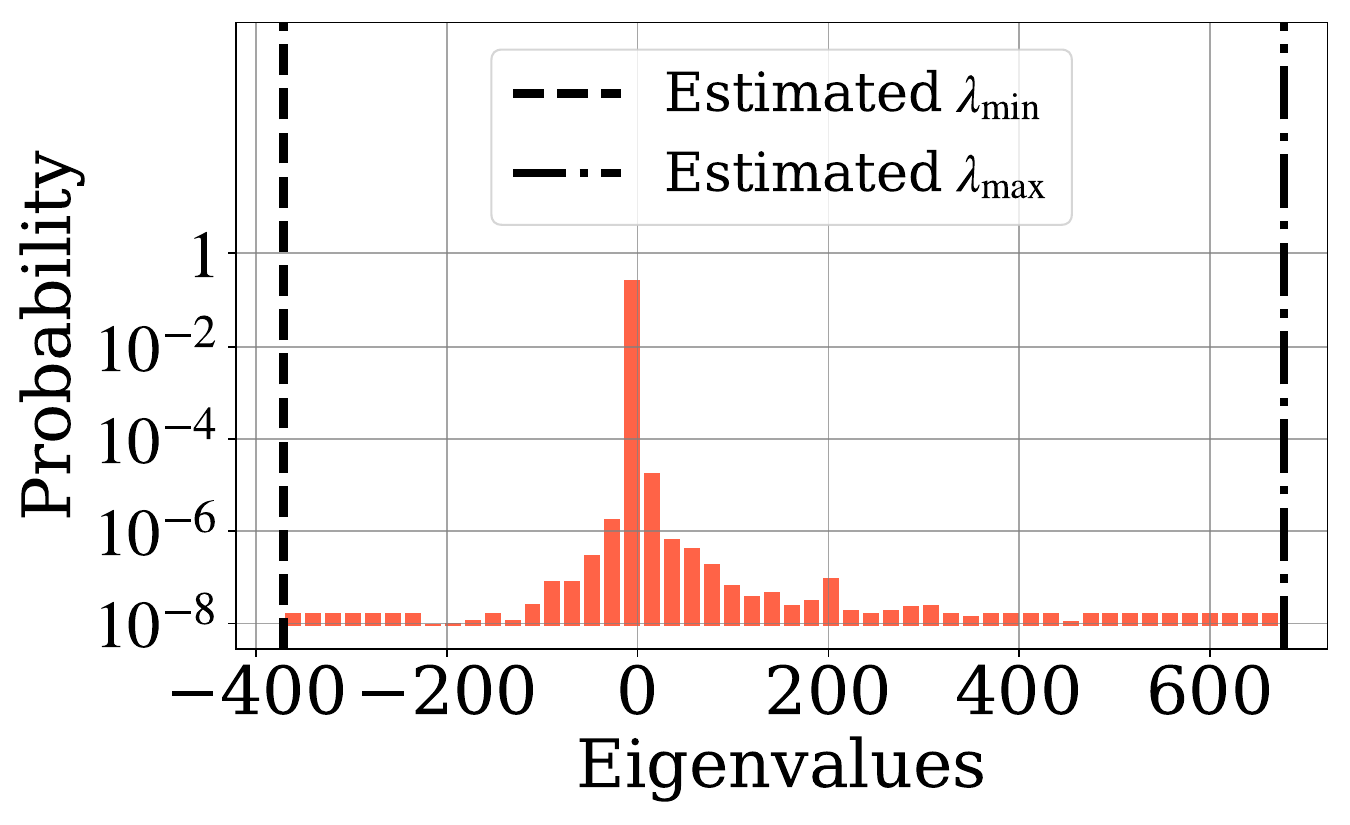}
    \subcaption{3-bit, Step 0}
  \end{subfigure}
  \begin{subfigure}[b]{0.32\textwidth}
    \includegraphics[width=\textwidth]{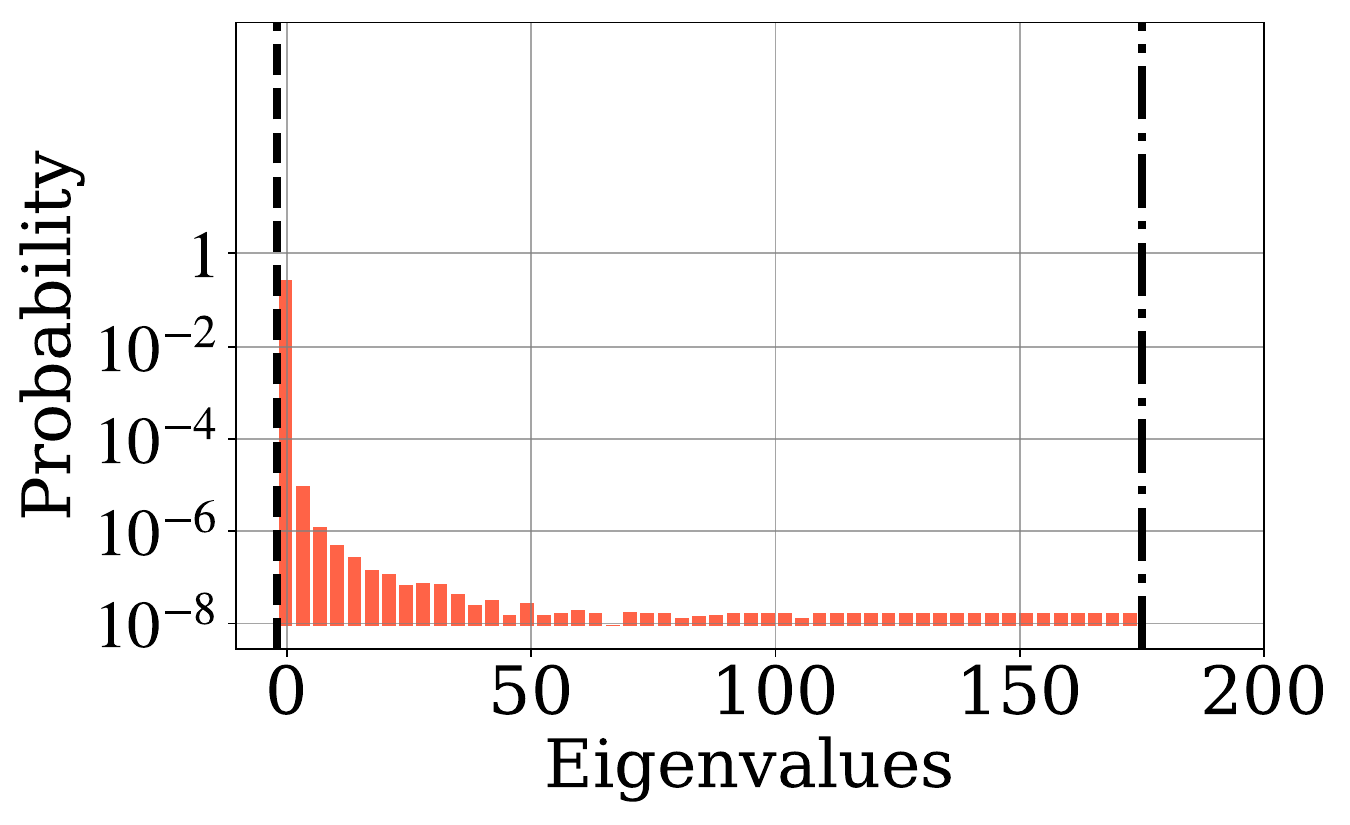}
    \subcaption{3-bit, Step 10K}
  \end{subfigure}
  \begin{subfigure}[b]{0.32\textwidth}
    \includegraphics[width=\textwidth]{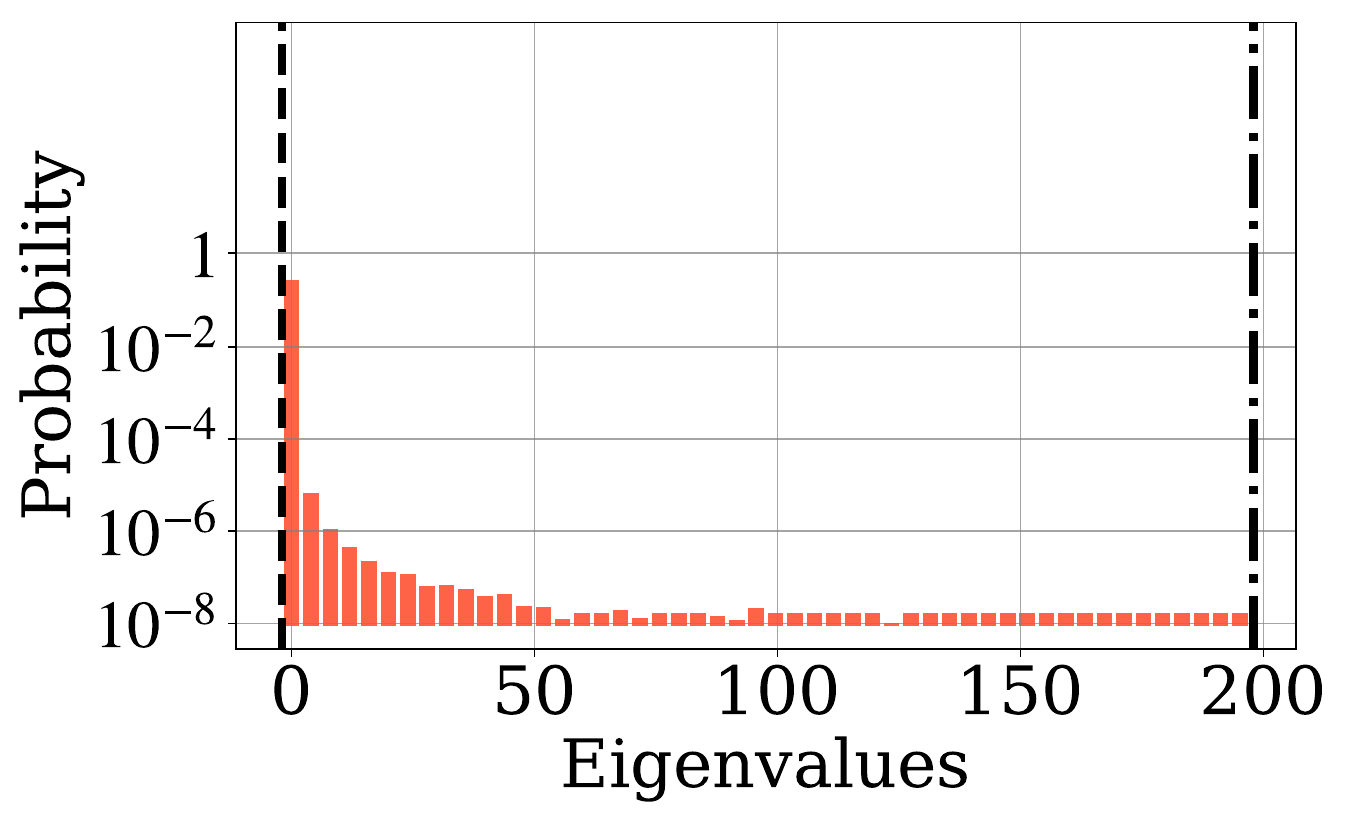}
    \subcaption{3-bit, Step 80K}
  \end{subfigure}
  \vspace{0.05in}
  \begin{subfigure}[b]{0.32\textwidth}
    \includegraphics[width=\textwidth]{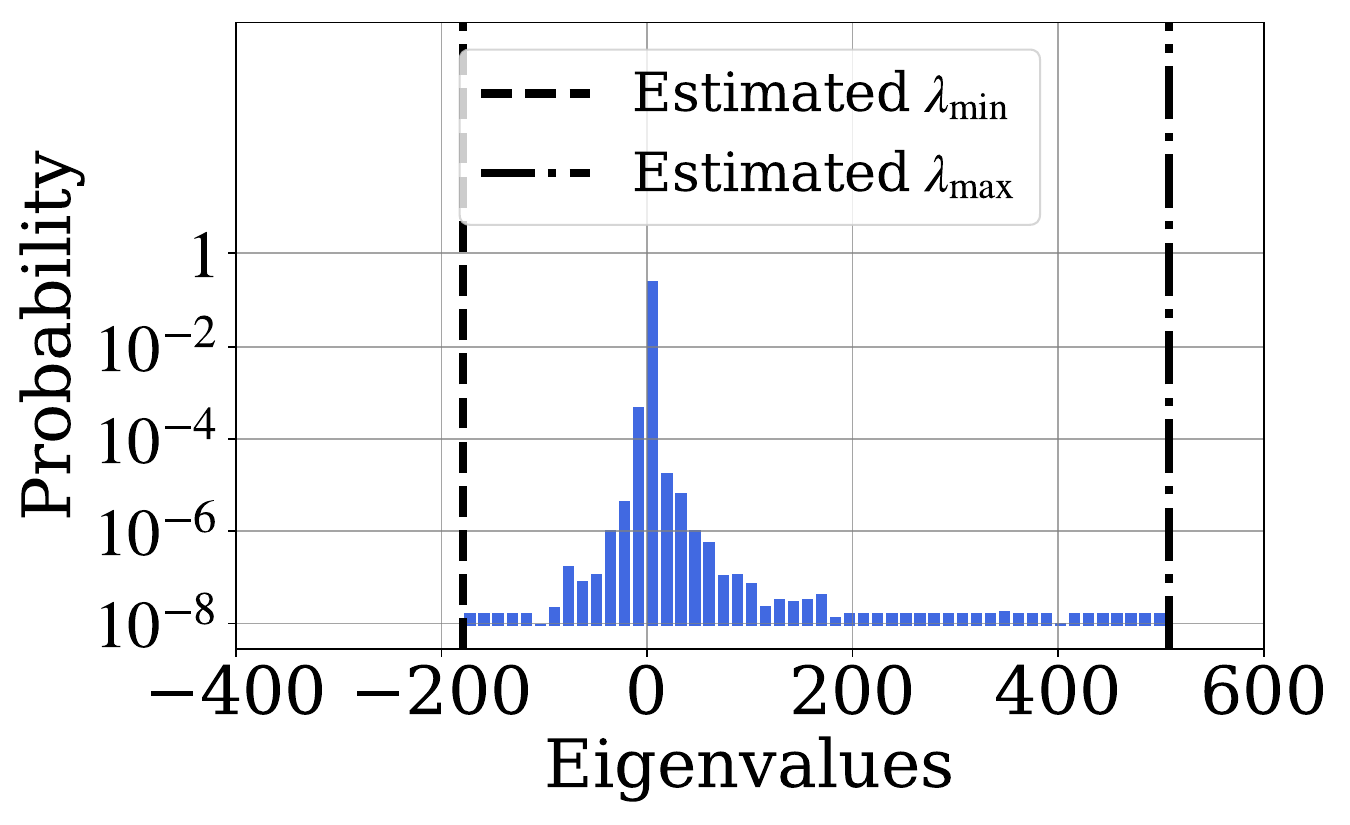}
    \subcaption{2-bit, Step 0}
  \end{subfigure}
  \begin{subfigure}[b]{0.32\textwidth}
    \includegraphics[width=\textwidth]{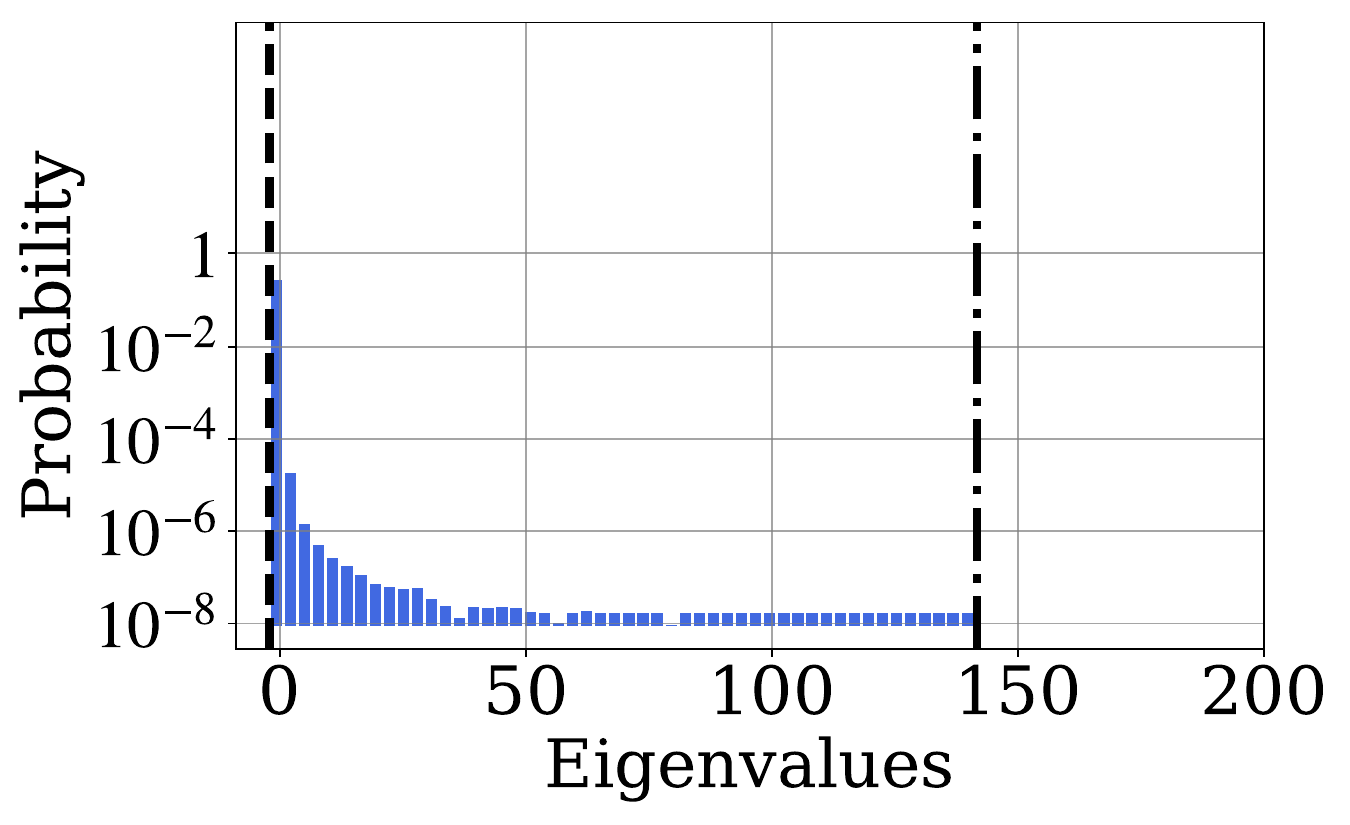}
    \subcaption{2-bit, Step 10K}
  \end{subfigure}
  \begin{subfigure}[b]{0.32\textwidth}
    \includegraphics[width=\textwidth]{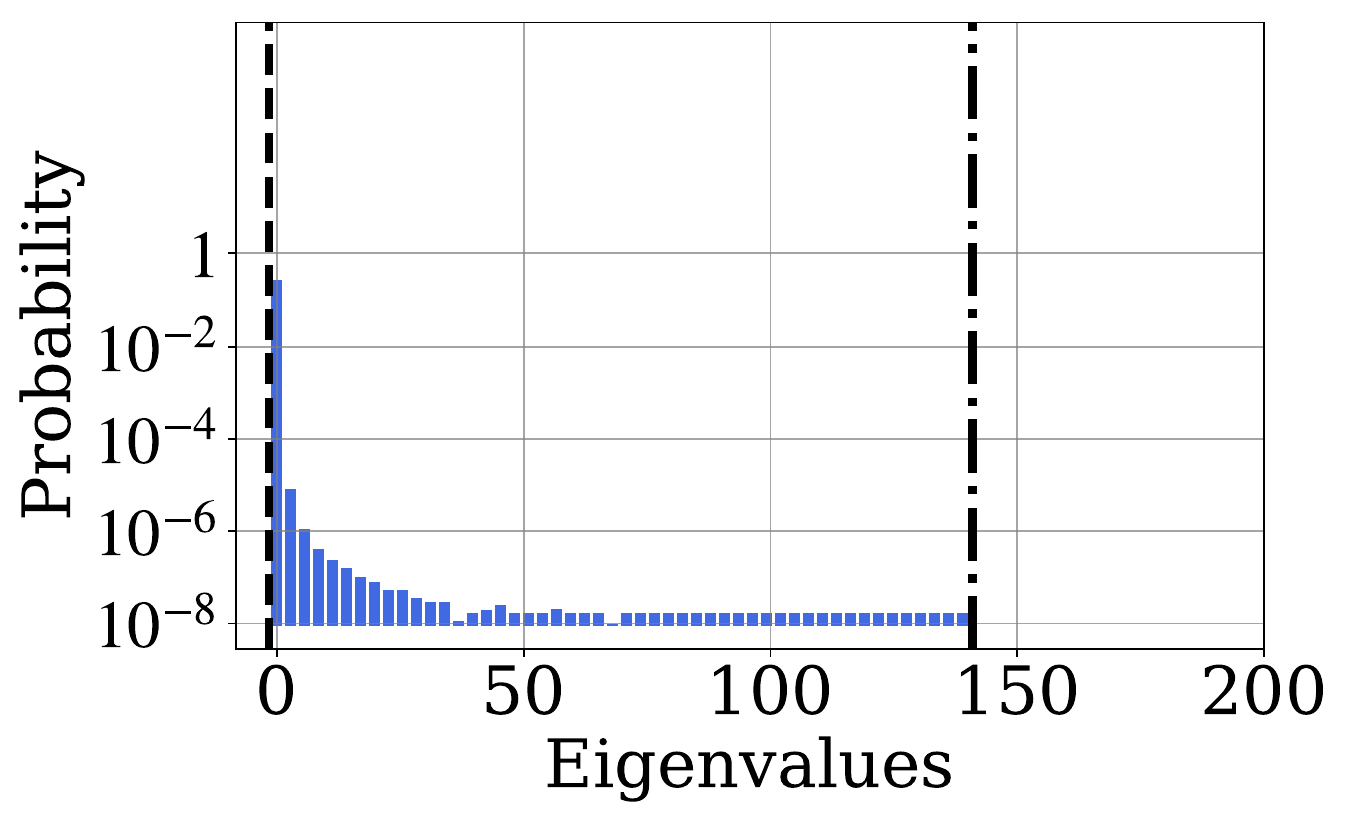}
    \subcaption{2-bit, Step 80K}
  \end{subfigure}
  \vspace{0.05in}
  \begin{subfigure}[b]{0.32\textwidth}
    \includegraphics[width=\textwidth]{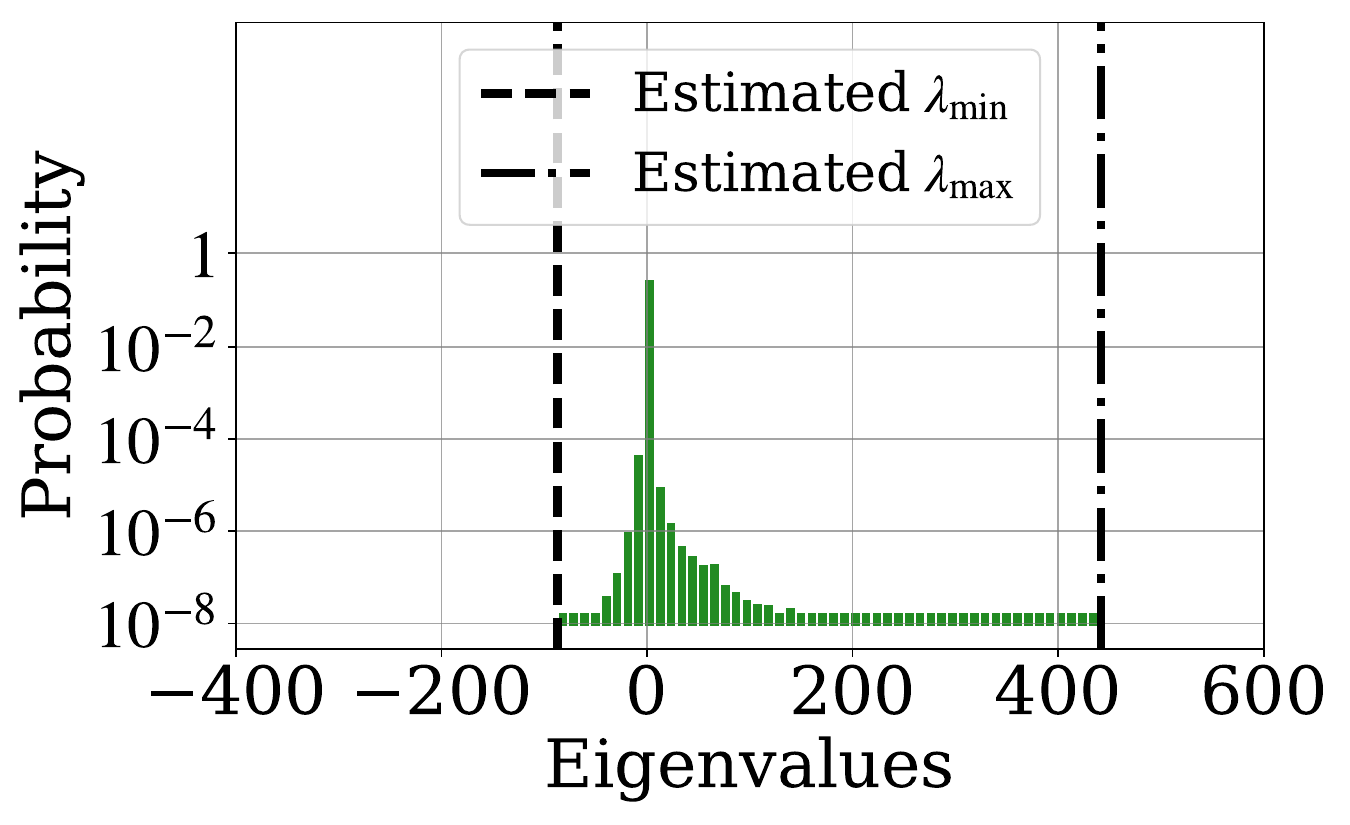}
    \subcaption{1-bit, Step 0}
  \end{subfigure}
  \begin{subfigure}[b]{0.32\textwidth}
    \includegraphics[width=\textwidth]{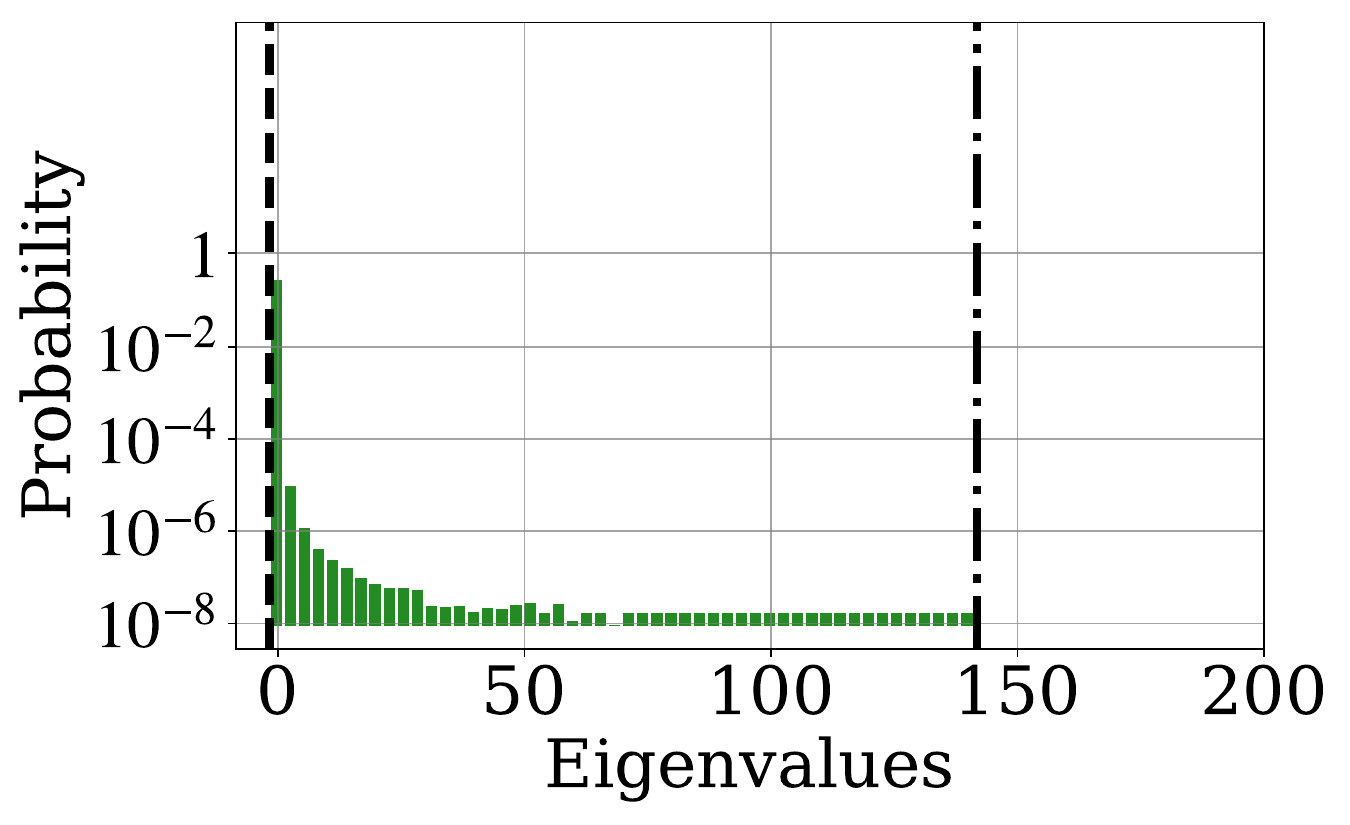}
    \subcaption{1-bit, Step 10K}
  \end{subfigure}
  \begin{subfigure}[b]{0.32\textwidth}
    \includegraphics[width=\textwidth]{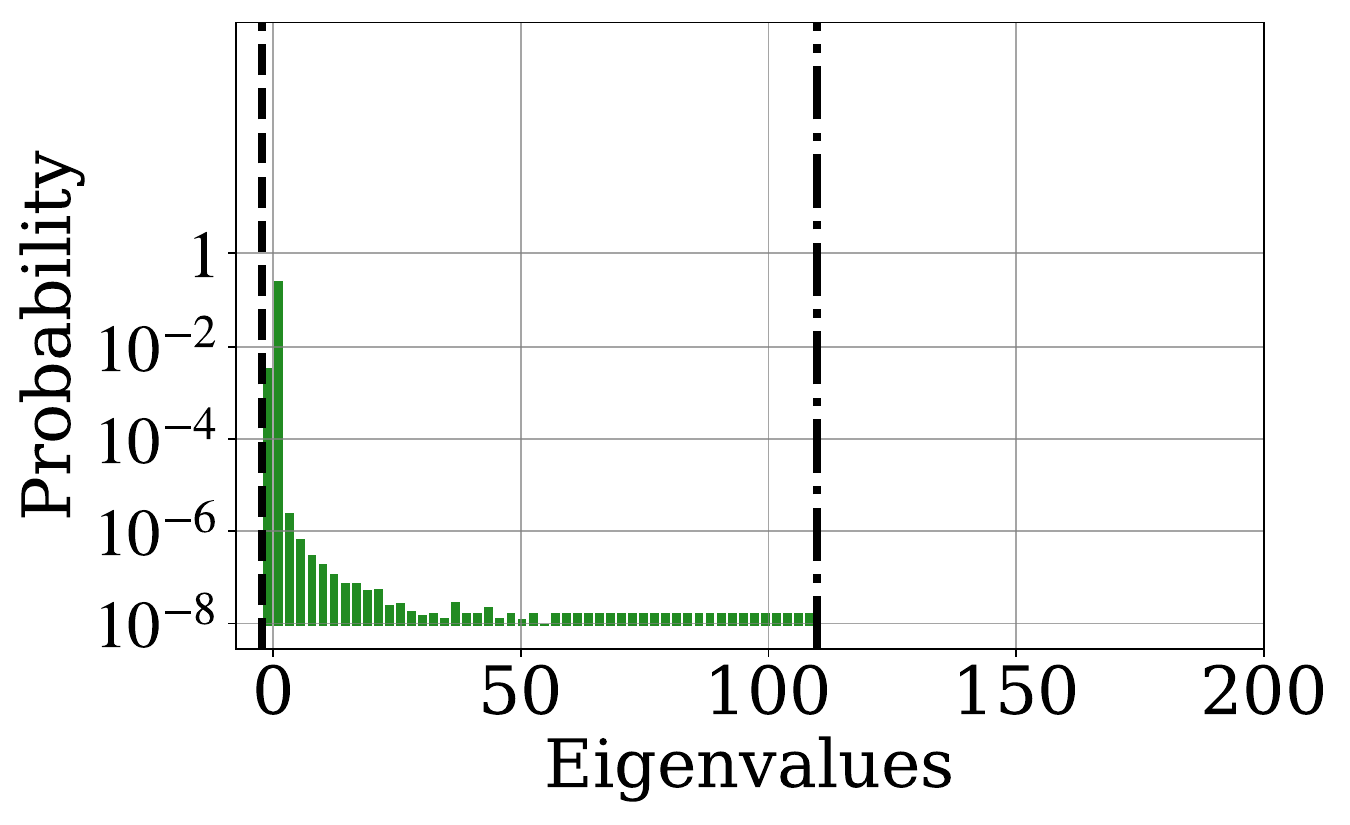}
    \subcaption{1-bit, Step 80K}
  \end{subfigure}
  \vspace{0.05in}
  \caption{We plot the estimated Hessian eigenvalue distribution of the weights in the embedding layer. We illustrate the distribution for quantization-aware training with 4-bit, 3-bit, 2-bit, and 1-bit quantization at various training steps. The $x$-axis represents the eigenvalues, and the $y$-axis shows their probability mass on a log scale.
  }
  \label{fig_spectrum_embedding_layers}
\end{figure}

Our observations are not limited to standard STE-based methods. We further evaluate other gradient estimation methods, including the rotation trick \cite{fifty2024restructuring} and QuEST \cite{panferov2025quest}. We train a 30M LLaMA-style model on SlimPajama with 1–4 bit weight quantization, following the setup in QuEST \cite{panferov2025quest}. 

Across these methods, we observe consistent results. As training progresses, Hessian eigenvalues increasingly concentrate near zero, and their magnitudes decrease substantially (from around $180$ at 1K steps to $3$ at 10K steps in 1-bit quantized training). 
Moreover, lower bit-widths exhibit smaller eigenvalue magnitudes (from $15$ in 3-bit to $3$ in 1-bit). 
These results suggest that the phenomenon is consistent in quantization-aware training. 

\revision{Recall that the magnitude of Hessian eigenvalues increases when evaluating interpolated weights between $W$ and $Q(W)$, which motivates our weight re-initialization technique for accelerating training.}

\revision{To further support this observation, we measure Hessian eigenvalue magnitudes at different interpolation points across various quantized training settings. Using Llama-1B trained with 1–4 bit quantization at 80K steps, we compute the maximum absolute Hessian eigenvalues for interpolated weights $(1-\alpha)W + \alpha Q(W)$ over $\alpha$ between 0.0 and 1.0. Results are reported in Table \ref{tab_hessian_at_interpolation}. We find that interpolation consistently increases the curvature, with the largest magnitudes typically occurring around $\alpha=0.4$.}

\begin{table}[t!]
\centering
\caption{\revision{We report the maximum absolute Hessian eigenvalues, $\max_i \|\lambda_i\|$, evaluated at interpolated weights $(1-\alpha)W + \alpha Q(W)$ for Llama-1B trained with 1–4 bit weight quantization and 16-bit activations. Across all bit-widths, curvature consistently increases with interpolation and typically peaks around $\alpha = 0.4$. We compute the Hessian eigenvalues with respect to the parameters, including the learnable quantization step sizes.}}
\label{tab_hessian_at_interpolation}
\resizebox{0.75\textwidth}{!}{
\begin{tabular}{l|cccccc}
\toprule
$\max_i \|\lambda_i\|$ 
& $\alpha=0.0$ & $\alpha=0.2$ & $\alpha=0.4$ & $\alpha=0.6$ & $\alpha=0.8$ & $\alpha=1.0$ \\
\midrule
1-bit QAT & $0.93 \pm 0.3$ & $1.20 \pm 0.1$ & ${1.72 \pm 0.3}$ & $\mathbf{1.83} \pm 0.1$ & $1.24 \pm 0.1$ & $1.13 \pm 0.1$ \\
2-bit QAT & $1.64 \pm 0.5$ & $2.45 \pm 0.4$ & $\mathbf{3.09} \pm 0.4$ & $2.65 \pm 0.5$ & $2.42 \pm 0.5$ & $2.05 \pm 0.5$ \\
3-bit QAT & $6.24 \pm 0.4$ & $6.79 \pm 0.4$ & $\mathbf{7.70} \pm 0.3$ & $6.44 \pm 0.3$ & $6.68 \pm 0.3$ & $6.38 \pm 0.2$ \\
4-bit QAT & $7.13 \pm 0.4$ & $7.31 \pm 0.4$ & $\mathbf{8.06} \pm 0.5$ & $7.61 \pm 0.5$ & $7.66 \pm 0.5$ & $7.12 \pm 0.6$ \\
\bottomrule
\end{tabular}
}
\end{table}

Further, we evaluate Hessian eigenvalues under noise injection. Using Llama-1B trained with 2-bit weight quantization at 80K steps, we measure the maximum absolute Hessian eigenvalues for noise standard deviations $\sigma$ between 0.0005, 0.001, and 0.002. 
We find that models trained with noise injection exhibit smaller negative eigenvalues (i.e., larger-magnitude curvature) than those trained without noise. In addition, noise injection produces slightly larger gradient norms. The observations explain the faster training in noise injection.

\begin{table}[t!]
\centering
\caption{\revision{We report the relative gradient norms and maximum absolute Hessian eigenvalues evaluated at interpolated weights for Llama-1B trained with 2-bit quantization and 16-bit activations under varying noise standard deviation $\sigma$. Noise injection tends to yield smaller negative Hessian eigenvalues (i.e., larger absolute values) compared to standard training.}}
\label{tab:noise_hessian}
\resizebox{\textwidth}{!}{
\begin{tabular}{l|c|c|cccccc}
\toprule
& $\hat L_{Q(W)}$ & $\norm{\nabla_W \hat L_{Q(W)}}_2/\norm{W}_2$ & \multicolumn{6}{c}{$\max_i \|\lambda_i\|$ }\\ \midrule
& & & $\alpha=0.0$ & $\alpha=0.2$ & $\alpha=0.4$ & $\alpha=0.6$ & $\alpha=0.8$ & $\alpha=1.0$ \\
\midrule
$\sigma=0$ 
& $3.18 \pm 0.29$ & $0.011 \pm 0.003$
& $1.64 \pm 0.5$ & $2.45 \pm 0.4$ & $\mathbf{3.09} \pm 0.4$ 
& $2.65 \pm 0.5$ & $2.42 \pm 0.5$ & $2.05 \pm 0.5$ \\
$\sigma=0.0005$ 
& $3.08 \pm 0.28$ & $0.012 \pm 0.002$
& $3.03 \pm 0.2$ & $3.53 \pm 0.2$ & $\mathbf{3.72} \pm 0.1$ 
& $3.50 \pm 0.2$ & $3.40 \pm 0.2$ & $3.18 \pm 0.2$ \\
$\sigma=0.001$ 
& $3.12 \pm 0.29$ & $0.013 \pm 0.002$
& $3.45 \pm 0.3$ & $3.91 \pm 0.2$ & $3.95 \pm 0.2$
& $\mathbf{3.96} \pm 0.2$ & $3.55 \pm 0.2$ & $3.32 \pm 0.7$ \\
$\sigma=0.002$ 
& $3.13 \pm 0.28$ & $0.013 \pm 0.003$
& $3.53 \pm 0.2$ & $3.94 \pm 0.1$ & $3.74 \pm 0.1$ 
& $\mathbf{3.85} \pm 0.2$ & $3.49 \pm 0.2$ & $3.53 \pm 0.2$ \\
\bottomrule
\end{tabular}
}
\end{table}

\subsection{Ablation Studies}

\revision{We conducted an ablation study jointly varying the batch size and the standard deviation $\sigma$ of the noise. We vary the batch size among 64, 32, and 16, and vary $\sigma$ from 0.0005 to 0.004. We perform the experiments using Llama-1B trained with 1-bit weight quantization and 16-bit activations for 240K steps. The results are shown in Table \ref{tab_batch_size_sigma}. 
We observe that when using smaller batch sizes (e.g., 16), smaller values of $\sigma$ tend to be better. For larger batches, using a larger $\sigma$ is better (typically around $0.001$). A batch size of 64 yields the best overall performance. Larger batches exceed the memory limits. Accordingly, we use a batch size of 64 in the paper.}

\begin{table}[t!]
\centering
\caption{\revision{We report the perplexity evaluated on the WikiText2 dataset. We train Llama-1B  with 1-bit weight quantization under different noise levels $\sigma$ and batch sizes. We observe that larger batch sizes benefit from using a larger $\sigma$ (around 0.001).}}
\label{tab_batch_size_sigma}
\resizebox{0.55\textwidth}{!}{
\begin{tabular}{c|cccc}
\toprule 
& $\sigma = 0.0005$ 
& $\sigma = 0.001$ 
& $\sigma = 0.002$ 
& $\sigma = 0.004$ \\
\midrule
Batch Size 64 & 15.8 & \textbf{15.3} & 16.3 & 18.5 \\
Batch Size 32 & 16.2 & 16.1 & 17.0 & 19.1 \\
Batch Size 16 & 16.6 & 16.7 & 18.2 & 20.5 \\
\bottomrule
\end{tabular}
}
\end{table}

\textbf{Increasing the training budget.} \revision{In our main experiments, we follow the setup of prior work \citep{liu2025paretoq} and train for 20B tokens (240K steps), using the same budget for our method. To further assess the scaling with respect to training budget, we doubled the budget to 40B tokens (480K steps) for Llama-1B with 1-, 2-, and 3-bit weights and 16-bit activations. As shown in Table~\ref{tab_ppl_40B}, our approach continues to outperform ParetoQ at this larger budget, achieving an average perplexity improvement of \textbf{7}\%.}

\begin{table}[t!]
\centering
\caption{\revision{We report the perplexity evaluated on the WikiText2 dataset, when increasing the training budget to 40B tokens. We compare \acronym{} with ParetoQ under settings of W1A16, W2A16, and W3A16. \acronym{} consistently outperforms the baseline after the increase of the training budget. W1A16 means 1-bit weights and 16-bit activations, and analogous notations for others.}}
\label{tab_ppl_40B}
\resizebox{0.7\textwidth}{!}{
\begin{tabular}{l|cc|cc|cc}
\toprule 
& \multicolumn{2}{c|}{W1A16} 
& \multicolumn{2}{c|}{W2A16} 
& \multicolumn{2}{c}{W3A16} \\
\midrule
{Training budget (in the number of tokens)} 
& 20B & 40B 
& 20B & 40B 
& 20B & 40B \\
\midrule
ParetoQ  
& 16.9 & 16.2 
& 12.5 & 12.3 
& 14.0 & 13.9 \\
\acronym{} 
& \textbf{15.3} & \textbf{14.7} 
& \textbf{11.9} & \textbf{11.8} 
& \textbf{12.9} & \textbf{12.8} \\
\bottomrule
\end{tabular}
}
\end{table}

\subsection{Omitted Results}

\revision{Our approach can be applied directly to larger architectures. To explore the scalability of model sizes, we conducted an experiment on Llama-8B, training it with 2-bit weight quantization and 16-bit activations for 150K steps. As shown in Table \ref{tab_llama8b}, our approach achieves around a relative \textbf{1}\% improvement in the average zero-shot accuracy over the state-of-the-art baseline \citep{liu2025paretoq}.}

\begin{table}[t!]
\centering
\caption{\revision{We report the results of training Llama-8B with 2-bit weight and 16-bit activation quantization.  We report the perplexity (PPL) on WikiText2 and the average zero-shot test accuracy across eight QA datasets.}}
\label{tab_llama8b}
\resizebox{\textwidth}{!}{
\begin{tabular}{c|c|ccccccccc}
\toprule
 & {PPL ($\downarrow$)} & {Avg. Accuracy ($\uparrow$)} & {ARC-e} & {ARC-c} & {BoolQ} & {PIQA} & {SIQA} & {HellaSwag} & {OBQA} & {WinoGrande} \\
\midrule
ParetoQ & 8.4 
& 64.7 & 75.9 & 50.2 & 75.0 & 78.5 & 48.5 & 72.3 & 49.6 & 68.0 \\
\acronym{} & 8.3 
& \textbf{65.3} & 76.2 & 52.0 & 76.9 & 78.4 & 48.5 & 72.7 & 50.4 & 67.7 \\
\bottomrule
\end{tabular}
}
\end{table}

\begin{table}[t!]
  \centering
  \caption{Below we report the source links for all the open-source language models and the text datasets used throughout our experiments.}
  \label{tab_sources}
  \resizebox{0.6\textwidth}{!}{
  \begin{tabular}{llcccccccccc}
    \toprule
     Model & Source\\ \midrule
     LLaMA-3-1B & \href{https://huggingface.co/meta-LLaMA/LLaMA-3.2-1B}{https://huggingface.co/meta-LLaMA/LLaMA-3.2-1B} \\
     LLaMA-3-3B & \href{https://huggingface.co/meta-LLaMA/LLaMA-3.2-3B}{https://huggingface.co/meta-LLaMA/LLaMA-3.2-3B} \\
     Qwen-3-1.7B & \href{https://huggingface.co/Qwen/Qwen3-1.7B}{https://huggingface.co/Qwen/Qwen3-1.7B} \\
     Qwen-3-0.6B & \href{https://huggingface.co/Qwen/Qwen3-0.6B}{https://huggingface.co/Qwen/Qwen3-0.6B}\\ \midrule
     Dataset & Source\\ \midrule
     FineWebEdu & \href{https://huggingface.co/datasets/HuggingFaceFW/fineweb-edu}{https://huggingface.co/datasets/HuggingFaceFW/fineweb-edu} \\
     {Wiki2} & \href{https://huggingface.co/datasets/Salesforce/wikitext}{https://huggingface.co/datasets/Salesforce/wikitext} \\
     ARC-e & \href{https://huggingface.co/datasets/mib-bench/arc_easy}{https://huggingface.co/datasets/mib-bench/arc\_easy} \\
     ARC-c & \href{https://huggingface.co/datasets/ibragim-bad/arc_challenge}{https://huggingface.co/datasets/ibragim-bad/arc\_challenge}\\
     BoolQ & \href{https://huggingface.co/datasets/google/boolq}{https://huggingface.co/datasets/google/boolq} \\
     PIQA & \href{https://huggingface.co/datasets/ybisk/piqa}{https://huggingface.co/datasets/ybisk/piqa}\\
     SIQA & \href{https://huggingface.co/datasets/lighteval/siqa}{https://huggingface.co/datasets/lighteval/siqa}\\
     HellaSwag & \href{https://huggingface.co/datasets/Rowan/hellaswag}{https://huggingface.co/datasets/Rowan/hellaswag}\\
     OBQA & \href{https://huggingface.co/datasets/allenai/openbookqa}{https://huggingface.co/datasets/allenai/openbookqa}\\
     WinoGrande & \href{https://huggingface.co/datasets/allenai/winogrande}{https://huggingface.co/datasets/allenai/winogrande} \\
    \bottomrule
  \end{tabular}
  }
\end{table}

\textbf{Low-bit activation quantization.} Our approach is broadly applicable across diverse quantization settings, including extremely low-bit activation quantization. To demonstrate this, we integrate our method with QuEST \cite{panferov2025quest} in a 1-bit weight and 1-bit activation setting. Training a 30M LLaMA-style model from scratch on SlimPajama for 2500 steps, we achieve a 5\% reduction in test perplexity and a 2\% reduction in test loss compared to the QuEST baseline.

Furthermore, we compare our approach against CAGE \cite{tabesh2025cage}, a recent work that similarly regularizes weights toward the quantization grid. While CAGE achieves this by modifying the backward update, we instead regularize the loss Hessian via weight interpolation and noise injection. Using a 4-bit weight and activation setting and training a 30M LLaMA-style model from scratch on SlimPajama for 2500 steps, our method yields an additional 1\% reduction in test perplexity over CAGE.

\textbf{Full comparison results.}
We describe the sources for the models and datasets in Table \ref{tab_sources}. 
For the weight-only quantization, we describe the results in Table \ref{tab_performance_full}. 
For weight and activation quantization on Llama models, we describe the results in Table \ref{tab_performance_full_2}. 
Regarding quantization performance for Qwen models, we present the results in Table \ref{tab_performance_qwen_full}.
For quantization performance on QuEST models, we describe the results in Table \ref{tab_performance_quest_full}.

\subsection{Computational Overhead}

\revision{The additional runtime introduced by \acronym{} is negligible relative to the base training cost. We measured wall-clock time in seconds on a machine with 26 CPUs and one A100 GPU.
Reinitialization involves only element-wise addition of two weight matrices. On Llama-1B, we perform re-initialization up to three times; This takes up to 0.3 seconds, compared to 28.7 hours for the full training run.
For noise injection, each iteration samples Gaussian noise and adds it element-wise to the weights. On Llama-1B, this costs 0.004 seconds per iteration, which is less than 1\% of the time for a forward-backward pass (0.43 seconds).}

\revision{Our approach introduces no additional memory overhead. On Llama-1B, our approach uses the same peak GPU memory as the ParetoQ baseline: 78.8 GB. This is because memory consumption is dominated by the forward and backward passes, in which our techniques do not increase memory usage.}

\subsection{Loss Landscape Visualization}

To further illustrate our re-initialization technique, we present a visual example of the loss surface in \Cref{fig_loss_landscape}. $W_t$ are located near a saddle point of the loss surface, where the curvature along two directions is close to zero, leading to stagnated training progress. In contrast, by interpolating between $W_t$ and the quantized weights $Q(W_t)$, the reinitialized weight $W_\alpha$ is reset to a region with larger Hessian eigenvalues that facilitates faster training.

\begin{figure}[!t]
    \centering
    \includegraphics[width=0.6\linewidth, trim = 80 235 70 220, clip]{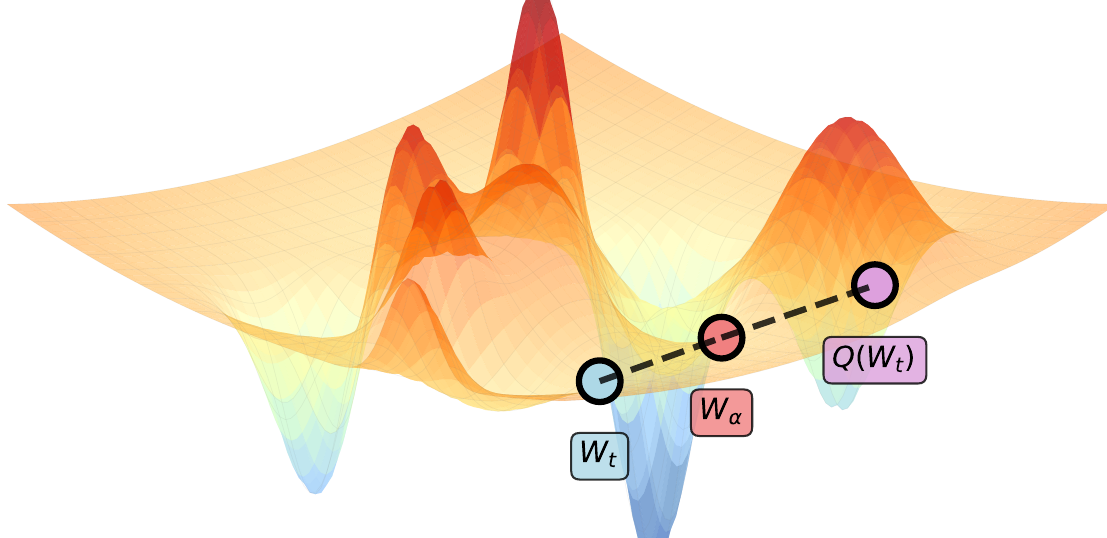}
    \caption{We illustrate an example of the loss surface with critical points that lie on a linear line. $W_t$ lies in a flat region of the surface, where the curvature is small in both directions, and training progress stalls. By interpolating between $W_t$ and its quantized counterpart $Q(W_t)$, the re-initialization moves the weight to $W_\alpha$ in a sharper region with larger Hessian eigenvalues. This transition increases the Hessian eigenvalues, thereby accelerating training.}
    \label{fig_loss_landscape}
\end{figure}

\subsection{Further Extensions} 

In this section, we extend our study to training a model that can be quantized to multiple precisions. Specifically, given $n$ quantization functions $Q_1, Q_2, \dots, Q_n$ (e.g., 1-bit to 4-bit quantization), our goal is to train a single model that jointly minimizes the loss across all $n$ functions. Let $L_{Q_i(W)}$ denote the expected loss of the model quantized by the $i$-th function. Our objective is then the average test loss across all quantization functions:
$
\frac{1}{n} \sum_{i=1}^n L_{Q_i(W)}.
$

The motivation for this setting stems from the high computational cost of quantization-aware training, which often requires retraining separately for each quantization function. A more practical alternative is to train the model once to support arbitrary bit-width quantization, enabling efficient deployment across diverse scenarios while preserving performance comparable to that of single-bit quantization-aware training. Such an approach would enable the creation of language models at multiple precisions, which could serve as flexible foundations for further fine-tuning on downstream tasks, for example, by using high-precision low-rank adapters \citep{yin2024modulora}. Moreover, combining low-precision language models with KV cache quantization (e.g., \cite{zhang2024q}) offers a promising path toward building highly efficient inference systems for state-of-the-art LLMs.

A recent work, Matryoshka Quantization \citep{nair2025matryoshka}, has explored this direction by training a single model across multiple bit-widths (2, 4, and 8 bits). Their method leverages the most significant bits of 8-bit integers to represent lower-bit integers.  Conceptually, this approach can be seen as sharing clip values across different quantization levels. While their study focuses on min–max quantization, our exploration considers a more general setting that accommodates additional bit widths and alternative methods, such as LSQ. 

Our preliminary study finds that noise injection stabilizes the training of multi-level bit-width QAT, yielding performance much closer to that of single-bit-level QAT while requiring only one round of training. Specifically, we train an LLaMA-1B model jointly at 1-, 2-, 3-, and 4-bit precision using the same computational budget as training a single-precision model. For each precision, we follow the quantization procedure in ParetoQ. 

\begin{table}[t!]
  \centering
  \caption{We report preliminary results on training a single model across multiple bit-widths (4, 3, 2, and 1 bit). Using {LLaMA-3-1B} as the base model, we follow the setup of ParetoQ. For comparison, we adopt Matryoshka Quantization \citep{nair2025matryoshka}, a recent method for multi-bit training. In these experiments, activations are fixed to 16 bits. We evaluate performance using perplexity on WikiText-2 and average zero-shot accuracy across eight QA tasks. }\label{tab_multi_bits}
  \resizebox{\textwidth}{!}{
  \begin{tabular}{lc|cccccccccc}
    \toprule
     LLaMA-3-1B  & {Wiki2} ($\downarrow$) & ARC-e &  ARC-c & BoolQ & PIQA & SIQA & HellaSwag & OBQA & WinoGrande & Avg. Accuracy ($\uparrow$)\\
     \midrule
    \multicolumn{10}{l}{Single bit-width training (ParetoQ), with {640K} training steps} \\
    \midrule
    1-bit & 16.9 & 57.3 & 36.2 & 62.4 & 69.1 & 41.9 & 48.3 & 45.1 & 55.0 & 51.9  \\
    2-bit & 12.5 & 64.8 & 41.7 & 62.8 & 73.1 & 44.0 & 56.6 & 52.0 & 58.5 & 56.7	 	 \\
    3-bit & 10.9 & 65.3 & 41.9 & 64.2 & 73.8 & 43.9 & 61.3 & 47.7 & 59.5 & 57.2  \\
    4-bit & 10.3 & 67.4 & 43.4 & 64.4 & 74.8 & 44.4 & 63.5 & 50.4 & 61.4 & 58.7 	 	 \\
    \midrule 
    \multicolumn{10}{l}{Multi-level bit-width training (Matryoshka Quantization), with {240K} training steps} \\
    \midrule
    1-bit & 44.0 & 51.5 & 34.4 & 56.1 & 64.3 & 39.9 & 38.8 & 35.9 & 51.3 & 46.5  \\
    2-bit & 20.2 & 55.6 & 35.6 & 62.4 & 66.0 & 41.4 & 44.0 & 44.9 & 54.1 & 50.5 \\
    3-bit & 13.8 & 63.0 & 39.6 & 59.8 & 71.1 & 43.6 & 54.5 & 48.4 & 56.6 & 54.6  \\
    4-bit & 12.6 & 63.3 & 40.8 & 63.3 & 72.1 & 44.2 & 56.6 & 50.0 & 59.0 & 56.2 \\
    \midrule
    \multicolumn{10}{l}{Multi-level bit-width training (Ours), with {240K} training steps} \\
    \midrule
    1-bit & 20.7 & 54.5 & 34.7 & 46.7 & 65.4 & 41.1 & 41.9 & 39.3 & 55.2 & 47.4 \\
    2-bit & 13.3 & 63.1 & 39.8 & 61.8 & 71.3 & 44.3 & 55.4 & 50.6 & 57.1 & 55.4 	 \\
    3-bit & 11.9 & 66.2 & 42.4 & 63.6 & 72.5 & 44.3 & 58.9 & 51.6 & 57.7 & 57.1  \\
    4-bit & 11.6 & 66.6 & 43.5 & 63.6 & 72.8 & 44.3 & 59.4 & 51.0 & 58.2 & 57.4 	 \\
    \bottomrule
  \end{tabular}
  }
\end{table}

\begin{table}[t!]
  \centering
  \caption{We report the complete comparison results of applying our approach with the Hadamard Transform, on top of the QuEST baseline. We evaluate {LLaMA-3-1B} with 1–2 bit weights and 4-bit activations.  We evaluate the perplexity on WikiText-2 and average zero-shot accuracy across eight QA tasks. HT refers to the Hadamard Transform.}
  \label{tab_performance_quest_full}
  \resizebox{\textwidth}{!}{
  \begin{tabular}{lc|cccccccccc}
    \toprule
     LLaMA-3-1B  & {Wiki2} ($\downarrow$) & ARC-e &  ARC-c & BoolQ & PIQA & SIQA & HellaSwag & OBQA & WinoGrande & Avg. Accuracy ($\uparrow$)\\
     \midrule
    FP Model  &  9.6 & 64.8 & 42.5& 64.8 & 74.8 & 44.8 & 64.4 & 50.2 & 61.5& 58.5  \\
    \midrule
    \textbf{W1A4} \\ \midrule
    QuEST & 42.9 & 41.0 & 26.9 & 61.4 & 59.2 & 40.0 & 30.1 & 30.7 & 50.3 & 42.4	  \\
    \acronym{} w/ HT & \textbf{42.3} & 42.1 & 28.9 & 61.9 & 58.7 & 39.6 & 30.4 & 31.8 & 50.2 & \textbf{43.0} \\
    \midrule \midrule
    \textbf{W2A4} \\ \midrule
    QuEST & 17.4 & 52.5 & 32.9 & 59.2 & 66.4 & 43.5 & 46.1 & 36.5 & 52.0 & 48.6 \\ 
    \acronym{} w/ HT & \textbf{16.9} & 52.8 & 33.1 & 62.2 & 65.5 & 42.0 & 46.3 & 38.9 & 53.4 & \textbf{49.3} \\
    \bottomrule
  \end{tabular}
  }
\end{table}

As shown in Table \ref{tab_multi_bits}, the multi-bit model achieves performance comparable to single-precision models, with only a 1.7 increase in PPL and a 1.6\% drop in test accuracy—while reducing training cost by 75\%.
For relative comparison, we evaluate against Matryoshka Quantization. At the same training cost, noise injection delivers better results, reducing PPL by 8.3 and increasing test accuracy by 2.4\%.

\begin{table}[t!]
  \centering
  \caption{We report the full comparison of {LLaMA-3-1B} performance under 1- to 4-bit weight quantization. We evaluate the perplexity on WikiText-2 and average zero-shot accuracy across eight QA tasks.}
  \label{tab_performance_full}
  \resizebox{\textwidth}{!}{
  \begin{tabular}{lc|cccccccccc}
    \toprule
     & {Wiki2} ($\downarrow$) & ARC-e &  ARC-c & BoolQ & PIQA & SIQA & HellaSwag & OBQA & WinoGrande & Avg. Accuracy ($\uparrow$)\\
     \midrule
    FP Model &  9.6 & 64.8 & 42.5& 64.8 & 74.8 & 44.8 & 64.4 & 50.2 & 61.5& 58.5\\
    \midrule
    \textbf{W1A16} \\ \midrule
    RTN & 4.2e8 &  25.0 & 22.5 & 37.6 & 49.5 & 32.9 & 25.0 & 27.1 & 49.6 & 33.7\\ 
    GPTQ & 3.3e8 & 26.9 & 21.7 & 37.6 & 51.8 & 33.5 & 25.5 & 14.8 & 49.7 & 32.7\\
    SpinQuant & 2.4e8 & 25.0 & 22.5 & 37.6 & 49.5 & 32.9 & 25.0 & 27.1 & 49.6 & 33.7 \\
    ParetoQ & 16.9 & 57.3 & 36.2 & 62.4 & 69.1 & 41.9 & 48.3 & 45.1 & 55.0 & 51.9 \\ \midrule
    \acronym{} & \textbf{15.3} & 59.7 & 37.0 & 61.3 & 69.5 & 42.6 & 49.8 & 46.1 & 54.4 & \textbf{52.6}	 \\
    \midrule \midrule
    \textbf{W1.58A16} \\ \midrule
    RTN & 1.8e6 & 24.5 & 22.6 & 62.4 & 52.7 & 33.4 & 25.4 & 18.2 & 50.2 & 36.2  \\ 
    GPTQ & 4.6e4 & 25.1 & 22.5 & 38.0 & 53.1 & 32.7 & 25.7 & 15.6 & 49.5 & 32.8 \\
    SpinQuant & 2.2e3 & 25.0 & 21.5 & 37.6 & 51.9 & 33.4 & 25.3 & 17.2 & 49.1 & 32.6 \\
    ParetoQ & 14.0 & 64.1 & 38.7 & 60.3 & 71.8 & 43.6 & 55.1 & 46.3 & 58.1 & 54.8 \\ \midrule
    \acronym{} & \textbf{12.9} & 65.0 & 39.7 & 62.1 & 72.9 & 44.3 & 56.2 & 47.1 & 57.4 & \textbf{55.6}	 \\
    \midrule \midrule
    \textbf{W2A16} \\ \midrule
    RTN & 1.5e6 &  26.5 & 26.8 & 62.2 & 51.0 & 36.8 & 25.9 & 28.5 & 50.2 & 38.5\\ 
    GPTQ & 3.3e2 & 29.3 & 27.6 & 37.8 & 51.5 & 38.6 & 26.5 & 32.0 & 50.8 & 36.8 \\
    AWQ & 2.0e5 & 27.4 & 26.0 & 48.9 & 50.2 & 37.0 & 25.7 & 24.4 & 51.5 & 36.4 \\
    SpinQuant & 46.7 & 25.6 & 24.6 & 62.4 & 51.6 & 36.1 & 25.8 & 29.1 & 50.8 & 38.3 \\
    ParetoQ & 12.5 & 64.8 & 41.7 & 62.8 & 73.1 & 44.0 & 56.6 & 52.0 & 58.5 & \textbf{56.7} \\ \midrule
    \acronym{} & \textbf{11.9} & 65.0 & 42.5 & 62.5 & 73.8 & 43.2 & 58.4 & 48.1 & 59.1 & {56.6} \\ \midrule \midrule
    \textbf{W3A16} \\ \midrule
    RTN & 30.9 & 28.9& 25.0& 55.9& 53.5& 37.8& 30.1& 28.9& 50.6& 38.8 \\ 
    GPTQ & 68.6 & 37.4 & 27.3 & 43.1 & 58.4 & 39.2 & 37.1 & 32.4 & 53.8 & 41.1  \\
    AWQ & 1.5e2 & 41.5 & 26.7 & 49.2 & 58.0 & 41.4 & 34.9 & 31.8 & 52.8 & 42.0  \\
    SpinQuant & 12.6 & 56.9 & 34.9 & 61.0 & 69.3 & 42.0 & 53.4 & 41.2 & 56.2 & 51.9  \\
    ParetoQ & \textbf{10.9} & 65.3 & 41.9 & 64.2 & 73.8 & 43.9 & 61.3 & 47.7 & 59.5 & 57.2 \\ \midrule
    \acronym{} & \textbf{10.9} & 65.9 & 43.2 & 63.9 & 74.4 & 44.8 & 61.2 & 49.0 & 60.2 & \textbf{57.8}	 \\ \midrule \midrule
    \textbf{W4A16} \\ \midrule
    RTN & 13.9 & 55.7 & 36.3 & 61.9 & 70.4 & 43.0 & 56.9 & 39.3 & 55.5 & 52.4  \\ 
    GPTQ & 13.4 & 55.2 & 38.8 & 57.9 & 70.5 & 43.5 & 55.4 & 43.2 & 58.0 & 52.8 \\
    AWQ & 12.2 & 63.4 & 40.0 & 63.5 & 73.4 & 44.5 & 60.5 & 45.8 & 60.3 & 56.4   \\
    SpinQuant & 10.3 & 62.2 & 40.3 & 64.1 & 72.3 & 44.0 & 61.6 & 47.9 & 59.8 & 56.5   \\
    ParetoQ & 10.3 & 67.4 & 43.4 & 64.4 & 74.8 & 44.4 & 63.5 & 50.4 & 61.4 & \textbf{58.7}  \\ \midrule
    \acronym{} & \textbf{10.2} & 67.4 & 43.8 & 65.1 & 75.0 & 43.5 & 63.2 & 48.6 & 62.1 & 58.6	 \\
    \bottomrule
  \end{tabular}
  }
\end{table}

\begin{table}[t]
  \centering
  \caption{We report the complete comparison results of {LLaMA-3-1B} and {LLaMA-3-3B} with 1–2 bit weights and 8-bit activations. We evaluate the perplexity on WikiText-2 and average zero-shot accuracy across eight QA tasks. }
  \label{tab_performance_full_2}
  \resizebox{\textwidth}{!}{
  \begin{tabular}{lc|cccccccccc}
    \toprule
     LLaMA-3-1B  & {Wiki2} ($\downarrow$) & ARC-e &  ARC-c & BoolQ & PIQA & SIQA & HellaSwag & OBQA & WinoGrande & Avg. Accuracy ($\uparrow$)\\
     \midrule
    FP Model &  9.6 & 64.8 & 42.5& 64.8 & 74.8 & 44.8 & 64.4 & 50.2 & 61.5& 58.5\\ \midrule\midrule
    \textbf{W1A8} \\ \midrule
    RTN & 4.7e8 & 25.0 & 22.5 & 37.6 & 49.5 & 32.9 & 25.0 & 27.1 & 49.6 & 33.7 \\ 
    GPTQ & 3.8e8 & 26.9 & 21.7 & 37.6 & 51.8 & 33.5 & 25.5 & 14.8 & 49.7 & 32.7 \\
    SpinQuant & 3.4e8 & 27.2 & 21.1 & 37.6 & 52.6 & 33.5 & 25.6 & 14.3 & 50.5 & 32.8 \\
    ParetoQ & 23.3 & 55.4 & 31.4 & 61.2 & 66.5 & 40.5 & 38.2 & 39.8 & 52.5 & 48.2 \\ \midrule
    \acronym{} & \textbf{21.9} & 56.0 & 33.9 & 61.6 & 66.4 & 41.6 & 38.8 & 41.4 & 52.3 & \textbf{49.0} \\
    \midrule \midrule
    \textbf{W1.58A8} \\ \midrule
    RTN & 1.8e6 & 24.5 & 22.3 & 62.4 & 52.7 & 33.4 & 25.4 & 18.4 & 50.2 & 36.2 \\ 
    GPTQ & 7.5e4 & 24.8 & 22.2 & 38.0 & 52.2 & 32.5 & 25.4 & 17.0 & 49.4 & 32.7  \\
    SpinQuant & 5.8e3 & 25.3 & 22.5 & 37.6 & 51.6 & 33.3 & 25.3 & 17.6 & 48.5 & 32.7 \\
    ParetoQ & 18.2 & 59.6 & 36.4 & 60.8 & 69.8 & 43.2 & 47.6 & 43.6 & 54.3 & 51.9 \\ \midrule
    \acronym{} & \textbf{16.9} & 59.1 & 37.0 & 60.8 & 69.7 & 43.2 & 48.9 & 44.9 & 56.2 & \textbf{52.5} \\
    \midrule \midrule
    \textbf{W2A8} \\ \midrule
    RTN & 1.5e6 & 24.5 & 23.1 & 62.4 & 52.3 & 33.6 & 25.4 & 17.6 & 50.3 & 36.1 \\ 
    GPTQ & 3.8e4 & 26.9 & 21.7 & 37.6 & 51.8 & 33.5 & 25.5 & 14.8 & 49.7 & 32.7  \\
    SpinQuant & 3.8e2 & 31.0 & 20.1 & 45.5 & 54.6 & 33.8 & 26.7 & 16.2 & 50.9 & 34.9 \\
    ParetoQ & 16.9 & 59.5 & 37.1 & 61.9 & 69.5 & 42.5 & 48.1 & 44.5 & 54.5 & 52.2 \\ \midrule
    \acronym{} & \textbf{16.3}  & 60.98 & 37.66 & 61.39 & 70.42 & 42.99 & 48.83 & 47.27 & 54.69 & \textbf{53.0} \\
    \midrule \midrule
    LLaMA-3-3B & {Wiki2} ($\downarrow$) & ARC-e &  ARC-c & BoolQ & PIQA & SIQA & HellaSwag & OBQA & WinoGrande & Avg. Accuracy ($\uparrow$)\\
     \midrule
    FP Model & 7.7 & 72.6 & 50.7 & 74.6 & 78.2 & 48.5 & 74.3 & 53.7 & 69.2 & 65.2 \\ \midrule
    \textbf{W1A8} \\ \midrule
    RTN & 7.3e7 & 25.0 & 22.5 & 37.6 & 49.5 & 32.9 & 25.0 & 27.1 & 49.6 & 33.7\\ 
    GPTQ & 5.9e7  & 26.6 & 21.9 & 37.6 & 52.5 & 33.7 & 25.6 & 15.0 & 49.4 & 32.8  \\
    SpinQuant & 4.5e7 & 27.0 & 22.0 & 37.6 & 52.0 & 33.4 & 25.5 & 14.5 & 50.2 & 32.8 \\
    ParetoQ & 15.7 & 63.1 & 39.1 & 63.0 & 70.9 & 42.6 & 50.6 & 46.7 & 56.5 & 54.1 \\ \midrule
    \acronym{} & \textbf{14.8} & 64.2 & 40.5 & 63.6 & 71.1 & 42.8 & 52.7 & 49.4 & 57.3 & \textbf{55.2} \\ \midrule \midrule
    \textbf{W1.58A8} \\ \midrule
    RTN & 7.9e5 & 24.7 & 23.4 & 37.6 & 52.9 & 33.8 & 25.5 & 17.4 & 50.3 & 33.2 \\ 
    GPTQ & 2.7e5 & 25.3 & 22.9 & 37.6 & 53.7 & 34.4 & 25.3 & 15.6 & 49.5 & 33.1  \\
    SpinQuant & 3.1e3 & 26.6 & 23.4 & 39.1 & 52.7 & 34.7 & 25.3 & 16.4 & 48.5 & 33.3 \\
    ParetoQ & 13.1 & 67.9 & 42.0 & 53.9 & 72.0 & 43.9 & 58.2 & 49.6 & 60.2 & 56.0 \\ \midrule
    \acronym{} & \textbf{12.2} & 68.0 & 43.0 & 62.8 & 73.0 & 44.9 & 59.8 & 50.2 & 61.6 & \textbf{57.9} \\
    \bottomrule
  \end{tabular}
  }
\end{table}

\begin{table}[t!]
  \centering
  \caption{We summarize the hyperparameters used for each setting. A batch size of $8 \times 8$ indicates 8 GPUs with a per-GPU batch size of 8. HT refers to the Hadamard Transform.}\label{tab_hyperparams}
  \resizebox{\textwidth}{!}{
  \begin{tabular}{lccccccccccc}
    \toprule
    Model & Weight & Activation & Learning rate & Batch size & Re-intialization interval $K$ & Interpolation scalar $\alpha$ & Standard deviation $\sigma$ \\ \midrule
     LLaMA-3-1B  & 4-bit & 16-bit & $1\times10^{-5}$ & $8 \times 8$ & $40$K & $0.2$ & $0.0002$ \\
     LLaMA-3-1B  &  3-bit & 16-bit & $1\times10^{-5}$ & $8 \times 8$ & $40$K & $0.2$ & $0.0002$ \\
     LLaMA-3-1B  & 2-bit & 16-bit & $2\times10^{-5}$ & $8 \times 8$ & $80$K & $0.2$ & $0.001$ \\
     LLaMA-3-1B  & 1.58-bit & 16-bit & $2\times10^{-5}$ & $8 \times 8$ & $80$K & $0.2$ & $0.001$ \\ 
     LLaMA-3-1B  & 1-bit & 16-bit & $2\times10^{-5}$ & $8 \times 8$ & $60$K & $0.4$ & $0.001$ \\ \midrule
     LLaMA-3-1B  & 2-bit & 8-bit & $4\times10^{-5}$ & $8 \times 8$ & $60$K & $0.2$ & $0.001$ \\
     LLaMA-3-1B  & 1.58-bit & 8-bit & $4\times10^{-5}$ & $8 \times 8$ & $80$K & $0.2$ & $0.001$ \\ 
     LLaMA-3-1B  & 1-bit & 8-bit & $4\times10^{-5}$ & $8 \times 8$ & $60$K & $0.4$ & $0.001$ \\
     LLaMA-3-3B  & 1.58-bit & 8-bit & $4\times10^{-5}$ & $8 \times 8$ & $60$K & $0.2$ & $0.001$ \\ 
     LLaMA-3-3B  & 1-bit & 8-bit & $4\times10^{-5}$ & $8 \times 8$ & $60$K & $0.2$ & $0.001$ \\ \midrule
     Qwen-3-1.7B  & 2-bit & 8-bit & $2\times10^{-5}$ & $8 \times 8$ & $80$K & $0.1$ & $0.0002$ \\
     Qwen-3-1.7B  & 1-bit & 8-bit & $2\times10^{-5}$ & $8 \times 8$ & $80$K & $0.1$ & $0.0002$ \\
     Qwen-3-0.6B  & 2-bit & 8-bit & $2\times10^{-5}$ & $8 \times 8$ & $80$K & $0.1$ & $0.0002$ \\ 
     Qwen-3-0.6B  & 1-bit & 8-bit & $2\times10^{-5}$ & $8 \times 8$ & $80$K & $0.1$ & $0.0002$ \\ \midrule
     LLaMA-3-1B w/ HT  & 2-bit & 4-bit & $4\times10^{-5}$ & $8 \times 8$ & $80$K & $0.2$ & $0.0002$ \\ 
     LLaMA-3-1B w/ HT  & 1-bit & 4-bit & $4\times10^{-5}$ & $8 \times 8$ & $80$K & $0.1$ & $0.0002$ \\
     LLaMA-3-1B w/ HT & 2-bit & 4-bit & $4\times10^{-5}$ & $8 \times 8$ & $80$K & $0.2$ & $0.0002$ \\ 
     LLaMA-3-1B w/ HT & 1-bit & 4-bit & $4\times10^{-5}$ & $8 \times 8$ & $80$K & $0.1$ & $0.0002$ \\ 
    \bottomrule
  \end{tabular}
  }
\end{table}

\begin{table}[h]
  \centering
  \caption{We report the complete comparison results of {Qwen-3-1.7B} and {Qwen-3-0.6B} with 1–2 bit weights and 8-bit activations. We evaluate the perplexity on WikiText-2 and average zero-shot accuracy across eight QA tasks.}
  \label{tab_performance_qwen_full}
  \resizebox{\textwidth}{!}{
  \begin{tabular}{lc|cccccccccc}
    \toprule
     Qwen-3-1.7B  & {Wiki2} ($\downarrow$) & ARC-e &  ARC-c & BoolQ & PIQA & SIQA & HellaSwag & OBQA & WinoGrande & Avg. Accuracy ($\uparrow$)\\
     \midrule
    FP Model  &  16.2 & 68.9 & 41.0 & 78.9 & 71.7 & 45.1 & 59.6 & 38.5 & 61.6 & 58.2  \\
    \midrule
    \textbf{W1A8} \\ \midrule
    ParetoQ & 46.5 & 42.0 & 26.2 & 61.4 & 59.8 & 40.0 & 29.1 & 28.9 & 50.8 & 42.3  \\
    \acronym{} & \textbf{45.9} & 42.4 & 26.4 & 61.4 & 59.6 & 40.3 & 29.0 & 30.1 & 49.7 & \textbf{42.4}	 	 \\
    \midrule \midrule
    \textbf{W2A8} \\ \midrule
    ParetoQ & 22.2 & 52.9 & 32.6 & 62.4 & 65.1 & 42.7 & 41.2 & 32.8 & 53.1 & 47.8 \\ 
    \acronym{} & \textbf{21.8} & 53.1 & 33.9 & 62.7 & 64.7 & 42.3 & 41.0 & 35.0 & 53.1 & \textbf{48.2} \\
    \midrule \midrule
    Qwen-3-0.6B  & {Wiki2} ($\downarrow$) & ARC-e &  ARC-c & BoolQ & PIQA & SIQA & HellaSwag & OBQA & WinoGrande & Avg. Accuracy ($\uparrow$)\\
     \midrule
    FP Model & 53.6 & 35.3 & 65.6 & 67.4 & 42.6 & 46.5 & 36.1 & 56.6 & 50.5	 & 20.1 \\
    \midrule
    \textbf{W1A8} \\ \midrule
    ParetoQ & 64.0 & 37.3 & 24.2 & 61.5 & 56.6 & 39.5 & 27.3 & 33.0 & 50.6 & 41.2 \\
    \acronym{}  & \textbf{61.9 }& 38.4 & 24.7 & 62.1 & 56.7 & 39.8 & 27.6 & 31.8 & 50.2 & \textbf{41.4}	 \\
    \midrule \midrule
    \textbf{W2A8} \\ \midrule
    ParetoQ & \textbf{32.0} & 44.9 & 28.5 & 54.5 & 60.4 & 40.7 & 32.7 & 33.2 & 51.6 & 43.3 \\ 
    \acronym{} & 32.1 & 44.7 & 28.3 & 60.2 & 60.0 & 40.7 & 32.5 & 33.4 & 51.1 & \textbf{43.9} \\
    \bottomrule
  \end{tabular}
  }
\end{table}